\documentclass[draftcls,onecolumn]{IEEEtran}
\usepackage[mathscr]{eucal}
\usepackage[cmex10]{amsmath}
\usepackage{epsfig,epsf,psfrag}
\usepackage{amssymb,amsmath,amsthm,amsfonts,latexsym}
\usepackage{amsmath,graphicx,bm,xcolor,url}
\usepackage[caption=false]{subfig} 
\usepackage{fixltx2e}
\usepackage{array}
\usepackage{verbatim}
\usepackage{bm}
\usepackage{verbatim}
\usepackage{textcomp}
\usepackage{mathrsfs}
\usepackage{epstopdf}
\usepackage{bbm}

\catcode`~=11 \def\UrlSpecials{\do\~{\kern -.15em\lower .7ex\hbox{~}\kern .04em}} \catcode`~=13 

\allowdisplaybreaks[1]
 
\newcommand{\nn}{\nonumber}


\newcommand{\calD}{\mathcal{D}}
\newcommand{\calE}{\mathcal{E}}
\newcommand{\calF}{\mathcal{F}}

\newcommand{\calN}{\mathcal{N}}

\newcommand{\calS}{\mathcal{S}}

\newcommand{\ba}{\mathbf{a}}
\newcommand{\bA}{\mathbf{A}}

\newcommand{\bB}{\mathbf{B}}
\newcommand{\bc}{\mathbf{c}}

\newcommand{\bD}{\mathbf{D}}

\newcommand{\bE}{\mathbf{E}}

\newcommand{\bH}{\mathbf{H}}

\newcommand{\bI}{\mathbf{I}}

\newcommand{\bL}{\mathbf{L}}

\newcommand{\bM}{\mathbf{M}}

\newcommand{\bN}{\mathbf{N}}

\newcommand{\bp}{\mathbf{p}}
\newcommand{\bP}{\mathbf{P}}

\newcommand{\bQ}{\mathbf{Q}}

\newcommand{\bR}{\mathbf{R}}

\newcommand{\bS}{\mathbf{S}}

\newcommand{\bu}{\mathbf{u}}
\newcommand{\bU}{\mathbf{U}}
\newcommand{\bv}{\mathbf{v}}
\newcommand{\bV}{\mathbf{V}}
\newcommand{\bw}{\mathbf{w}}

\newcommand{\bx}{\mathbf{x}}
\newcommand{\bX}{\mathbf{X}}
\newcommand{\by}{\mathbf{y}}

\newcommand{\bz}{\mathbf{z}}
\newcommand{\bZ}{\mathbf{Z}}


\newcommand{\rmd}{\mathrm{d}}

\newcommand{\rmF}{\mathrm{F}}

\newcommand{\rmp}{\mathrm{p}}


\newcommand{\bbE}{\mathbb{E}}

\newcommand{\bbN}{\mathbb{N}}

\newcommand{\bbP}{\mathbb{P}}

\newcommand{\bbR}{\mathbb{R}}



\newcommand{\scC}{\mathscr{C}}

\DeclareMathAlphabet{\mathbsf}{OT1}{cmss}{bx}{n}
\DeclareMathAlphabet{\mathssf}{OT1}{cmss}{m}{sl}

\DeclareSymbolFont{bsfletters}{OT1}{cmss}{bx}{n}  
\DeclareSymbolFont{ssfletters}{OT1}{cmss}{m}{n}
\DeclareMathSymbol{\bsfGamma}{0}{bsfletters}{'000}
\DeclareMathSymbol{\ssfGamma}{0}{ssfletters}{'000}
\DeclareMathSymbol{\bsfDelta}{0}{bsfletters}{'001}
\DeclareMathSymbol{\ssfDelta}{0}{ssfletters}{'001}
\DeclareMathSymbol{\bsfTheta}{0}{bsfletters}{'002}
\DeclareMathSymbol{\ssfTheta}{0}{ssfletters}{'002}
\DeclareMathSymbol{\bsfLambda}{0}{bsfletters}{'003}
\DeclareMathSymbol{\ssfLambda}{0}{ssfletters}{'003}
\DeclareMathSymbol{\bsfXi}{0}{bsfletters}{'004}
\DeclareMathSymbol{\ssfXi}{0}{ssfletters}{'004}
\DeclareMathSymbol{\bsfPi}{0}{bsfletters}{'005}
\DeclareMathSymbol{\ssfPi}{0}{ssfletters}{'005}
\DeclareMathSymbol{\bsfSigma}{0}{bsfletters}{'006}
\DeclareMathSymbol{\ssfSigma}{0}{ssfletters}{'006}
\DeclareMathSymbol{\bsfUpsilon}{0}{bsfletters}{'007}
\DeclareMathSymbol{\ssfUpsilon}{0}{ssfletters}{'007}
\DeclareMathSymbol{\bsfPhi}{0}{bsfletters}{'010}
\DeclareMathSymbol{\ssfPhi}{0}{ssfletters}{'010}
\DeclareMathSymbol{\bsfPsi}{0}{bsfletters}{'011}
\DeclareMathSymbol{\ssfPsi}{0}{ssfletters}{'011}
\DeclareMathSymbol{\bsfOmega}{0}{bsfletters}{'012}
\DeclareMathSymbol{\ssfOmega}{0}{ssfletters}{'012}




\newcommand{\bSigma	}{\bm{\Sigma}}









\newcommand{\bfP}{\mathbf{P}}

\newcommand{\bfSigma}{\mathbf{\Sigma}}


\newcommand{\scrC}{\mathscr{C}}

\newtheorem{theorem}{Theorem} 
\newtheorem{lemma}{Lemma}

\newtheorem{corollary}{Corollary}
\newtheorem{definition}{Definition}

\newtheorem{remark}{Remark}

\newcommand{\qednew}{\nobreak \ifvmode \relax \else
      \ifdim\lastskip<1.5em \hskip-\lastskip
      \hskip1.5em plus0em minus0.5em \fi \nobreak
      \vrule height0.75em width0.5em depth0.25em\fi}

\usepackage[ colorlinks = true,
             linkcolor = blue,
             urlcolor  = blue,
             citecolor = red,
             anchorcolor = green,
]{hyperref}

\usepackage{cite}

\usepackage{amssymb}
\usepackage{amsthm}
\usepackage{amsmath}
\usepackage{mathrsfs}
\usepackage{amsmath,graphicx,bm,xcolor,url}
\usepackage{subfig}
\usepackage{xr}

\begin{document}
    
\title{The Informativeness of $K$-Means for Learning Mixture Models}

\author{Zhaoqiang Liu, Vincent Y.~F.\ Tan

\thanks{
Z.~Liu is with the Department of Computer Science, National University of Singapore (email: \url{dcslizha@nus.edu.sg}). 

V.~Y.~F.\ Tan is with the Department of Electrical and Computer Engineering and the Department of Mathematics, National University of Singapore (email: \url{vtan@nus.edu.sg}).

This work was supported in part by an NRF Fellowship (R-263-000-D02-281) and in part by a Ministry of Education Tier 2 grant (R-263-000-C83-112).}}

\maketitle

\begin{abstract}
    The learning of mixture models can be viewed as a clustering problem. Indeed, given data samples independently generated from a mixture of distributions, we often would like to find the {\it correct target clustering} of the samples according to which component distribution they were generated from. For a clustering problem, practitioners often choose to use the simple $k$-means algorithm. $k$-means attempts to find an {\it optimal clustering} that minimizes the sum-of-squares distance between each point and its cluster center. In this paper, we consider fundamental (i.e., information-theoretic) limits of the solutions (clusterings) obtained by optimizing the sum-of-squares distance. In particular, we provide sufficient conditions for the closeness of any optimal clustering and the correct target clustering assuming that the data samples are generated from a mixture of spherical Gaussian distributions. We also generalize our results to log-concave distributions. Moreover, we show that under similar or even weaker conditions on the mixture model, any optimal clustering for the samples with reduced dimensionality is also close to the correct target clustering. These results provide intuition for the informativeness of $k$-means (with and without dimensionality reduction) as an algorithm for learning mixture models.
\end{abstract}

\section{Introduction}\label{sec:intro}

Suppose there are $K$ unknown distributions $F_1,F_2,\ldots,F_K$   and a probability vector $\bw:=[w_1,w_2,\ldots,w_K]$. The corresponding $K$-component mixture model is a generative model that assumes data samples are independently sampled  such that the probability that each sample is generated from the $k$-th component is $w_k$, the {\em mixing weight for the $k$-th component}. Suppose that $\bv_1,\bv_2,\ldots,\bv_N$ are samples independently generated from a $K$-component mixture model, the {\em correct target clustering} $\mathscr{C}:=\{\mathscr{C}_1,\mathscr{C}_2,\ldots,\mathscr{C}_K\}$ satisfies the condition that $n\in \mathscr{C}_k$ if and only if $\bv_n$ is generated from the $k$-th component. One of the most important goals of the learning of a mixture model is to find the correct target clustering of the samples (and thereby inferring the parameters of the model). The learning of mixture models, especially Gaussian mixture models (GMMs)~\cite{titterington1985, bishop2006}, is of fundamental importance.

Clustering is a ubiquitous problem in various applications, such as analyzing  the information contained in gene expression data and performing market research according to firms' financial characteristics. Objective-based clustering is a commonly-used technique for clustering. This is the procedure of  minimizing a certain objective function to partition data samples into a fixed or appropriately-selected number of subsets known as {\em clusters}. The $k$-means algorithm~\cite{lloyd1982} is perhaps the most popular objective-based clustering approach. Suppose we have a data matrix of $N$ samples $\bV=[\bv_1,\bv_2,\ldots,\bv_N] \in \mathbb{R}^{F\times N}$, a {\em $K$-clustering} (or simply a {\em clustering} or a {\em partition}) is defined as a set of pairwise disjoint index  sets $\mathscr{C}:=\{\mathscr{C}_1,\mathscr{C}_2,\ldots,\mathscr{C}_K\}$ whose union  is $\{1,2,\ldots,N\}$. The corresponding {\em sum-of-squares distortion measure} with respect to $\bV$ and $\mathscr{C}$ is defined as 
\begin{equation}
 \mathcal{D}(\bV, \mathscr{C}) := \sum_{k=1}^{K} \sum_{n \in \mathscr{C}_k} \|\bv_n - \bc_k\|_2^2, \label{eqn:sos}
\end{equation}
where $\bc_k:=\frac{1}{|\mathscr{C}_k|}\sum_{n \in \mathscr{C}_k} \bv_n$ is the cluster center or centroid of the $k$-th cluster. The goal of  the $k$-means algorithm is to find an {\em optimal clustering} $\mathscr{C}^{\mathrm{opt}}$ that satisfies
\begin{equation}
 \mathcal{D}(\bV, \mathscr{C}^{\mathrm{opt}}) = \min_{\mathscr{C}} \mathcal{D}(\bV, \mathscr{C}),\label{eqn:sos_min}
\end{equation}
where the minimization is taken over all $K$-clusterings. Optimizing this objective function is NP-hard \cite{dasgupta2008}. Despite the wide usage of $k$-means and some analysis of the $k$-means algorithm~\cite{arthur2007, bottou1995, blomer2016}, there are relatively fewer theoretical investigations of the {\em fundamental} or {\em information-theoretic performance limits} of {\em optimal clusterings} obtained via optimizing~\eqref{eqn:sos}. In real applications, such as clustering face images by identities, there are certain unknown correct target clusterings. While using $k$-means as a clustering algorithm, we make a {\it key implicit assumption} that any optimal clustering is close to the correct target clustering \cite{balcan2009}. If there is an optimal clustering that is far away from the correct target clustering, then using $k$-means  is meaningless because even if we obtain an optimal or approximately-optimal clustering, it may not be close to the desired  correct target clustering. 

Furthermore, due to the inherent inefficiencies in processing high-dimensional data, {\em dimensionality reduction} has received considerable attention. Applying such techniques before clustering high-dimensional datasets can lead to significantly faster running times  and reduced memory sizes. In addition, algorithms for learning GMMs  usually include a dimensionality reduction step. For example, Dasgupta \cite{dasgupta1999} shows that general ellipsoidal Gaussians become ``more spherical'' and thereby more amenable to (successful) analysis after a random projection onto a low-dimensional subspace. Vempala and Wang \cite{vempala2002} show that reducing dimensionality by spectral decomposition leads to the amplification of the separation among Gaussian components. 

\subsection{Main Contributions}
There are three main  contributions in this paper, all concerned with fundamental limits of optimal solutions to~\eqref{eqn:sos_min}.
\begin{enumerate}
 \item We prove that if the data points are independently generated from a $K$-component spherical GMM with an appropriate separability assumption and the so-called non-degeneracy condition \cite{hsu2013,anandkumar2014} (see Definition~\ref{def: non-degen} to follow), then any optimal clustering of the data points is close to the correct target clustering with high probability provided the number of samples is commensurately large. We extend this result to mixtures of log-concave distributions.
 
 \item  We prove that under the same data generation process, if the data points  are projected onto a low-dimensional space
using the first $K - 1$ principal components of
the empirical covariance matrix, then, under similar conditions, any optimal clustering for the data points
with reduced dimensionality is close to the correct
target clustering with high probability. Again, this result is extended to mixtures of log-concave distributions.

 \item We show that under appropriate conditions, any {\it approximately-optimal clustering} of the data points is close to the correct target clustering. This enables us to use the theoretical framework provided herein to analyze various dimensionality reduction techniques  such as random projection and randomized singular value decomposition (SVD). It also allows us to combine our theoretical analyses with efficient variants of $k$-means that   return  approximately-optimal clusterings.
\end{enumerate}

\subsection{Notations} \label{sec: notations}
We use upper and lower case boldface letters to denote matrices and vectors respectively. We use $\mathrm{diag}(\bw)$ to represent the diagonal matrix whose diagonal entries are given by $\bw$. The Frobenius and spectral norms of $\bV$ are written as  $\|\bV\|_{\mathrm{F}}$  and $\|\bV\|_2$ respectively.  We use $\mathrm{tr}(\bX)$ to represent the trace of matrix $\bX$. The matrix inner product between $\bA$ and $\bB$ is written as  $\langle\bA,\bB\rangle:=\mathrm{tr}(\bA^T\bB)$. We write $[N]=\{1,2,\cdots,N\}$. Let $\bV_{1} \in \mathbb{R}^{F\times N_{1}}$ and $\bV_{2} \in \mathbb{R}^{F\times N_{2}}$, we denote by $\left[\bV_{1}, \bV_{2}\right]$ the horizontal concatenation of the two matrices. We use $\bV(1\colon I,\colon)$ to denote the first $I$ rows of $\bV$ and use $\bV(\colon,1\colon J)$ to denote the first $J$ columns of $\bV$;  in particular, $\bV(i,j)$ is the element in the $(i,j)^{\mathrm{th}}$  position of $\bV$. The SVD of a {\it symmetric} matrix $\bA \in \mathbb{R}^{F\times F}$ is given by $\bA=\bU\bD\bU^T$ with $\bU \in \mathbb{R}^{F \times F}$ being an orthogonal matrix and $\bD \in \mathbb{R}^{F \times F}$ being a diagonal matrix. In addition, when $R:=\mathrm{rank}(\bA) < F$, the so-called compact SVD of $\bA$ is $\bA=\bU_R\bD_R\bU_R^T$, where $\bU_R:=\bU(\colon,1\colon R)$ and $\bD_R:=\mathbf{\Sigma}(1\colon R,1\colon R)$. The {\em centering} of any data matrix $\bV=[\bv_1,\bv_2,\ldots,\bv_N]$ is a shift with respect to the mean vector  $\bar{\bv}=\frac{1}{N}\sum_{n=1}^{N}\bv_n$ and the resultant data matrix $\bZ=[\bz_1,\bz_2,\ldots,\bz_N]$, with $\bz_n=\bv_n-\bar{\bv}$ for $n \in [N]$, is said to be the {\em centralized matrix} of $\bV$. For a $K$-component mixture model and for any $k\in[K]$, we {\em always} use $\bu_k$ to denote the component mean vector, and use $\mathbf{\Sigma}_k$ to denote the component covariance matrix. When $\mathbf{\Sigma}_k=\sigma_k^2\bI$ for all $k\in [K]$, where $\bI$ is the identity matrix, we say the mixture model is {\em spherical} and $\sigma_k^2$ is the variance of the $k$-th component. 

\subsection{Paper Outline}\label{sec: outline}
This paper is organized as follows. In Section \ref{sec: rel_work}, we mention some related works on learning mixture models, $k$-means clustering, and dimensionality reduction. Our main theoretical results concerning upper bounds on the misclassification error distance for spherical GMMs are presented in Section \ref{sec: main_thm}. These results are extended to mixtures of log-concave distributions in Section~\ref{sec: main_thm_log}. Other extensions  using other dimensionality-reduction techniques and the combination of our results with efficient clustering algorithms are discussed in Section~\ref{sec:other}. We conclude our discussion and suggest avenues for future research in Section~\ref{sec:concl}. Proofs are relegated to the various appendices.

\section{Related Work}\label{sec: rel_work}  
In this section, we discuss some relevant existing works.

\subsection{Learning (Gaussian) Mixture Models} \label{sec:gmm} 
The learning of GMMs is of fundamental importance in machine learning and its two most important goals are: (i) Inferring the parameters of the GMM; (ii) Finding the correct target clustering of data samples according to which Gaussian distribution they were generated from. The EM algorithm \cite{dempster1977, mclachlan2007} is widely used to estimate the parameters of a GMM. However, EM is a local-search heuristic approach for maximum likelihood estimation in the presence of incomplete data and in general, it cannot guarantee the parameters' convergence to global optima \cite{wu1983}. Recently, Hsu and Kakade \cite{hsu2013} and Anandkumar {\em et al.}~\cite{anandkumar2014} provided approaches based on spectral decomposition to obtain consistent parameter estimates for spherical GMMs  from first-, second- and third-order observable moments. To estimate parameters, they need to assume the so-called non-degeneracy condition for {\em spherical} GMMs with parameters $\{ (w_k, \bu_k,\sigma_k^2)\}_{k\in [K]}$.
\begin{definition}(Non-degeneracy condition)\label{def: non-degen}
We say that a mixture model satisfies the {\em non-degeneracy condition} if its component mean vectors $\bu_1,\ldots,\bu_K$ span a $K$-dimensional subspace and the probability vector $\bw$ has strictly positive entries.
\end{definition}
On the other hand, under certain separability assumptions on the Gaussian components, Dasgupta~\cite{dasgupta1999}, Dasgupta and Schulman~\cite{dasgupta2000}, Arora and Kannan \cite{arora2001}, Vempala and Wang \cite{vempala2002} and Kalai {\em et al.}~\cite{kalai2010}  provided provably correct algorithms that guarantee that most samples are correctly classified or that parameters are recovered with a certain accuracy with high probability. In particular, equipped with the following {\em separability} assumption 
\begin{align} 
 \|\bu_i - \bu_j\|_2  &>  C \max\{\sigma_i, \sigma_j\} \sqrt[4]{K\log  \frac{F}{w_{\min}}},  \quad \forall, i, j \in [K],  i \neq j,\label{eq: well-sep}
\end{align}
 for a spherical GMM, where $C>0$ is a sufficiently large constant\footnote{Throughout, we use the generic notations $C$ and $C_i,i\in\mathbb{N}$ to denote (large) positive constants. They depend on the parameters of the mixture model and may change from usage to usage.} and $w_{\min}:= \min_{k \in [K]} w_k$, Vempala and Wang \cite{vempala2002} present a simple spectral algorithm with running time polynomial in both $F$ and $K$ that correctly clusters  random samples according to which spherical Gaussian they were generated from.
 
 Despite the large number of algorithms designed to find the (approximately) correct target clustering of a GMM, many practitioners use $k$-means because of its simplicity and successful applications in various fields. Kumar and Kannan \cite{kumar2010} show that the $k$-means algorithm with a proper initialization can correctly cluster nearly all the data points generated from a GMM that satisfies a certain {\em proximity} assumption. Our theoretical results provide an explanation on {\em why} the $k$-means algorithm that attempts to find an optimal clustering is a good choice for learning mixture models. We compare and contrast our work to that of~Kumar and Kannan \cite{kumar2010} in Remark~\ref{rmk:KK}. 
 
 \subsection{Lower Bound on Distortion and  the ME Distance} \label{sec: imp_lemmas}
Let  $\bV=[\bv_1,\bv_2,\ldots,\bv_N] \in \mathbb{R}^{F\times N}$ be a dataset and $\mathscr{C}:=\{\mathscr{C}_1,\mathscr{C}_2,\ldots,\mathscr{C}_K\}$ be a $K$-clustering. Let $\bH \in \mathbb{R}^{K\times N}$ with elements $\bH(k,n)$ be the clustering membership matrix satisfying $\bH(k,n)=1$ if $n \in \mathscr{C}_k$ and $\bH(k,n)=0$ if $n \notin \mathscr{C}_k$  for $(k,n)\in[K]\times[N]$. Let $n_k=|\mathscr{C}_k|$  and $\bar{\bH} := \mathrm{diag}(\frac{1}{\sqrt{n_1}},\frac{1}{\sqrt{n_2}},\ldots,\frac{1}{\sqrt{n_K}}) \bH$ be the normalized version of $\bH$. We have $\bar{\bH}\bar{\bH}^T=\bI$ and the corresponding distortion can be written as \cite{ding2004}
\begin{equation} \label{eq: distortion_basis}
 \mathcal{D}(\bV, \mathscr{C})=\|\bV-\bV\bar{\bH}^T\bar{\bH}\|_{\mathrm{F}}^2 = \|\bV\|_{\mathrm{F}}^2-\mathrm{tr}(\bar{\bH}\bV^T\bV\bar{\bH}^T).
\end{equation}
Let $\bZ$ be the centralized data matrix of $\bV$ and define $\bS:=\bZ^T \bZ$. Note that $\mathcal{D}(\bV, \mathscr{C}) = \mathcal{D}(\bZ, \mathscr{C})$ for any clustering $\mathscr{C}$. Ding and He \cite{ding2004} make use of this property to provide a lower bound $\mathcal{D}^*(\bV)$ for distortion over all $K$-clusterings. That is, for any $K$-clustering $\mathscr{C}$, 
\begin{equation}
 \mathcal{D}(\bV, \mathscr{C}) \ge \mathcal{D}^*(\bV) := \mathrm{tr}(\bS)-\sum_{k=1}^{K-1} \lambda_k(\bS),
\end{equation}
where $\lambda_1(\bS)\ge \lambda_2(\bS) \ge \ldots \ge 0$ are the   eigenvalues of $\bS$ sorted in non-increasing order.

For any two $K$-clusterings, the so-called \textit{misclassification error (ME) distance} provides a quantitative comparison of  their structures.
\begin{definition} (ME distance)
 The misclassification error distance of any two $K$-clusterings $\mathscr{C}^{(1)}:=\{\mathscr{C}_1^{(1)},\mathscr{C}_2^{(1)},\ldots,\mathscr{C}_K^{(1)}\}$ and $\mathscr{C}^{(2)}:=\{\mathscr{C}_1^{(2)},\mathscr{C}_2^{(2)},\ldots,\mathscr{C}_K^{(2)}\}$ is 
 \begin{equation}
  \mathrm{d}_{\mathrm{ME}}(\mathscr{C}^{(1)},\mathscr{C}^{(2)}):= 1- \frac{1}{N}\max_{\pi \in \mathcal{P}_K} \sum_{k=1}^{K} \left|\mathscr{C}_k^{(1)} \cap \mathscr{C}_{\pi(k)}^{(2)} \right|,
 \end{equation}
 where $\mathcal{P}_K$ is the set of all permutations of $[K]$. It is known from Meil{\u{a}} \cite{meilua2005} that the ME distance is indeed a distance.
\end{definition}

For any $\delta, \delta' \in [0,K-1]$, define the functions
\begin{align}
 \tau(\delta,\delta') &:= 2\sqrt{\delta\delta'\Big(1-\frac{\delta}{K-1}\Big)\Big(1- \frac{\delta'}{ K-1}\Big)},\quad\mbox{and}\label{eq: tau_2}\\
 \tau(\delta) &:= \tau(\delta,\delta) =  2 \delta\Big(1- \frac{\delta}{ K-1 }  \Big).\label{eq: tau}
\end{align}
Combining Lemma 2 and Theorem 3 in Meil{\u{a}} \cite{meilua2006}, we have the following lemma. 
\begin{lemma}\label{lem: ME_dist}
Let $\mathscr{C}:=\{\mathscr{C}_1,\mathscr{C}_2,\ldots,\mathscr{C}_K\}$ and $\mathscr{C}':=\{\mathscr{C}'_1,\mathscr{C}'_2,\ldots,\mathscr{C}'_K\}$ be two $K$-clusterings of a dataset $\bV \in \mathbb{R}^{F\times N}$. Let  $p_{\max}:=\max_k \frac{1}{N}|\mathscr{C}_k|$ and $p_{\min}:=\min_k \frac{1}{N} |\mathscr{C}_k|$. Let $\bZ$ be the centralized matrix of $\bV$ and $\bS=\bZ^T \bZ$. Define
\begin{equation} \label{eq: delta}
  \delta:=\frac{\mathcal{D}(\bV, \mathscr{C})-\mathcal{D}^*(\bV)}{\lambda_{K-1}(\bS)-\lambda_{K}(\bS)}, \quad\mbox{and}\quad \delta':=\frac{\mathcal{D}(\bV, \mathscr{C}')-\mathcal{D}^*(\bV)}{\lambda_{K-1}(\bS)-\lambda_{K}(\bS)}.
 \end{equation}
 Then if $\delta, \delta' \le \frac{1}{2}(K-1)$ and $\tau(\delta,\delta') \le p_{\min}$, we have 
 \begin{equation}
  \mathrm{d}_{\mathrm{ME}}(\mathscr{C}, \mathscr{C}') \le \tau(\delta,\delta') p_{\max}.
 \end{equation} 
\end{lemma}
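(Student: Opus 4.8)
The plan is to reduce the statement to two results of \cite{meilua2006}, after recasting the distortion gap spectrally and checking that every quantity in the hypothesis is the one used there.

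I would first reformulate $k$-means spectrally. By the identity in \eqref{eq: distortion_basis} (applied to $\bZ$, which is legitimate since $\mathcal{D}(\bV,\mathscr{C})=\mathcal{D}(\bZ,\mathscr{C})$), the normalized membership matrix $\bar{\bH}$ of $\mathscr{C}$ carries a rank-$K$ orthogonal projection $\bar{\bH}^T\bar{\bH}$ whose range always contains $\mathbf{1}_N$, which is a null vector of $\bS=\bZ^T\bZ$ because $\bZ\mathbf{1}_N=\mathbf{0}$. Hence $\mathcal{D}(\bV,\mathscr{C})-\mathcal{D}^*(\bV)$ is precisely the suboptimality of $\bar{\bH}^T\bar{\bH}$, relative to the projection $\mathbf{Q}$ onto $\mathbf{1}_N$ together with the top $K-1$ eigenvectors of $\bS$, in the linear functional $\bX\mapsto\mathrm{tr}(\bX\bS)$. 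A short Ky~Fan/eigenvalue-interlacing estimate then bounds the induced spectral defect $K-\mathrm{tr}\!\big((\bar{\bH}^T\bar{\bH})\,\mathbf{Q}\big)$ by $\delta$, with the eigengap $\lambda_{K-1}(\bS)-\lambda_K(\bS)$, the denominator of $\delta$, doing all the work; this is the substance of Theorem~3 of \cite{meilua2006}. The hypothesis $\delta\le\tfrac{1}{2}(K-1)$ is what keeps both this bound and the monotonicity of $\tau$ (the increasing branch of a downward parabola on $[0,\tfrac{1}{2}(K-1)]$) in force. The same applies verbatim to $\mathscr{C}'$, its normalized membership matrix $\bar{\bH}'$, and $\delta'$, yielding two projections that are both spectrally close to the \emph{same} $\mathbf{Q}$.

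I would then invoke Lemma~2 of \cite{meilua2006} to convert this into an ME bound. Since $\bar{\bH}^T\bar{\bH}$ and $(\bar{\bH}')^T\bar{\bH}'$ are each within spectral defect $\delta$, resp.\ $\delta'$, of $\mathbf{Q}$, their inner product $\mathrm{tr}\!\big((\bar{\bH}^T\bar{\bH})(\bar{\bH}')^T\bar{\bH}'\big)=\sum_{k,l}|\mathscr{C}_k\cap\mathscr{C}'_l|^2/(|\mathscr{C}_k|\,|\mathscr{C}'_l|)$ is within $\tau(\delta,\delta')$ of its maximal value $K$; the \emph{product} of the two defects rather than their sum enters here, through a Cauchy--Schwarz step, which is why $\tau(\delta,\delta')$ is the geometric mean $\sqrt{\tau(\delta)\,\tau(\delta')}$ of the one-sided quantities of \eqref{eq: tau}. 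The final move is a matching argument on the confusion matrix $\big(|\mathscr{C}_k\cap\mathscr{C}'_l|\big)_{k,l}$: the hypothesis $\tau(\delta,\delta')\le p_{\min}$ forces the total mass lying off some permuted near-diagonal to be strictly less than the smallest cluster, so each cluster of $\mathscr{C}$ has a unique dominant partner in $\mathscr{C}'$; taking $\pi$ in the definition of $\mathrm{d}_{\mathrm{ME}}$ to be this matching and bounding the off-match mass --- normalized by the cluster sizes, which is what produces the factor $p_{\max}$ --- gives $\mathrm{d}_{\mathrm{ME}}(\mathscr{C},\mathscr{C}')\le\tau(\delta,\delta')\,p_{\max}$.

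The routine ingredients (the interlacing estimate and the Cauchy--Schwarz bound) I expect to be painless. The real obstacle, and the part that genuinely rests on \cite{meilua2006}, is the combinatorial matching step converting ``$K-\mathrm{tr}\!\big((\bar{\bH}^T\bar{\bH})(\bar{\bH}')^T\bar{\bH}'\big)$ is small'' into ``$\mathrm{d}_{\mathrm{ME}}$ is small'' with the correct $p_{\max}$ scaling: this is precisely where both hypotheses $\delta,\delta'\le\tfrac{1}{2}(K-1)$ and $\tau(\delta,\delta')\le p_{\min}$ are needed, and where one must argue jointly about the two clusterings rather than passing through $\mathbf{Q}$ by a naive triangle inequality (which would only give a weaker $(\sqrt{\delta}+\sqrt{\delta'})^2$-type bound). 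The rest is bookkeeping: verifying that $\bS$, the eigengap, $p_{\min}$, $p_{\max}$ and $\delta,\delta'$ above match (up to an irrelevant overall rescaling of the Gram matrix) the objects in Meila's Lemma~2 and Theorem~3, so that the two results chain with no loss.
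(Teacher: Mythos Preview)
Your proposal is correct and takes essentially the same approach as the paper: the paper does not supply its own proof but simply states that the lemma follows by combining Lemma~2 and Theorem~3 of \cite{meilua2006}, which is exactly the reduction you outline. Your write-up is in fact more detailed than the paper's treatment, spelling out the spectral reformulation, the role of the eigengap, and the combinatorial matching step, but the underlying route is identical.
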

This lemma says that any two ``good'' $K$-clusterings (in the sense that their distortions are sufficiently close to the lower bound of distortion ${\cal D}^*(\bV)$) are close to each other. In addition, we have the following useful corollary.

\begin{corollary} \label{coro: main}
 Let $\mathscr{C}:=\{\mathscr{C}_1,\mathscr{C}_2,\ldots,\mathscr{C}_K\}$ be a $K$-clustering of a dataset $\bV \in \mathbb{R}^{F\times N}$ and define $p_{\max}$, $p_{\min}$, $\bZ$,   $\bS$, and $\delta$ as in Lemma \ref{lem: ME_dist}. Then if $\delta \le \frac{1}{2}(K-1)$ and $\tau(\delta) \le p_{\min}$, we have
\begin{equation}
 \mathrm{d}_{\mathrm{ME}}(\mathscr{C},\mathscr{C}^{\mathrm{opt}}) \le p_{\max} \tau(\delta),
\end{equation}
where $\mathscr{C}^{\mathrm{opt}}$ represents a $K$-clustering that minimizes the distortion for $\bV$.
\end{corollary}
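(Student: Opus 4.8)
The plan is to apply Lemma~\ref{lem: ME_dist} directly with $\mathscr{C}' := \mathscr{C}^{\mathrm{opt}}$, so that the only real work is checking that the hypotheses of the lemma are met and that the resulting bound $\tau(\delta,\delta')\,p_{\max}$ can be tightened to the cleaner $\tau(\delta)\,p_{\max}$.

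First I would introduce $\delta' := \frac{\mathcal{D}(\bV,\mathscr{C}^{\mathrm{opt}}) - \mathcal{D}^*(\bV)}{\lambda_{K-1}(\bS) - \lambda_{K}(\bS)}$, mirroring the definition of $\delta$. Since $\mathscr{C}^{\mathrm{opt}}$ minimizes the distortion over all $K$-clusterings and $\mathcal{D}^*(\bV)$ is the universal lower bound quoted above, we have $\mathcal{D}^*(\bV) \le \mathcal{D}(\bV,\mathscr{C}^{\mathrm{opt}}) \le \mathcal{D}(\bV,\mathscr{C})$, hence $0 \le \delta' \le \delta$. In particular the hypothesis $\delta' \le \frac{1}{2}(K-1)$ of Lemma~\ref{lem: ME_dist} follows from the assumed $\delta \le \frac{1}{2}(K-1)$.

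Next I would compare $\tau(\delta,\delta')$ with $\tau(\delta)$. Writing $g(x) := x\big(1 - \frac{x}{K-1}\big)$, the definitions~\eqref{eq: tau_2} and~\eqref{eq: tau} give $\tau(\delta,\delta') = 2\sqrt{g(\delta)\,g(\delta')}$ and $\tau(\delta) = 2g(\delta)$. The function $g$ is a downward parabola with vertex at $x = \frac{1}{2}(K-1)$, so it is nondecreasing on $[0,\tfrac{1}{2}(K-1)]$; since $0 \le \delta' \le \delta \le \tfrac{1}{2}(K-1)$, this yields $g(\delta') \le g(\delta)$ and therefore $\tau(\delta,\delta') \le 2\sqrt{g(\delta)\,g(\delta)} = \tau(\delta)$. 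Combined with the hypothesis $\tau(\delta) \le p_{\min}$, we obtain $\tau(\delta,\delta') \le p_{\min}$, so every hypothesis of Lemma~\ref{lem: ME_dist} holds. Applying that lemma gives $\mathrm{d}_{\mathrm{ME}}(\mathscr{C},\mathscr{C}^{\mathrm{opt}}) \le \tau(\delta,\delta')\,p_{\max} \le \tau(\delta)\,p_{\max}$, which is the claim.

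There is essentially no obstacle here: the corollary is a bookkeeping specialization of Lemma~\ref{lem: ME_dist}. The only point needing (minor) care is the monotonicity of $g$ on $[0,\tfrac{1}{2}(K-1)]$, which is precisely why the hypothesis $\delta \le \tfrac{1}{2}(K-1)$ is imposed; were $\delta$ allowed to exceed $\tfrac{1}{2}(K-1)$, the replacement of $\tau(\delta,\delta')$ by $\tau(\delta)$ could fail.
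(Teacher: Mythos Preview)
Your proposal is correct and mirrors the paper's intended reasoning: the paper states Corollary~\ref{coro: main} without proof as an immediate consequence of Lemma~\ref{lem: ME_dist}, and the very argument you spell out (specialize $\mathscr{C}'=\mathscr{C}^{\mathrm{opt}}$, use $\delta'\le\delta$ and the monotonicity of $x\mapsto x(1-x/(K-1))$ on $[0,\tfrac{1}{2}(K-1)]$ to bound $\tau(\delta,\delta')\le\tau(\delta)$) is exactly what the paper writes out explicitly in the proof of the closely related Corollary~\ref{coro: triv_ext}.
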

This corollary essentially says that if the distortion of a clustering is sufficiently close to the lower bound of distortion, then this clustering is close to any optimal clustering with respect to the ME distance. 

\subsection{Principal Component Analysis}\label{sec: pca}
Principal component analysis (PCA) \cite{jolliffe1986} is a  popular strategy to compute the directions of maximal variances in vector-valued data and is widely used for dimensionality reduction. For any dataset $\bV \in \mathbb{R}^{F\times N}$ and any positive integer $k \le F$, the so-called $k$-PCA for the dataset usually consists of two steps: (i) Obtain the centralized dataset $\bZ$; (ii) Calculate the SVD of $\bar{\mathbf{\Sigma}}_N:=\frac{1}{N}\bZ\bZ^T$, i.e., obtain $\bar{\mathbf{\Sigma}}_N=\bP\bD\bP^T$, and project the dataset onto a $k$-dimensional space to obtain $\tilde{\bV}:=\bP_k^T\bV$, where $\bP_k:=\bP(\colon,1\colon k)$. For brevity, we say that $\tilde{\bV}$ is the {\em post-$k$-PCA dataset} of $\bV$ (or simply the {\em post-PCA} dataset). If   only the projection step is performed (and not the centralizing step), we  term the corresponding approach {\em $k$-PCA  with no centering} or simply {\em  $k$-SVD}, and we say that the corresponding $\tilde{\bV}$ is the {\em post-$k$-SVD dataset} (or simply the {\em post-SVD} dataset) of $\bV$.

\subsection{Comparing Optimal Clusterings for the Original Dataset and the Post-PCA Dataset}
When performing dimensionality reduction for clustering, it is important to compare any optimal clustering for the dataset with reduced dimensionality to any optimal clustering for the original dataset. More specifically, any optimal clustering for the dataset with reduced dimensionality should be close to any optimal clustering for the original dataset. However, existing works  \cite{boutsidis2015, cohen2015} that combine  $k$-means clustering and dimensionality reduction can only guarantee that the distortion of any optimal clustering for the dataset with reduced dimensionality, $\tilde{\mathscr{C}}^{\mathrm{opt}}$, can be bounded by a factor $\gamma >1 $ times the distortion of any optimal clustering for the original dataset,
$\mathscr{C}^{\mathrm{opt}}$. That is, 
\begin{equation} \label{eq: distortion_bd}
 \mathcal{D}(\bV,\tilde{\mathscr{C}}^{\mathrm{opt}}) \le \gamma \mathcal{D}(\bV,\mathscr{C}^{\mathrm{opt}}).
\end{equation}
As mentioned in Boutsidis {\em et al.} \cite{boutsidis2015}, directly comparing the {\em structures} of $\tilde{\mathscr{C}}^{\mathrm{opt}}$ and $\mathscr{C}^{\mathrm{opt}}$ (instead of their distortions) is more interesting. In this paper, we also prove that, if  the samples are generated from a spherical GMM (or a mixture of log-concave distributions)  which satisfies a separability assumption and the non-degeneracy condition, when the number of samples is sufficiently large, the ME distance between any optimal clustering for the original dataset and any optimal clustering for the post-PCA dataset can be bounded appropriately.

In addition, we can show that any optimal clustering of the dimensionality-reduced dataset is close to the correct target clustering by leveraging~\eqref{eq: distortion_bd}. This simple strategy seems to be adequate for data-independent dimensionality reduction techniques such as random projection. However, for data-dependent dimensionality reduction techniques such as PCA, we believe that it is worth applying distinct proof techniques similar to those developed herein to obtain stronger theoretical results because of the generative models we assume. See Section~\ref{sec: other_dim_reduction} for a detailed discussion.

\section{Results for Spherical GMMs} \label{sec: main_thm} 
In this section, we assume the datasets are generated from spherical GMMs. Even though we can and will make statements for more general log-concave distributions (see Section~\ref{sec: main_thm_log}), this assumption allows us to illustrate our results and    mathematical ideas as cleanly as possible. We first present our main theorems for the upper bounds of ME distance between any optimal clustering and the correct target clustering for both the original and the dimensionality-reduced datasets in Section \ref{sec: des_main_thm}. We augment our results by several remarks and illustrative examples. In Section~\ref{sec: num_exp}, we conduct some numerical experiments to verify the correctness and tightness of the bounds presented in Section \ref{sec: des_main_thm}. In Section~\ref{sec: def_lem}, we provide several useful lemmas. Finally, in Sections \ref{sec: proof_main_thm} and~\ref{sec:prfthm2}, we provide our proofs (or proof sketches) for the theorems. 

\subsection{Description of the Theorems } \label{sec: des_main_thm} 
First, we show that with the combination of a new separability assumption and the non-degeneracy condition (cf.\ Definition~\ref{def: non-degen}) for a spherical GMM,  any optimal clustering for a dataset generated from the spherical GMM is close to the correct target clustering with high probability when the number of samples is sufficiently large. 

We   adopt the following set of notations. Let $\mathbf{\Sigma}_{N}:=\frac{1}{N}\bV\bV^T$ and $\bar{\mathbf{\Sigma}}_{N}:=\frac{1}{N}\bZ\bZ^T$, where $\bZ$ is the centralized matrix of $\bV$. Fix a mixture model with parameters $\{ (w_k,\bu_k,\mathbf{\Sigma}_k )\}_{k\in [K]}$ where $w_k$, $\bu_k$ and $\mathbf{\Sigma}_k$ denote the mixing weight, the mean vector, and the covariance matrix of the $k$-th component. Let 
\begin{align}
\mathbf{\Sigma}:=\sum_{k=1}^{K} w_k\left(\bu_k\bu_k^T +    \mathbf{\Sigma}_k \right),\quad\mbox{and}\quad
\mathbf{\Sigma}_0:=\sum_{k=1}^{K}w_k\bu_k\bu_k^T.
\end{align}
 Denote $\bar{\bu}:=\sum_{k=1}^{K}w_k\bu_k$ and write 
\begin{align}
 \bar{\mathbf{\Sigma}}&:=\sum_{k=1}^{K}w_k\left((\bu_k-\bar{\bu})(\bu_k-\bar{\bu})^T +   \mathbf{\Sigma}_k \right), \quad \bar{\mathbf{\Sigma}}_0 :=\sum_{k=1}^{K}w_k(\bu_k-\bar{\bu})(\bu_k-\bar{\bu})^T,
\end{align} 
 and $\lambda_{\min} := \lambda_{K-1}(\bar{\mathbf{\Sigma}}_0)$.
For a $K$-component spherical mixture model with covariance matrices $\sigma_k^2\bI$ for $k \in [K]$, we write $\bar{\sigma}^2:=\sum_{k=1}^{K}w_k\sigma_k^2$.
 
For $p \in [0,\frac{1}{2}(K-1)]$, we define the function 
\begin{equation}\label{eqn:def_zeta}
 \zeta(p):=\frac{p}{1+\sqrt{1- \frac{2p}{K-1}}}. 
\end{equation}
We clearly have $\frac{1}{2}p \le \zeta(p) \le p$.   Our first theorem reads:
\begin{theorem} \label{thm: original}
Suppose all the columns of data matrix $\bV \in \mathbb{R}^{F\times N}$ are independently generated from a $K$-component spherical GMM and $N>F>K$. Assume the spherical GMM satisfies the non-degeneracy condition (cf.~Definition~\ref{def: non-degen}). Let $w_{\min}:=\min_{k} w_k$ and $w_{\max}:=\max_{k} w_k$. Further   assume   that
 \begin{equation} \label{eq: main_condition}
  \delta_0:=\frac{(K-1)\bar{\sigma}^2}{\lambda_{\min}} < \zeta( w_{\min}).
 \end{equation}
 Let $\mathscr{C}:=\{\mathscr{C}_1,\mathscr{C}_2,\ldots,\mathscr{C}_K\}$  be the correct target $K$-clustering corresponding to the spherical GMM. Assume that  $\epsilon>0$ satisfies 
 \begin{align} 
\epsilon &\le \min \left\{\frac{w_{\min}}{2}, \lambda_{\min}, (K-1)\bar{\sigma}^2\right\}     \quad \mbox{and} \quad \frac{(K-1)\bar{\sigma}^2+\epsilon}{\lambda_{\min} - \epsilon} \le \zeta(w_{\min}-\epsilon). \label{eq: epsilon_cond}
\end{align}
Then for any  $t\ge 1$, if $N\ge CF^5 K^2 t^2/\epsilon^2$, where $C>0$ depends on $\{  (w_k,\bu_k,\sigma_k^2)\}_{k\in[ K]}$, we have, with probability at least $1-36KF^2\exp\left(-t^2 F\right)$, 
\begin{equation}\label{eq: res_origin}
 \mathrm{d}_{\mathrm{ME}}(\mathscr{C},\mathscr{C}^{\mathrm{opt}}) \le \tau\left(\frac{(K-1)\bar{\sigma}^2+\epsilon}{\lambda_{\min}-\epsilon}\right)(w_{\max}+\epsilon),
\end{equation}
where $\mathscr{C}^{\mathrm{opt}}$ is an optimal $K$-clustering for $\bV$ and $\tau(\cdot)$ is defined in~\eqref{eq: tau}.
\end{theorem}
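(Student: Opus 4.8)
The plan is to apply Corollary~\ref{coro: main} with $\mathscr{C}$ taken to be the correct target clustering, so that the whole argument reduces to showing that, on an event of the stated probability, the normalized excess distortion $\delta$ and the cluster proportions $p_{\min},p_{\max}$ of $\mathscr{C}$ satisfy the bounds dictated by~\eqref{eq: epsilon_cond}.

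First I would identify the population limits. With $\bS=\bZ^T\bZ$, the nonzero eigenvalues of $\bS$ are those of $N\bar{\mathbf{\Sigma}}_N$, and $\bar{\mathbf{\Sigma}}_N\to\bar{\mathbf{\Sigma}}=\bar{\mathbf{\Sigma}}_0+\bar{\sigma}^2\bI$ by the law of large numbers. Under the non-degeneracy condition the vectors $\bu_k-\bar{\bu}$ span a $(K-1)$-dimensional subspace, so $\mathrm{rank}(\bar{\mathbf{\Sigma}}_0)=K-1$, whence $\lambda_{K-1}(\bar{\mathbf{\Sigma}})-\lambda_K(\bar{\mathbf{\Sigma}})=\lambda_{\min}$ and, for $k\le K-1$, $\lambda_k(\bar{\mathbf{\Sigma}})=\lambda_k(\bar{\mathbf{\Sigma}}_0)+\bar{\sigma}^2$. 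For the numerator of $\delta$ I would use the identity (obtained from~\eqref{eq: distortion_basis} and the definition of $\mathcal{D}^*$, with the $\|\bZ\|_{\mathrm F}^2$ terms cancelling)
\[
\mathcal{D}(\bV,\mathscr{C})-\mathcal{D}^*(\bV)=\sum_{k=1}^{K-1}\lambda_k(\bS)-\mathrm{tr}\bigl(\bar{\bH}\bS\bar{\bH}^T\bigr)=\sum_{k=1}^{K-1}\lambda_k(\bS)-\sum_{k=1}^{K}n_k\,\|\bc_k-\bar{\bv}\|_2^2,
\]
where $\bH$ is the membership matrix of $\mathscr{C}$. Dividing by $N$, the first term converges to $\sum_{k=1}^{K-1}\lambda_k(\bar{\mathbf{\Sigma}}_0)+(K-1)\bar{\sigma}^2$ and the second (using $\bc_k\to\bu_k$, $\bar{\bv}\to\bar{\bu}$, $n_k/N\to w_k$) to $\mathrm{tr}(\bar{\mathbf{\Sigma}}_0)=\sum_{k=1}^{K-1}\lambda_k(\bar{\mathbf{\Sigma}}_0)$, so $\tfrac1N(\mathcal{D}(\bV,\mathscr{C})-\mathcal{D}^*(\bV))\to(K-1)\bar{\sigma}^2$ and $\delta\to\delta_0$; note that~\eqref{eq: main_condition} is exactly the limiting ($\epsilon\to0$) form of the requirement~\eqref{eq: epsilon_cond}, with the strict inequality guaranteeing that a valid $\epsilon$ exists.

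Next I would make these limits quantitative using the lemmas of Section~\ref{sec: def_lem}. The ingredients are: (i) a spectral-norm bound $\|\bar{\mathbf{\Sigma}}_N-\bar{\mathbf{\Sigma}}\|_2\le\epsilon/C_1$, from which Weyl's inequality yields $\lambda_{K-1}(\bS)-\lambda_K(\bS)\ge N(\lambda_{\min}-\epsilon)$ and $\sum_{k=1}^{K-1}\lambda_k(\bS)\le N\bigl(\sum_{k=1}^{K-1}\lambda_k(\bar{\mathbf{\Sigma}}_0)+(K-1)\bar{\sigma}^2+\epsilon/2\bigr)$; (ii) cluster-size concentration $|n_k/N-w_k|\le\epsilon$ for all $k$, giving $p_{\min}\ge w_{\min}-\epsilon$ and $p_{\max}\le w_{\max}+\epsilon$; and (iii) concentration of the empirical means, $\|\bc_k-\bu_k\|_2$ and $\|\bar{\bv}-\bar{\bu}\|_2$ small, which combined with (ii) gives $\sum_{k}n_k\|\bc_k-\bar{\bv}\|_2^2\ge N\bigl(\sum_{k=1}^{K-1}\lambda_k(\bar{\mathbf{\Sigma}}_0)-\epsilon/2\bigr)$. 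Taking the constant $C$ in $N\ge CF^5K^2t^2/\epsilon^2$ large enough (depending only on $\{(w_k,\bu_k,\sigma_k^2)\}$) to absorb the various problem-dependent constants, and applying a union bound, the failure probability is at most $36KF^2\exp(-t^2F)$, and on the complementary event $\delta\le\frac{(K-1)\bar{\sigma}^2+\epsilon}{\lambda_{\min}-\epsilon}$, $p_{\min}\ge w_{\min}-\epsilon$ and $p_{\max}\le w_{\max}+\epsilon$.

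It remains to finish deterministically on that event. The elementary identity $\tau(\zeta(p))=p$, together with the monotonicity of $\tau$ on $[0,\tfrac12(K-1)]$ and of $\zeta$, turns~\eqref{eq: epsilon_cond} into $\delta\le\zeta(w_{\min}-\epsilon)\le\tfrac12(K-1)$ and $\tau(\delta)\le w_{\min}-\epsilon\le p_{\min}$, which are precisely the hypotheses of Corollary~\ref{coro: main}; the corollary then gives $\mathrm{d}_{\mathrm{ME}}(\mathscr{C},\mathscr{C}^{\mathrm{opt}})\le p_{\max}\,\tau(\delta)\le(w_{\max}+\epsilon)\,\tau\bigl(\tfrac{(K-1)\bar{\sigma}^2+\epsilon}{\lambda_{\min}-\epsilon}\bigr)$, using once more that $\tau$ is increasing and $\delta\le\tfrac{(K-1)\bar{\sigma}^2+\epsilon}{\lambda_{\min}-\epsilon}\le\tfrac12(K-1)$. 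This is~\eqref{eq: res_origin}. I expect the crux to be step (i): pinning down the top-$K$ eigenvalues of $\bar{\mathbf{\Sigma}}_N$ — in particular the gap $\lambda_{K-1}-\lambda_K$ and the bulk eigenvalue $\lambda_K\approx\bar{\sigma}^2$ — to within $O(\epsilon)$ requires operator-norm control of a high-dimensional empirical second-moment matrix of a Gaussian mixture, and it is this that forces the polynomial-in-$F$ sample size and fixes the constant $C$.
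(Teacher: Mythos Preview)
Your proposal is correct and follows the same overall skeleton as the paper: estimate the ingredients of $\delta$ in Corollary~\ref{coro: main} by concentration, control $p_{\min},p_{\max}$ via Hoeffding on the indicator variables, and then finish deterministically using that $\tau$ and $\zeta$ are increasing and are inverses on $[0,\tfrac12(K-1)]$.

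The one genuine difference is in how you handle the numerator $\mathcal{D}(\bV,\mathscr{C})-\mathcal{D}^*(\bV)$. The paper estimates the two pieces separately: it shows $\tfrac1N\mathcal{D}(\bV,\mathscr{C})\approx F\bar\sigma^2$ by expanding the within-cluster sum of squares coordinatewise, and $\tfrac1N\mathcal{D}^*(\bV)\approx(F-K+1)\bar\sigma^2$ by controlling each of the bottom $F-K+1$ eigenvalues of $\bar{\mathbf\Sigma}_N$ via Weyl; the subtraction then produces $(K-1)\bar\sigma^2$. It is precisely this second step---summing $F-K+1$ eigenvalue errors---that drives the $F^5$ in the sample complexity. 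You instead cancel the $\|\bZ\|_\mathrm{F}^2$ terms algebraically and work with the identity $\mathcal{D}(\bV,\mathscr{C})-\mathcal{D}^*(\bV)=\sum_{k\le K-1}\lambda_k(\bS)-\sum_k n_k\|\bc_k-\bar\bv\|_2^2$, so you only need the top $K-1$ eigenvalues (a $(K-1)$-fold Weyl loss rather than an $(F-K+1)$-fold one) and concentration of the between-cluster means. Your route avoids subtracting two $O(F\bar\sigma^2)$ quantities to get an $O(K\bar\sigma^2)$ difference, and in principle yields a milder $F$-dependence in $N$; within the stated bound $N\ge CF^5K^2t^2/\epsilon^2$ both arguments go through with room to spare.
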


The proof of Theorem~\ref{thm: original} is based on Corollary~\ref{coro: main} and various concentration bounds. A sketch is provided in Section~\ref{sec: proof_main_thm}  while the complete proof is provided in Appendix~\ref{prf: thm: original}. Note that both $\tau(\mathord{\cdot})$ and $\zeta(\mathord{\cdot})$ are continuous and monotonically increasing on $[0,\frac{1}{2}(K-1)]$. When the mixing weights are skewed (leading to a small $w_{\min}$), we require a strong separability assumption in~\eqref{eq: epsilon_cond}. This is consistent with Xiong {\em et al.}~\cite{xiong2009} which suggests that imbalanced clusters are more difficult to disambiguate for $k$-means. 
When $\delta_0$ is small (i.e., the  data is well-separated) and $N$ is large (so $\epsilon$ and $t$ can be chosen to be sufficiently small and large respectively), we have with probability close to $1$ that the upper bound on the ME distance given by \eqref{eq: res_origin} is close to~$0$.

When the ME distance between any optimal clustering for $k$-means $\mathscr{C}^{\mathrm{opt}}$ and the correct target clustering  $\mathscr{C}$ of the samples generated from a spherical GMM is small (and thus the implicit assumption of $k$-means is satisfied), we can readily perform $k$-means to find $\mathscr{C}^{\mathrm{opt}}$ to infer $\mathscr{C}$. The tightness of the upper bound in~\eqref{eq: res_origin} is assessed numerically in Section \ref{sec: num_exp}.

\begin{remark}
 {\em  The condition \eqref{eq: main_condition} can be considered as a separability assumption. In particular, when $K=2$, we have $\lambda_{\min}=w_1w_2 \|\bu_1 -\bu_2\|_2^2$ and \eqref{eq: main_condition} becomes
 \begin{equation}
  \|\bu_1 -\bu_2\|_2  >  \frac{\bar{\sigma}}{\sqrt{w_1 w_2\zeta(w_{\min})}}, \label{eqn:u1u2}
 \end{equation} 
 which is similar to \eqref{eq: well-sep}, the separability assumption of Vempala and Wang \cite{vempala2002}. Unlike \eqref{eq: well-sep}, in our separability assumption, there is no dependence on $F$, which is an advantage for datasets with a large number of features, such as the outputs of deep neural networks. There are also no implicit constants such as $C$ in  \eqref{eq: well-sep}.}
\end{remark}

\begin{remark}\label{remark:special_genK} {\em
 For general $K \ge 2$, we consider the following   example so that the separability condition in~\eqref{eq: main_condition} and the bound on the ME distance  in~\eqref{eq: res_origin} become more transparent. 
\begin{itemize}
 \item Assume that $w_i={1}/{K}$ for all $i \in [K]$; 
 \item Assume that there is a positive number  $d$ such that $\|\bu_i - \bar{\bu}\|_2 =  d$ for all $i \in [K]$ and there is an angle $\alpha \in [0,\pi]$ such that $(\bu_i - \bar{\bu})^T(\bu_j - \bar{\bu}) = d^2 \cos \alpha$ for all $i \ne j$. 
\end{itemize}
Then for this rather symmetrical setting, we have that 
\begin{itemize}
 \item $\cos \alpha = -{1}/(K-1)$; 
 \item $\|\bu_i - \bu_j\|_2^2 = 2d^2 (1-\cos \alpha) = 2d^2 {K}/(K-1)$ for all $i \ne j$;
 \item $\lambda_{\min} = \lambda_{K-1}(\bar{\bSigma}_0) = d^2/  (K-1) =  {\|\bu_i - \bu_j\|_2^2}/(2K)$ for all $i \ne j$. 
\end{itemize}
If we further consider the relaxation $\zeta(w_{\min}) >{w_{\min}}/{2}$ (this is due to the definition of $\zeta(\cdot)$ in \eqref{eqn:def_zeta}), the separability condition~\eqref{eq: main_condition} can be written as
\begin{equation}
 \frac{2K(K-1)\bar{\sigma}^2}{\|\bu_i - \bu_j\|_2^2} < \frac{1}{2K},
\end{equation}
or equivalently,
\begin{equation}
 \|\bu_i - \bu_j\|_2 > 2K\sqrt{K-1}\bar{\sigma}. \label{eqn:our_sep}
\end{equation}
For this special case, compared to the separability condition~\eqref{eq: well-sep} provided by Vempala and Wang~\cite{vempala2002}, there is no dependence on $F$ in our separability condition in \eqref{eqn:our_sep}. Moreover, \eqref{eqn:our_sep} matches the separability condition given by Kumar and Kannan~\cite{kumar2010} which is $\Omega({K\bar{\sigma}}/{\sqrt{w_{\min}}})$. However, note that the settings in \cite{kumar2010} and our paper  are significantly different. Kumar and Kannan~\cite{kumar2010} analyze a variant of the $k$-means algorithm, whilst we analyze optimal solutions of the sums-of-squares objective function (also see Remark~\ref{rmk:KK}).  It is thus reassuring that these two considerations lead to the same type of separability condition. Furthermore, note that for this simple example,
\begin{equation}
 \delta_0 = \frac{(K-1)^2 \bar{\sigma}^2}{d^2}. 
\end{equation}
If we leverage the inequality $\tau(\delta) < 2\delta$  (which follows from the definition of $\tau(\cdot)$ in \eqref{eq: tau}) and ignore the $\epsilon$ term in the upper bound provided in~\eqref{eq: res_origin}, we have that the upper bound for ME distance becomes 
\begin{equation}
d_{\mathrm{ME}}(\scC,\scC^{\mathrm{opt}})\le \tau(\delta_0) w_{\max} \le  \frac{2(K-1)^2 \bar{\sigma}^2}{Kd^2}.  \label{eqn:ub_ME}
\end{equation}  The upper bound in \eqref{eqn:ub_ME}  has an intuitive flavor.     It is proportional to $\bar{\sigma}^2$ (a measure of the spread of the intra-cluster points) and inversely proportional to $d^2$ (the squared distance between the component mean vectors  and their  average). Similar separability conditions and upper bounds for ME distance can be obtained for this   example in the theorems to follow.}
\end{remark}
\begin{remark}
{\em The separability condition in \eqref{eq: main_condition} is different from some other {\em pairwise separability} assumptions in the literature~\cite{dasgupta1999,dasgupta2000,arora2001,vempala2002,kalai2010}. Our condition is  a {\em global separability condition}. The intuitive reasons for this are twofold. First, we study the optimal solutions to the sum-of-squares distortion  measure in \eqref{eqn:sos}. This is a global measure, involving all clusters, and so we believe a global separability condition is natural. Second, we leverage several technical lemmas in the literature, such as Lemma \ref{lem: ME_dist}. These lemmas also involve global parameters such as $\lambda_{K-1} (\bS)$ and $\lambda_{K} (\bS)$, thus a global separability condition of the form of \eqref{eq: main_condition} seems unavoidable.}
\end{remark} 

\begin{remark}
{\em  The non-degeneracy condition is used to ensure that $\lambda_{\min}>0$. When $K=2$, to ensure that $\lambda_{\min}>0$, from~\eqref{eqn:u1u2}, we see that we only need to assume that the two component mean vectors are distinct. In particular, we do not require $\bu_1$ and $\bu_2$ to be linearly independent.}
\end{remark}

\begin{remark}\label{rmk:KK}
{\em The result by~Kumar and Kannan \cite{kumar2010} (discussed in Section~\ref{sec:gmm}) may, at a first glance, appear to be similar to Theorem \ref{thm: original} in the sense that both results show that under appropriate conditions, the $k$-means algorithm is a good choice for learning certain mixture models. However, there is a salient difference. The analysis of Kumar and Kannan \cite{kumar2010} is based on a variant of the $k$-means algorithm, while we only analyze the {\em objective function} of $k$-means (in~\eqref{eqn:sos}) which determines all optimal clusterings. Since there are multiple ways to approximately minimize the ubiquitous but intractable sum-of-squares distortion measure in~\eqref{eqn:sos}, our analysis is partly {\em algorithm-independent} (but dependent on the objective function in \eqref{eqn:sos}) and thus fundamental in the theory of clustering. Our analysis and theoretical results, in fact, {\em underpin why} the separability assumptions of various forms appear to be necessary to make theoretical guarantees for using $k$-means to learn mixture models.}
\end{remark}

Next, we show that under similar assumptions for the generating process and with a {\em weaker} separability assumption for the spherical GMM, any optimal clustering for the post-PCA dataset is also close  to the correct target clustering with high probability when $N$ is large enough.
\begin{theorem} \label{thm: afterPCA}
 Let the dataset $\bV\in \mathbb{R}^{F\times N}$ be generated under the same conditions given in Theorem~\ref{thm: original} with the separability assumption \eqref{eq: main_condition} being modified to
 \begin{equation}\label{eq: main_condition_afterPCA}
  \delta_1:= \frac{(K-1)\bar{\sigma}^2}{\lambda_{\min}+\bar{\sigma}^2 } < \zeta(w_{\min}).
 \end{equation}
 Let $\tilde{\bV} \in \mathbb{R}^{(K-1)\times N}$ be the post-$(K-1)$-PCA dataset of $\bV$. Then, for any   $\epsilon>0$ that satisfies 
 \begin{align} 
\epsilon &\le\min \left\{\frac{w_{\min}}{2}, \lambda_{\min}\!+\!\bar{\sigma}^2, (K \! -\! 1)\bar{\sigma}^2\right\} ,\;\;\mbox{and} \quad  \frac{(K-1)\bar{\sigma}^2+\epsilon}{\lambda_{\min} +\bar{\sigma}^2 - \epsilon}  \le \zeta(w_{\min}-\epsilon),  \label{eq: epsilon_cond1}
\end{align}
 and for any $t\ge 1$, when $N\ge CF^3 K^5 t^2/\epsilon^2$, where $C>0$ depends on  $\{  (w_k,\bu_k,\sigma_k^2)\}_{k\in[ K]}$, we have, with probability at least $1-165KF\exp\left(-t^2 K\right)$, 
\begin{equation}\label{eq: res_afterPCA}
 \mathrm{d}_{\mathrm{ME}}(\mathscr{C}, \tilde{\mathscr{C}}^{\mathrm{opt}}) \le \tau\left(\frac{(K-1)\bar{\sigma}^2+\epsilon}{\lambda_{\min}+\bar{\sigma}^2-\epsilon}\right)(w_{\max}+\epsilon),
\end{equation}
where $\mathscr{C}$ is the correct target clustering and $\tilde{\mathscr{C}}^{\mathrm{opt}}$ is an optimal $K$-clustering for $\tilde{\bV}$.
\end{theorem}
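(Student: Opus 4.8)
The plan is to apply Corollary~\ref{coro: main} to the post-$(K-1)$-PCA dataset $\tilde{\bV}$ together with the correct target clustering $\mathscr{C}$, exactly as Theorem~\ref{thm: original} was obtained from Corollary~\ref{coro: main} for $\bV$. Write $\bar{\mathbf{\Sigma}}_N=\bP\bD\bP^T$ and $\bP_{K-1}=\bP(\colon,1\colon K-1)$, so that $\tilde{\bV}=\bP_{K-1}^T\bV$ has centralized matrix $\tilde{\bZ}=\bP_{K-1}^T\bZ$. Two exact identities streamline the reduction. First, $\tilde{\bS}:=\tilde{\bZ}^T\tilde{\bZ}$ has rank at most $K-1$, so $\lambda_K(\tilde{\bS})=0$ and $\mathcal{D}^*(\tilde{\bV})=\mathrm{tr}(\tilde{\bS})-\sum_{k=1}^{K-1}\lambda_k(\tilde{\bS})=0$; hence the parameter $\delta$ of Lemma~\ref{lem: ME_dist} associated with $(\tilde{\bV},\mathscr{C})$ is $\delta=\mathcal{D}(\tilde{\bV},\mathscr{C})/\lambda_{K-1}(\tilde{\bS})$. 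Second, $\tfrac1N\tilde{\bZ}\tilde{\bZ}^T=\bP_{K-1}^T\bar{\mathbf{\Sigma}}_N\bP_{K-1}=\mathrm{diag}\big(\lambda_1(\bar{\mathbf{\Sigma}}_N),\ldots,\lambda_{K-1}(\bar{\mathbf{\Sigma}}_N)\big)$, whence $\lambda_{K-1}(\tilde{\bS})=N\lambda_{K-1}(\bar{\mathbf{\Sigma}}_N)$ and $\delta=\mathcal{D}(\tilde{\bV},\mathscr{C})\big/\big(N\lambda_{K-1}(\bar{\mathbf{\Sigma}}_N)\big)$. Heuristically $\mathcal{D}(\tilde{\bV},\mathscr{C})\approx N(K-1)\bar{\sigma}^2$ and $\lambda_{K-1}(\bar{\mathbf{\Sigma}}_N)\approx\lambda_{\min}+\bar{\sigma}^2$, so $\delta\approx\delta_1$; compared with Theorem~\ref{thm: original} the numerator of $\delta$ is essentially unchanged while the denominator picks up the extra $\bar{\sigma}^2$, precisely because projecting onto $K-1$ dimensions annihilates $\lambda_K(\tilde{\bS})$ --- this is the source of the weaker separability requirement~\eqref{eq: main_condition_afterPCA}.

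To make this rigorous I would control the numerator and denominator separately, using the concentration lemmas of Section~\ref{sec: def_lem}. For the denominator, the population centralized covariance of a spherical GMM is $\bar{\mathbf{\Sigma}}=\bar{\mathbf{\Sigma}}_0+\bar{\sigma}^2\bI$; the non-degeneracy condition forces $\mathrm{rank}(\bar{\mathbf{\Sigma}}_0)=K-1$, so $\lambda_{K-1}(\bar{\mathbf{\Sigma}})=\lambda_{\min}+\bar{\sigma}^2$, $\lambda_K(\bar{\mathbf{\Sigma}})=\bar{\sigma}^2$, and there is a spectral gap of exactly $\lambda_{\min}$ at index $K-1$. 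A matrix-concentration bound on $\|\bar{\mathbf{\Sigma}}_N-\bar{\mathbf{\Sigma}}\|_2$ combined with Weyl's inequality then gives $\lambda_{K-1}(\bar{\mathbf{\Sigma}}_N)\ge\lambda_{\min}+\bar{\sigma}^2-\epsilon'$ once $N$ is large, and also makes $\bP_{K-1}$ well defined since the gap is bounded below with high probability. For the numerator, the same spectral gap lets a Davis--Kahan $\sin\Theta$ estimate show that the empirical projector $\bP_{K-1}\bP_{K-1}^T$ is close in operator norm to the projector $\mathbf{Q}\mathbf{Q}^T$ onto the top $(K-1)$-dimensional eigenspace of $\bar{\mathbf{\Sigma}}$ (equivalently the span of $\{\bu_k-\bar{\bu}\}_{k\in[K]}$), where $\mathbf{Q}\in\mathbb{R}^{F\times(K-1)}$ has orthonormal columns. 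I would then bound $\mathcal{D}(\tilde{\bV},\mathscr{C})$ by $\mathcal{D}(\mathbf{Q}^T\bV,\mathscr{C})$ up to an error governed by this projector discrepancy, and use that under the \emph{fixed} projection $\mathbf{Q}$ the distortion of the correct clustering concentrates around $\sum_k|\mathscr{C}_k|\,\mathrm{tr}(\sigma_k^2\mathbf{Q}^T\mathbf{Q})=\sum_k|\mathscr{C}_k|(K-1)\sigma_k^2\approx N(K-1)\bar{\sigma}^2$, with the replacement of empirical centroids by component means and the fluctuation of $|\mathscr{C}_k|$ about $Nw_k$ contributing only lower-order terms. Collecting these pieces gives $\mathcal{D}(\tilde{\bV},\mathscr{C})\le N\big((K-1)\bar{\sigma}^2+\epsilon''\big)$ with high probability; choosing $\epsilon',\epsilon''$ small enough --- which, after propagating the $F$- and $K$-dependence of the Davis--Kahan and matrix-concentration estimates, is what forces $N\ge CF^3K^5t^2/\epsilon^2$ and the failure probability $165KF\exp(-t^2K)$ --- yields $\delta\le\frac{(K-1)\bar{\sigma}^2+\epsilon}{\lambda_{\min}+\bar{\sigma}^2-\epsilon}$.

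It remains to check the hypotheses of Corollary~\ref{coro: main}. By the bound on $\delta$ just obtained and assumption~\eqref{eq: epsilon_cond1}, $\delta\le\zeta(w_{\min}-\epsilon)\le w_{\min}-\epsilon\le\tfrac1K\le\tfrac12(K-1)$ (using $K\ge2$ and $\zeta(p)\le p$). A short computation shows that $\zeta(p)$ is the smaller root of $\tau(x)=p$ on $[0,\tfrac12(K-1)]$, hence $\tau(\zeta(p))=p$ there; since $\tau$ is increasing on $[0,\tfrac12(K-1)]$ and $\delta\le\zeta(w_{\min}-\epsilon)$, we get $\tau(\delta)\le\tau(\zeta(w_{\min}-\epsilon))=w_{\min}-\epsilon\le p_{\min}$, where $p_{\min}=\min_k|\mathscr{C}_k|/N\ge w_{\min}-\epsilon$ with high probability by concentration of the multinomial cluster counts. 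Corollary~\ref{coro: main} then gives $\mathrm{d}_{\mathrm{ME}}(\mathscr{C},\tilde{\mathscr{C}}^{\mathrm{opt}})\le p_{\max}\,\tau(\delta)$; bounding $p_{\max}=\max_k|\mathscr{C}_k|/N\le w_{\max}+\epsilon$ in the same way, and using monotonicity of $\tau$ together with the bound on $\delta$, produces~\eqref{eq: res_afterPCA} after a union bound over the $O(KF)$ failure events.

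I expect the main obstacle to be the numerator estimate. Since $\bP_{K-1}$ depends on the entire sample, $\mathcal{D}(\tilde{\bV},\mathscr{C})$ is not a sum of independent terms and the projection cannot be frozen; decoupling the randomness in $\bP_{K-1}$ from that in the individual samples --- passing through the Davis--Kahan bound to the fixed population subspace $\mathbf{Q}$, then uniformly controlling the within-cluster scatter over $(K-1)$-dimensional projections --- is the technical heart of the argument and is also where the extra powers of $F$ and $K$, relative to a naive fixed-projection calculation, enter the sample complexity.
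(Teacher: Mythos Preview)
Your proposal is correct and follows essentially the same route as the paper: pass to the fixed population projection $\mathbf{Q}_{K-1}^T\bV$ (the paper's $\hat{\bV}$), concentrate the distortion and the $(K-1)$-st eigenvalue there, then control the discrepancy between the empirical projector $\bP_{K-1}\bP_{K-1}^T$ and the population projector $\mathbf{Q}_{K-1}\mathbf{Q}_{K-1}^T$ to transfer those bounds to $\tilde{\bV}$, and finish with Corollary~\ref{coro: main}.

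Two small differences are worth noting. First, your denominator argument is slightly cleaner than the paper's: you use the exact identity $\lambda_{K-1}(\tilde{\bS})=N\lambda_{K-1}(\bar{\mathbf{\Sigma}}_N)$ and then Weyl directly, whereas the paper first concentrates $\lambda_{K-1}(\hat{\bS})$ for the fixed-projection data and then separately bounds $|\lambda_{K-1}(\hat{\bS})-\lambda_{K-1}(\tilde{\bS})|$ via $\|\bR\|_\mathrm{F}$; your observation that $\bP_{K-1}$ diagonalizes $\bar{\mathbf{\Sigma}}_N$ makes this detour unnecessary. Second, where you invoke Davis--Kahan, the paper proves its own projector-perturbation bound (Lemma~\ref{lem: bd_bR}) giving $\|\bR\|_\mathrm{F}\le 4\sqrt{K}\epsilon/\lambda_{\min}$ from $\|\bar{\mathbf{\Sigma}}_N-\bar{\mathbf{\Sigma}}\|_2\le\epsilon$; the two are equivalent in spirit and yield the same dependence on $K$ and $\lambda_{\min}$.
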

The proof of Theorem \ref{thm: afterPCA} is provided in Section \ref{sec:prfthm2}.
\begin{remark}\label{remark: SVD}
{\em  Vempala and Wang \cite{vempala2002} analyzed  the SVD of $\mathbf{\Sigma}_N$ (corresponding to PCA with no centering) instead of $\bar{\mathbf{\Sigma}}_N$ (corresponding to PCA). 
The proof of Corollary 3 in Vempala and Wang \cite{vempala2002} is based on the key observation that the subspace   spanned by the  first $K$ singular vectors of $\mathbf{\Sigma}_N$  lies close to the subspace spanned by the  $K$ component mean vectors of the spherical GMM with high probability when $N$ is   large. By performing $K$-SVD (cf.\ Section~\ref{sec: pca}) on $\mathbf{\Sigma}_N$, we have the following corollary. } \end{remark}
 \begin{corollary}
 Let the dataset $\bV\in \mathbb{R}^{F\times N}$ be generated under the same conditions given in Theorem~\ref{thm: original}. Let $\tilde{\bV}$ be the post-$K$-SVD dataset of $\bV$, then for any positive $\epsilon$ satisfying \eqref{eq: epsilon_cond} and for any $t\ge 1$, if $N\ge CF^3 K^5 t^2/\epsilon^2$, then with probability at least $1-167KF\exp\left(-t^2 K\right)$, the same upper bound   in \eqref{eq: res_origin} holds for $\mathrm{d}_{\mathrm{ME}}(\mathscr{C},\tilde{\mathscr{C}}^{\mathrm{opt}})$, where $\tilde{\mathscr{C}}^{\mathrm{opt}}$ is an optimal $K$-clustering for $\tilde{\bV}$.
 \end{corollary}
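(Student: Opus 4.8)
The plan is to run the argument of Theorem~\ref{thm: afterPCA} with the top $K-1$ eigenvectors of $\bar{\mathbf{\Sigma}}_N$ replaced by the top $K$ singular vectors of $\mathbf{\Sigma}_N$, and then to invoke Corollary~\ref{coro: main} for the post-$K$-SVD dataset $\tilde{\bV}=\bP_K^T\bV$, where $\bP_K\in\mathbb{R}^{F\times K}$ collects the first $K$ singular vectors of $\mathbf{\Sigma}_N$. The starting point is to formalize the observation of~\cite{vempala2002} recalled in Remark~\ref{remark: SVD}: since $\mathbf{\Sigma}=\mathbf{\Sigma}_0+\bar{\sigma}^2\bI$ for a spherical GMM, the top-$K$ invariant subspace of $\mathbf{\Sigma}$ is exactly $\mathrm{span}(\bu_1,\dots,\bu_K)=\mathrm{range}(\mathbf{\Sigma}_0)$, and its eigengap equals $\lambda_K(\mathbf{\Sigma}_0)$, which is strictly positive by the non-degeneracy condition (here $F>K$ is used). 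Combining a concentration bound for $\|\mathbf{\Sigma}_N-\mathbf{\Sigma}\|_2$ — of the same kind already applied to $\bar{\mathbf{\Sigma}}_N$ in the proofs of Theorems~\ref{thm: original} and~\ref{thm: afterPCA} — with the Davis--Kahan theorem shows that, with high probability when $N$ is large, $\|\bP_K\bP_K^T-\mathbf{\Pi}\|_2$ is small, where $\mathbf{\Pi}$ is the orthogonal projector onto $\mathrm{range}(\mathbf{\Sigma}_0)$.

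Because a common linear projection commutes with centering, the centralized matrix of $\tilde{\bV}$ is $\tilde{\bZ}=\bP_K^T\bZ$, so $\tilde{\bS}:=\tilde{\bZ}^T\tilde{\bZ}=\bZ^T\bP_K\bP_K^T\bZ$ has the same nonzero eigenvalues as the $K\times K$ matrix $N\bP_K^T\bar{\mathbf{\Sigma}}_N\bP_K$. Since $\mathrm{range}(\bar{\mathbf{\Sigma}}_0)\subseteq\mathrm{range}(\mathbf{\Sigma}_0)$ and $\bar{\bu}\in\mathrm{range}(\mathbf{\Sigma}_0)$, on the high-probability event above the eigenvalues of $\bP_K^T\bar{\mathbf{\Sigma}}_N\bP_K$ lie (up to a perturbation that can be absorbed into $\epsilon$ for $N$ as large as stated) near the nonzero eigenvalues of $\mathbf{\Pi}\bar{\mathbf{\Sigma}}\mathbf{\Pi}=\bar{\mathbf{\Sigma}}_0+\bar{\sigma}^2\mathbf{\Pi}$, namely $\lambda_1(\bar{\mathbf{\Sigma}}_0)+\bar{\sigma}^2\ge\dots\ge\lambda_{K-1}(\bar{\mathbf{\Sigma}}_0)+\bar{\sigma}^2\ge\bar{\sigma}^2$. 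Hence, with high probability, $\lambda_{K-1}(\tilde{\bS})\ge N(\lambda_{\min}+\bar{\sigma}^2-\epsilon)$, $\lambda_K(\tilde{\bS})\le N(\bar{\sigma}^2+\epsilon)$ and $\lambda_{K+1}(\tilde{\bS})=0$, so $\lambda_{K-1}(\tilde{\bS})-\lambda_K(\tilde{\bS})\ge N(\lambda_{\min}-\epsilon)$; the same perturbation bounds control $\mathrm{tr}(\tilde{\bS})=\|\tilde{\bZ}\|_{\mathrm F}^2$ and $\sum_{k=1}^{K-1}\lambda_k(\tilde{\bS})$, giving $\mathcal{D}^*(\tilde{\bV})=\mathrm{tr}(\tilde{\bS})-\sum_{k=1}^{K-1}\lambda_k(\tilde{\bS})\le N(\bar{\sigma}^2+\epsilon)$. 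For the numerator, on the same event the empirical cluster centers are close to $\bP_K^T\bu_k$ and $\sum_{n\in\mathscr{C}_k}\|\bP_K^T(\bv_n-\bu_k)\|_2^2$ concentrates around $|\mathscr{C}_k|K\sigma_k^2$, so $\mathcal{D}(\tilde{\bV},\mathscr{C})\le N(K\bar{\sigma}^2+\epsilon)$ with high probability, whence $\mathcal{D}(\tilde{\bV},\mathscr{C})-\mathcal{D}^*(\tilde{\bV})\le N((K-1)\bar{\sigma}^2+\epsilon)$. The cluster proportions of $\mathscr{C}$ are unchanged by projection and concentrate around $w_{\min}$ and $w_{\max}$, so $\tilde{p}_{\min}\ge w_{\min}-\epsilon$ and $\tilde{p}_{\max}\le w_{\max}+\epsilon$ with high probability.

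Putting these estimates together exactly as in the proof of Theorem~\ref{thm: original}, with $\tilde{\delta}:=\bigl(\mathcal{D}(\tilde{\bV},\mathscr{C})-\mathcal{D}^*(\tilde{\bV})\bigr)\big/\bigl(\lambda_{K-1}(\tilde{\bS})-\lambda_K(\tilde{\bS})\bigr)$ we obtain $\tilde{\delta}\le\frac{(K-1)\bar{\sigma}^2+\epsilon}{\lambda_{\min}-\epsilon}$; hypothesis~\eqref{eq: epsilon_cond}, together with the identity $\tau(\zeta(p))=p$, the bounds $\tfrac12 p\le\zeta(p)\le p$, and the monotonicity of $\tau$ and $\zeta$ on $[0,\tfrac12(K-1)]$, then yields $\tilde{\delta}\le\tfrac12(K-1)$ and $\tau(\tilde{\delta})\le\tilde{p}_{\min}$, so Corollary~\ref{coro: main} applies and gives
\[
  \mathrm{d}_{\mathrm{ME}}(\mathscr{C},\tilde{\mathscr{C}}^{\mathrm{opt}})\le\tilde{p}_{\max}\,\tau(\tilde{\delta})\le(w_{\max}+\epsilon)\,\tau\!\left(\frac{(K-1)\bar{\sigma}^2+\epsilon}{\lambda_{\min}-\epsilon}\right),
\]
which is the bound in~\eqref{eq: res_origin}. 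A union bound over all the concentration events involved (the operator-norm deviation of $\mathbf{\Sigma}_N$, the trace and the individual eigenvalues of $\tilde{\bS}$, the within-cluster scatters, and the cluster proportions) gives the stated failure probability $167KF\exp(-t^2K)$ and the sample-size requirement $N\ge CF^3K^5t^2/\epsilon^2$, the power $F^3$ and the factor $K$ in the exponent reflecting that the effective dimension after projection is $K$, just as in Theorem~\ref{thm: afterPCA}.

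I expect the main obstacle to be propagating the Davis--Kahan perturbation $\|\bP_K\bP_K^T-\mathbf{\Pi}\|_2$ simultaneously through the $O(N\epsilon)$-level control of $\lambda_{K-1}(\tilde{\bS})$, $\lambda_K(\tilde{\bS})$, $\mathrm{tr}(\tilde{\bS})$ and $\mathcal{D}(\tilde{\bV},\mathscr{C})$, while keeping careful track of the fact that the eigengap governing the SVD perturbation here is the \emph{uncentered} quantity $\lambda_K(\mathbf{\Sigma}_0)$ rather than $\lambda_{\min}=\lambda_{K-1}(\bar{\mathbf{\Sigma}}_0)$; the constant $C$ and the high-probability event must be defined in terms of $\lambda_K(\mathbf{\Sigma}_0)$, and $N$ must be taken large enough (hence $N\ge CF^3K^5t^2/\epsilon^2$) for this perturbation to be absorbed into the slack $\epsilon$.
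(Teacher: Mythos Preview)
Your proposal is correct and follows essentially the same route the paper indicates (via Remark~\ref{remark: SVD} and Remark~\ref{remark: close_subspaces}): replace $\bar{\mathbf{\Sigma}}_0$ and $\bQ_{K-1}$ in the proof of Theorem~\ref{thm: afterPCA} by $\mathbf{\Sigma}_0$ and its top-$K$ singular vectors, bound the subspace perturbation $\|\bP_K\bP_K^T-\bQ_K\bQ_K^T\|$ (you invoke Davis--Kahan, the paper uses its hand-crafted analogue of Lemma~\ref{lem: bd_bR}, which amounts to the same thing), and then feed the resulting estimates into Corollary~\ref{coro: main}. The only organizational difference is that the paper would first analyze the deterministic projection $\hat{\bV}=\bQ_K^T\bV$ and then bound $|\mathcal{D}(\hat{\bV},\mathscr{C})-\mathcal{D}(\tilde{\bV},\mathscr{C})|$ and $|\lambda_j(\hat{\bS})-\lambda_j(\tilde{\bS})|$ via $\|\bR\|_{\mathrm F}$ as in \eqref{eq: dist_diff}--\eqref{eq: eigen_diff}, whereas you perturb the eigenvalues of $\bP_K^T\bar{\mathbf{\Sigma}}_N\bP_K$ directly toward those of $\mathbf{\Pi}\bar{\mathbf{\Sigma}}\mathbf{\Pi}$; both arrive at the same bound.
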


  Combining the results of Theorems~\ref{thm: original} and~\ref{thm: afterPCA}, by the triangle inequality for ME distance, we obtain the following corollary concerning an upper bound for $\mathrm{d}_\mathrm{ME}(\mathscr{C}^{\mathrm{opt}},\tilde{\mathscr{C}}^{\mathrm{opt}})$, the ME distance between any optimal clustering of the original dataset and any optimal clustering of the post-PCA dataset.
 \begin{corollary}
 Let the dataset $\bV\in \mathbb{R}^{F\times N}$ be generated under the same conditions given in Theorem~\ref{thm: original}. Let $\tilde{\bV}$ be the post-$(K-1)$-PCA dataset of $\bV$, then for any positive $\epsilon$ satisfying \eqref{eq: epsilon_cond} and for any $t\ge 1$, if $N\ge CF^5 K^5 t^2/\epsilon^2$, then with probability at least $1-201KF^2\exp\left(-t^2 K\right)$,  $\mathrm{d}_{\mathrm{ME}}(\mathscr{C}^{\mathrm{opt}},\tilde{\mathscr{C}}^{\mathrm{opt}})$ is upper bounded by the sum  of the right-hand-sides of~\eqref{eq: res_origin} and~\eqref{eq: res_afterPCA}.
 \end{corollary}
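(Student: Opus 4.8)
The plan is to deduce this corollary directly from Theorems~\ref{thm: original} and~\ref{thm: afterPCA} via the triangle inequality. Recall that $\mathrm{d}_{\mathrm{ME}}$ is a genuine metric on the set of $K$-clusterings of $[N]$ (which depends only on the underlying partitions, not on the dataset used to obtain them; see \cite{meilua2005}). Since $\mathscr{C}^{\mathrm{opt}}$, $\mathscr{C}$ and $\tilde{\mathscr{C}}^{\mathrm{opt}}$ are all partitions of the common index set $[N]$, we have
\[
\mathrm{d}_{\mathrm{ME}}(\mathscr{C}^{\mathrm{opt}},\tilde{\mathscr{C}}^{\mathrm{opt}}) \le \mathrm{d}_{\mathrm{ME}}(\mathscr{C}^{\mathrm{opt}},\mathscr{C}) + \mathrm{d}_{\mathrm{ME}}(\mathscr{C},\tilde{\mathscr{C}}^{\mathrm{opt}}),
\]
so it remains to bound the two terms on the right by the right-hand sides of~\eqref{eq: res_origin} and~\eqref{eq: res_afterPCA}, respectively, on a common high-probability event.

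First I would verify that the hypotheses stated in the corollary (those of Theorem~\ref{thm: original} together with $\epsilon$ satisfying~\eqref{eq: epsilon_cond}) are enough to invoke \emph{both} theorems. Theorem~\ref{thm: original} applies verbatim. For Theorem~\ref{thm: afterPCA} one must check that~\eqref{eq: epsilon_cond} implies its weaker separability assumption~\eqref{eq: main_condition_afterPCA} and its $\epsilon$-conditions~\eqref{eq: epsilon_cond1}. This is elementary: $\delta_1 = \frac{(K-1)\bar{\sigma}^2}{\lambda_{\min}+\bar{\sigma}^2} \le \frac{(K-1)\bar{\sigma}^2}{\lambda_{\min}} = \delta_0 < \zeta(w_{\min})$; for $\epsilon<\lambda_{\min}$,
\[
\frac{(K-1)\bar{\sigma}^2+\epsilon}{\lambda_{\min}+\bar{\sigma}^2-\epsilon} \le \frac{(K-1)\bar{\sigma}^2+\epsilon}{\lambda_{\min}-\epsilon} \le \zeta(w_{\min}-\epsilon);
\]
and $\lambda_{\min}\le\lambda_{\min}+\bar{\sigma}^2$ shows $\epsilon\le\min\{w_{\min}/2,\lambda_{\min},(K-1)\bar{\sigma}^2\}$ implies $\epsilon\le\min\{w_{\min}/2,\lambda_{\min}+\bar{\sigma}^2,(K-1)\bar{\sigma}^2\}$. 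Hence both theorems hold with the same $\epsilon$ and $t$.

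Next comes the bookkeeping for the sample size and the failure probability. Taking $N \ge C F^5 K^5 t^2/\epsilon^2$, with $C$ the larger of the two constants in Theorems~\ref{thm: original} and~\ref{thm: afterPCA}, ensures $N\ge CF^5K^2t^2/\epsilon^2$ and $N\ge CF^3K^5t^2/\epsilon^2$ simultaneously. On the intersection of the two good events, which by a union bound has probability at least $1 - 36KF^2\exp(-t^2F) - 165KF\exp(-t^2K)$, both~\eqref{eq: res_origin} and~\eqref{eq: res_afterPCA} hold. Using $F>K$ we get $\exp(-t^2F)\le\exp(-t^2K)$, and $F\ge 1$ gives $F\le F^2$, so $36KF^2\exp(-t^2F)+165KF\exp(-t^2K) \le 201KF^2\exp(-t^2K)$; thus the stated probability bound follows, and substituting the two theorem bounds into the triangle inequality yields the claim.

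The point to stress is that there is no real obstacle: all the substance lives in Theorems~\ref{thm: original} and~\ref{thm: afterPCA}, and the only care needed is (i) confirming that the stronger running hypothesis~\eqref{eq: epsilon_cond} suffices to activate Theorem~\ref{thm: afterPCA}, and (ii) consolidating the two sample-complexity requirements and the two tail bounds into the single clean statement with $N\gtrsim F^5K^5t^2/\epsilon^2$ and failure probability $201KF^2\exp(-t^2K)$. One keeps~\eqref{eq: epsilon_cond} (rather than the sharper~\eqref{eq: epsilon_cond1}) as the hypothesis precisely because the original-dataset term in the triangle inequality still needs the stronger separability condition~\eqref{eq: main_condition}.
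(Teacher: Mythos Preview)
Your proposal is correct and follows exactly the approach indicated in the paper, which simply states that the corollary is obtained ``by the triangle inequality for ME distance'' from Theorems~\ref{thm: original} and~\ref{thm: afterPCA}. Your additional bookkeeping---verifying that~\eqref{eq: epsilon_cond} implies~\eqref{eq: epsilon_cond1}, and consolidating the sample-size and failure-probability bounds into $N\ge CF^5K^5t^2/\epsilon^2$ and $201KF^2\exp(-t^2K)$---is accurate and fills in details the paper leaves implicit.
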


The proof of Theorem~\ref{thm: afterPCA} hinges mainly on the fact that the subspace spanned by the first $K-1$ singular vectors of $\bar{\mathbf{\Sigma}}_N$ is ``close'' to the subspace spanned by the first $K-1$ singular vectors of $\bar{\mathbf{\Sigma}}_0$.  See Lemma~\ref{lem: bd_bR} to follow for a precise statement. Note that the assumption in~\eqref{eq: main_condition_afterPCA} is weaker than~\eqref{eq: main_condition} and the upper bound given by \eqref{eq: res_afterPCA} is smaller than that in~\eqref{eq: res_origin} (if all the parameters are the same). In addition, when $K=2$, by applying PCA to the original dataset as described in Theorem~\ref{thm: afterPCA}, we obtain a $1$-dimensional dataset, which is easier to cluster  optimally compared to the $2$-dimensional dataset obtained by performing PCA with no centering as described in Remark~\ref{remark: SVD}. These comparisons also provide a theoretical basis for the fact that centering can result in a stark difference in PCA. 

\begin{figure*}[t]
\centering
\begin{tabular}{cc}
\includegraphics[width=0.475\columnwidth]{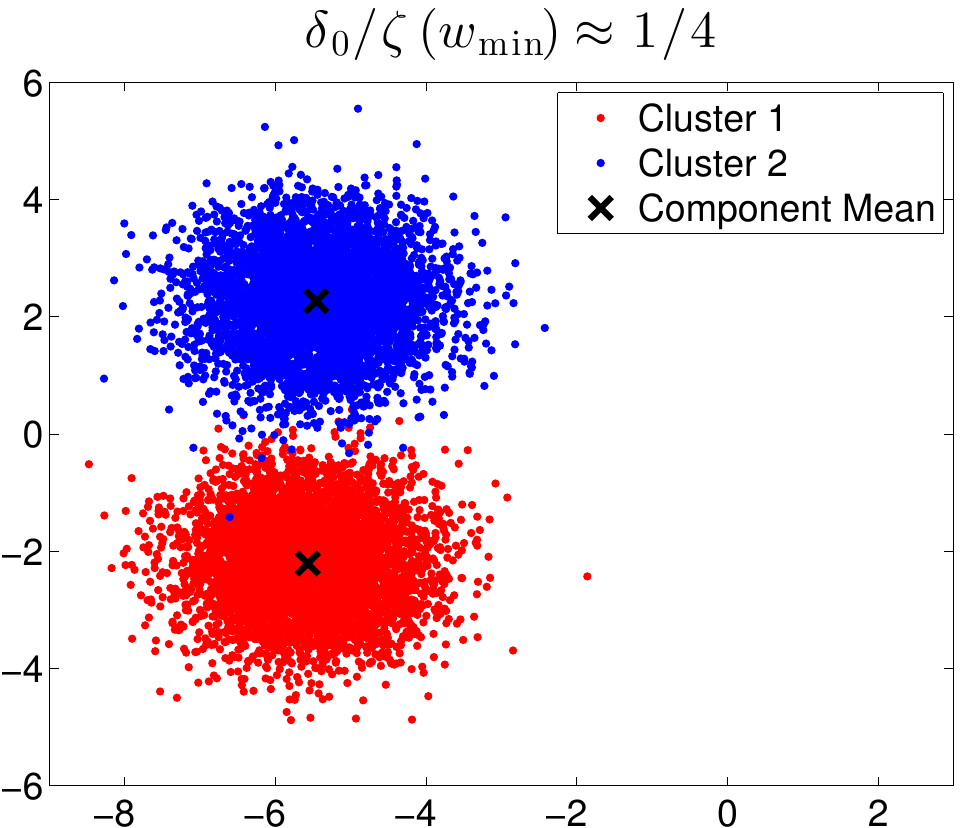}  &\hspace{.1in}
\includegraphics[width=0.475\columnwidth]{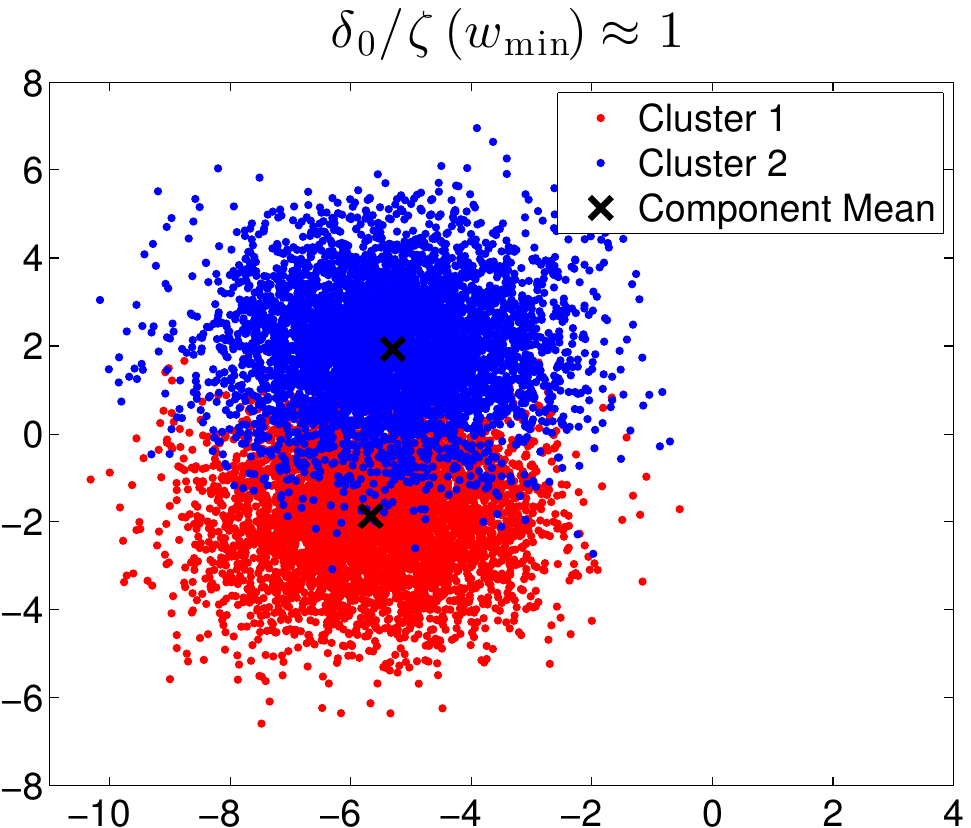}
\end{tabular}
\caption{Visualization of post-2-SVD datasets.}\label{fig: visualization}
\end{figure*}

\begin{figure*}[t]
\centering
\begin{tabular}{cc}
\subfloat{\includegraphics[width=0.5\columnwidth]{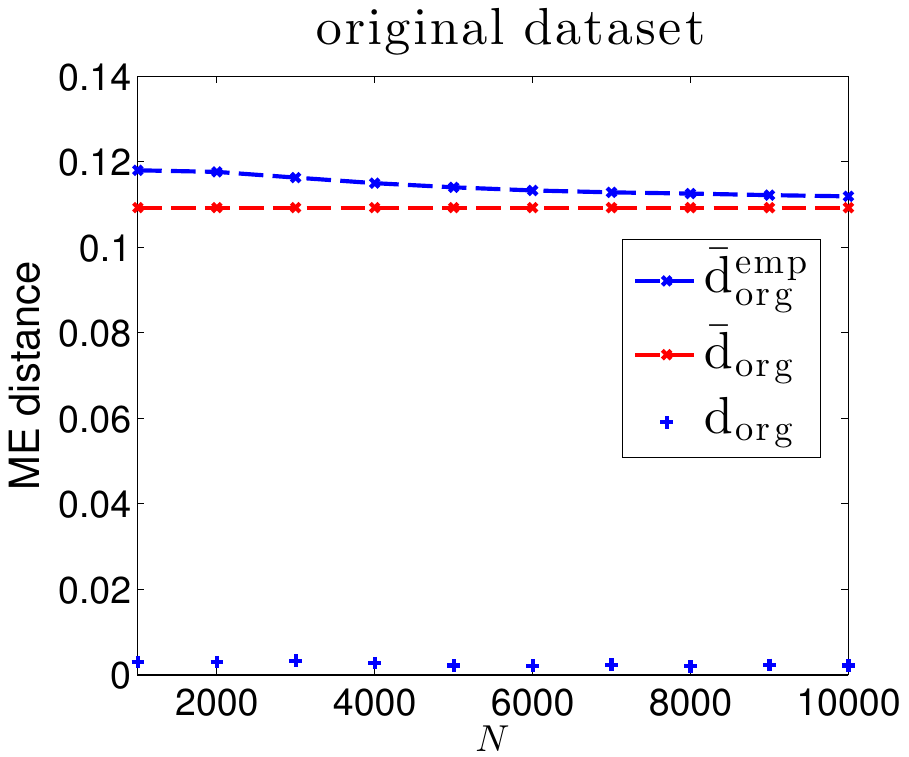}}  &\hspace{.1in}
\subfloat{\includegraphics[width=0.475\columnwidth]{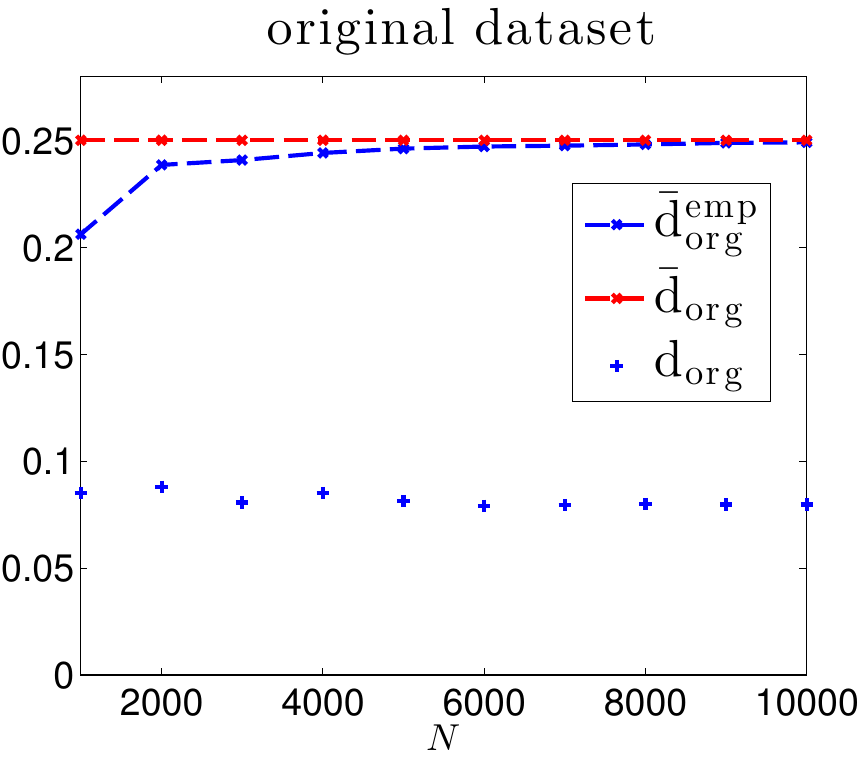}} \\
\subfloat{\includegraphics[width=0.5\columnwidth]{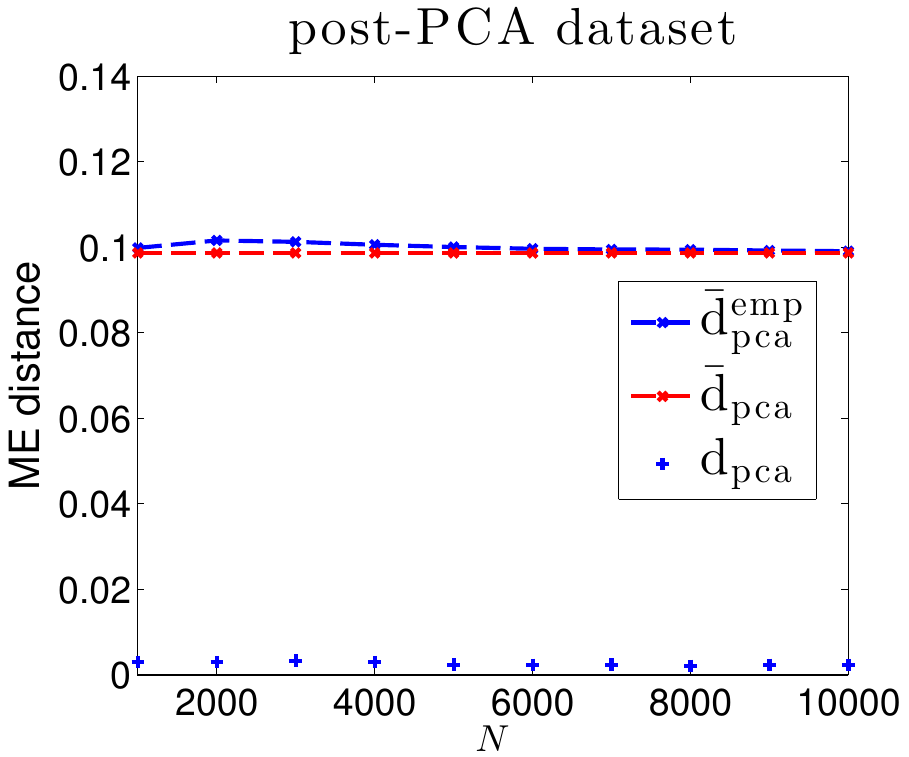}}  &\hspace{.1in}
\subfloat{\includegraphics[width=0.475\columnwidth]{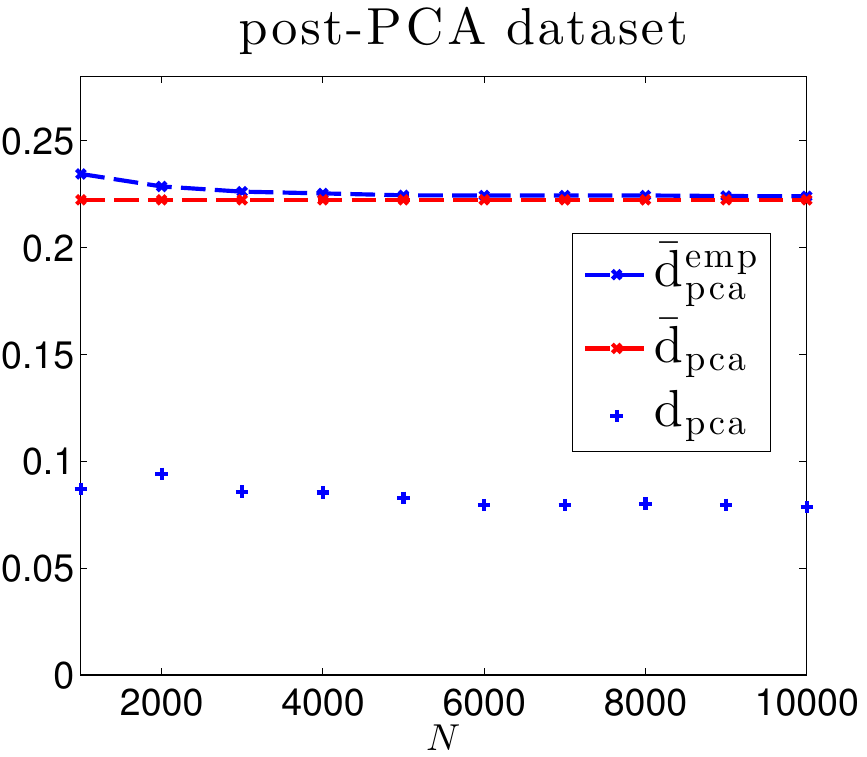}}  
\end{tabular}
\caption{True distances and their corresponding upper bounds.}\label{fig: bounds}
\end{figure*}

\subsection{Verifying Theorems \ref{thm: original} and \ref{thm: afterPCA}  numerically} \label{sec: num_exp}
To verify the correctness  and tightness of the upper bounds given in Theorems \ref{thm: original} and \ref{thm: afterPCA} and the efficacy of clustering post-PCA samples, we perform numerical experiments on synthetic datasets. We sample data points from two types of $2$-component spherical GMMs. The dimensionality of the data points is $F=100$, and the number of samples $N$ ranges from $1000$ to $10000$. Component mean vectors are randomly and uniformly picked from the hypercube $[0,1]^F$. Equal mixing weights are assigned to the components.
After fixing the mixing weights and the component mean vectors, $\lambda_{\min}$ is fixed. For all $k \in [K]$, we set the variances to be
\begin{align}
 \sigma_k^2 & = \frac{\lambda_{\min} \zeta(w_{\min}-\varepsilon)}{4(K-1)}, \quad \mbox{corr.\ to}\quad \frac{\delta_0}{\zeta(w_{\min})} \approx \frac{1}{4}, \quad\mbox{or} \label{eq: K2_well_sep} \\
 \sigma_k^2  &= \frac{\lambda_{\min} \zeta(w_{\min}-\varepsilon)}{K-1}, \quad\mbox{corr.\ to}\quad\frac{\delta_0}{\zeta(w_{\min})}\approx 1, \label{eq: K2_mod_sep}
\end{align}
where $\varepsilon = 10^{-6}$. In   all figures, left and right plots correspond to  \eqref{eq: K2_well_sep} and~\eqref{eq: K2_mod_sep} respectively.

We observe from Figure~\ref{fig: visualization} that for \eqref{eq: K2_well_sep}, the clusters are well-separated, while for \eqref{eq: K2_mod_sep}, the clusters are moderately well-separated. For both cases, the separability assumption \eqref{eq: main_condition} is satisfied. Similar to that in Boutsidis {\em et al.} \cite{boutsidis2015},  we use the command \texttt{kmeans(V', K, `Replicates', 10, `MaxIter', 1000)} in Matlab to obtain an approximately-optimal clustering of $\bV$. Here \texttt{V'} represents the transpose of $\bV$. This command means that we run $k$-means for 10 times with distinct initializations and pick the best outcome. For each run, the maximal number of iterations is set to be 1000. Define $\mathrm{d_{org}}:=\mathrm{d}_{\mathrm{ME}}(\mathscr{C},\mathscr{C}^{\mathrm{opt}})$ and define the (expected) upper bound for $\mathrm{d}_{\mathrm{ME}}(\mathscr{C},\mathscr{C}^{\mathrm{opt}})$ as $\mathrm{\bar{d}_{org}}:=\tau(\delta_0)w_{\max}$ (provided by Theorem~\ref{thm: original}). Similarly, we define $\mathrm{d_{pca}}:=\mathrm{d}_{\mathrm{ME}}(\mathscr{C},\tilde{\mathscr{C}}^{\mathrm{opt}})$ and the (expected) upper bound for $\mathrm{d}_{\mathrm{ME}}(\mathscr{C},\tilde{\mathscr{C}}^{\mathrm{opt}})$ is defined as $\mathrm{\bar{d}_{pca}}:=\tau(\delta_1)w_{\max}$ (given by Theorem~\ref{thm: afterPCA}). We use a superscript ``$\mathrm{emp}$'' to represent the corresponding empirical value. For example, $\delta_0^{\mathrm{emp}}:=\frac{\mathcal{D}(\bV, \mathscr{C})-\mathcal{D}^*(\bV)}{\lambda_{K-1}(\bS)-\lambda_{K}(\bS)}$ is an approximation of $\delta_0$ (calculated from the samples), and $\mathrm{\bar{d}_{org}^{emp}}:=\tau(\delta_0^{\mathrm{emp}})p_{\max}$ is an approximation of $\mathrm{\bar{d}_{org}}$, where $p_{\max}:=\max_k \frac{1}{N}|\mathscr{C}_k|$.  

Our numerical results are reported in Figure~\ref{fig: bounds}. We observe that the empirical values of upper bounds are close to the corresponding expected upper bounds. This observation verifies the correctness of the probabilistic estimates. For the well-separated case in \eqref{eq: K2_well_sep}, we observe that the upper bounds for ME distance are small compared to the moderately well-separated case in~\eqref{eq: K2_mod_sep}. For the former, the true distances $\mathrm{d_{org}}$ and $\mathrm{d_{pca}}$ are both close to $0$, even when the number of samples is 1000 (a small number in this scenario). The $k$-means algorithm can easily find an approximately-optimal clustering, which is also the approximately-correct clustering. For the moderately well-separated case, we observe that the upper bounds given in Theorem~\ref{thm: original} and Theorem~\ref{thm: afterPCA} are informative. In particular, they are only approximately $2.5$ times the corresponding true distances. 

From Figure~\ref{fig: runtime}, we observe that performing $k$-means for the original (high-dimensional) datasets is significantly slower than performing $k$-means for the corresponding post-PCA datasets (the reported running times for post-PCA datasets are the sums  of the running times for performing PCA and for performing $k$-means on the post-PCA datasets) when the number of samples is large. This difference is more pronounced for the   moderately well-separated case. For this case and $N=10000$, we have an  {\em order of magnitude speed up}. The running time for larger $N$ can be less than the running time for smaller $N$ because the number of iterations for $k$-means are possibly different.  All the results are averaged over $10$ runs. All experiments we run on an Intel Core i7 CPU at 2.50GHz and 16GB of memory, and the Matlab version is 8.3.0.532 (R2014a).

\begin{figure*}[t]
\centering
\begin{tabular}{cc}
\subfloat{\includegraphics[width=0.5\columnwidth]{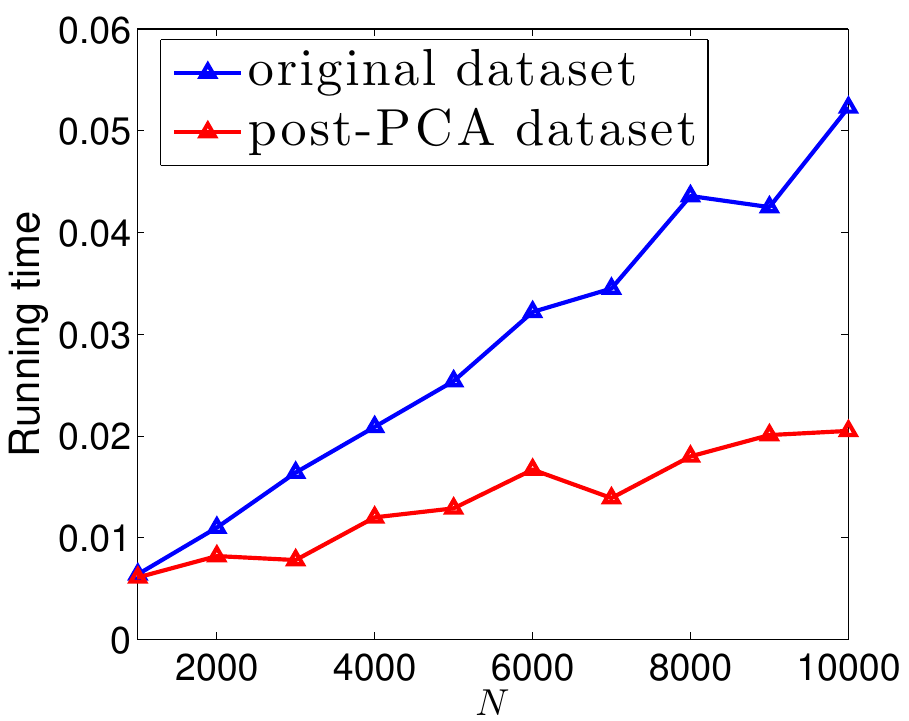}}  &\hspace{.1in}
\subfloat{\includegraphics[width=.475\columnwidth]{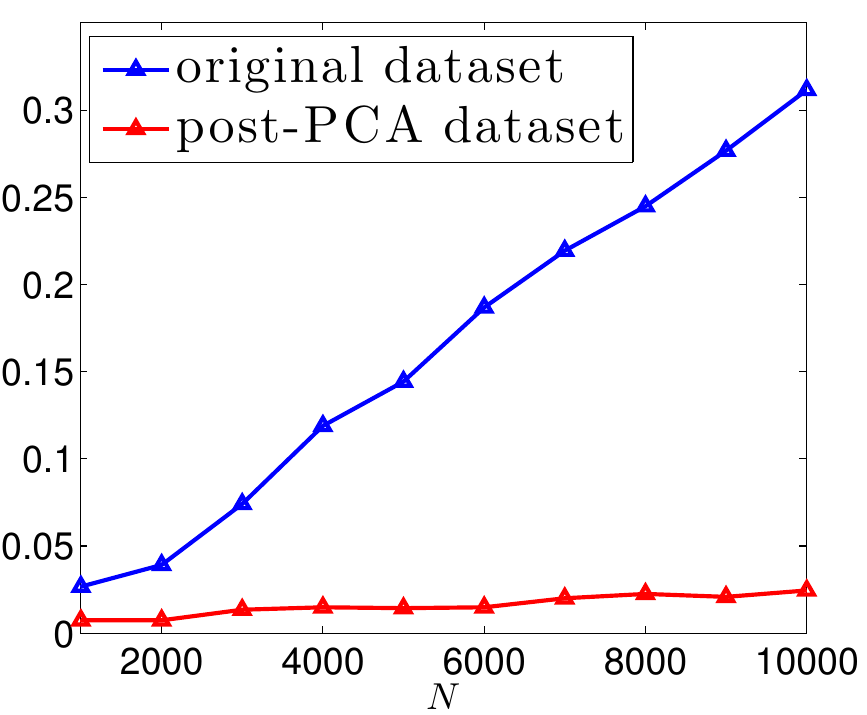}}  
\end{tabular}
\caption{Comparisons of running times in seconds. Notice the tremendous speed-up of clustering on the post-PCA dataset compared to the original one.}\label{fig: runtime}
\end{figure*}
\subsection{Useful Definitions and Lemmas} \label{sec: def_lem} 
First, we present the following lemma from~Golub and Van Loan \cite{golub2012} that provides an upper bound for perturbation of eigenvalues when the matrix is perturbed.
\begin{lemma}
\label{lem: svd_pert}
If $\bA$ and $\bA+\bE$ are in $\mathbb{R}^{M\times M}$,  then
\begin{equation}
  |\lambda_{m}(\bA+\bE)-\lambda_{m}(\bA)| \leq \|\bE\|_{2}
\end{equation}
for any $m\in [M]$ with $\lambda_{m}(\bA)$ being the $m$-th largest eigenvalue of $\bA$.
\end{lemma}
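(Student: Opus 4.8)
The plan is to derive this as the classical Weyl perturbation inequality via the Courant--Fischer min--max characterization of eigenvalues. (Following the cited reference~\cite{golub2012}, the statement is to be read for \emph{symmetric} matrices, so that the ordering $\lambda_1(\bA)\ge\lambda_2(\bA)\ge\cdots\ge\lambda_M(\bA)$ is well-defined; in every application in this paper $\bA$ and $\bE$ are symmetric.) Recall that for a symmetric $\bA\in\mathbb{R}^{M\times M}$ and any $m\in[M]$,
\begin{equation}
 \lambda_m(\bA)=\max_{\substack{S\subseteq\mathbb{R}^M\\ \dim S=m}}\ \min_{\substack{\bm{x}\in S\\ \|\bm{x}\|_2=1}} \bm{x}^T\bA\bm{x}.
\end{equation}

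First I would fix an arbitrary $m$-dimensional subspace $S$ and a unit vector $\bm{x}\in S$, and bound the quadratic form of the perturbed matrix using the elementary estimate $|\bm{x}^T\bE\bm{x}|\le\|\bm{x}\|_2\,\|\bE\bm{x}\|_2\le\|\bE\|_2$:
\begin{equation}
 \bm{x}^T(\bA+\bE)\bm{x}=\bm{x}^T\bA\bm{x}+\bm{x}^T\bE\bm{x}\le\bm{x}^T\bA\bm{x}+\|\bE\|_2.
\end{equation}
Taking the minimum over unit $\bm{x}\in S$ and then the maximum over all $m$-dimensional subspaces $S$, and applying the min--max formula above to both $\bA+\bE$ and $\bA$, yields $\lambda_m(\bA+\bE)\le\lambda_m(\bA)+\|\bE\|_2$. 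Next I would run the same argument with the roles of $\bA$ and $\bA+\bE$ swapped --- equivalently, apply the bound just obtained to the matrix $\bA+\bE$ perturbed by $-\bE$, using $\|-\bE\|_2=\|\bE\|_2$ --- to get $\lambda_m(\bA)\le\lambda_m(\bA+\bE)+\|\bE\|_2$. Combining the two one-sided bounds gives $|\lambda_m(\bA+\bE)-\lambda_m(\bA)|\le\|\bE\|_2$, which is the claim.

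There is no substantive obstacle here: the result is standard and the whole argument rests on the min--max theorem plus the one-line bound $|\bm{x}^T\bE\bm{x}|\le\|\bE\|_2$ for unit vectors. The only point that needs a word of care is the implicit symmetry assumption, which is precisely what makes the sorted spectrum --- and hence the min--max characterization --- meaningful. If one prefers to avoid quoting Courant--Fischer, an equally short alternative is a dimension-counting argument: the top-$m$ invariant subspace of $\bA+\bE$ (of dimension $m$) and the span of the eigenvectors of $\bA$ associated with $\lambda_m(\bA),\ldots,\lambda_M(\bA)$ (of dimension $M-m+1$) must intersect nontrivially, and evaluating the Rayleigh quotients of $\bA$ and of $\bA+\bE$ at a common nonzero vector in that intersection produces the same inequality.
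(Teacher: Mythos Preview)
Your proof is correct and is exactly the standard Weyl argument via the Courant--Fischer min--max theorem (including the symmetry caveat). The paper itself does not prove this lemma at all: it simply quotes it from~\cite{golub2012}, so there is no ``paper's proof'' to compare against --- your write-up is precisely what one finds in that reference.
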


Because we will make use of the second-order moments of a mixture model, we present a simple lemma summarizing key facts.
\begin{lemma}\label{lem: moments_GMM}
 Let $\bx$ be a random  sample from a $K$-component mixture model with parameters $\{ (w_k,\bu_k,\mathbf{\Sigma}_k \}_{k\in [K]}$. Then,
\begin{align}
 \mathbb{E}\left(\bx\bx^T\right) &=\sum_{k=1}^{K}  w_k\left( \bu_k\bu_k^T +   \mathbf{\Sigma}_k\right)=\mathbf{\Sigma},
 \end{align}
 and
 \begin{align}
 & \mathbb{E}\left((\bx-\bar{\bu})(\bx-\bar{\bu})^T \right) =\sum_{k=1}^{K}  w_k \left( (\bu_k-\bar{\bu})(\bu_k-\bar{\bu})^T +  \mathbf{\Sigma}_k\right)=\bar{\mathbf{\Sigma}}.
\end{align} 
\end{lemma}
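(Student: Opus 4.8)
The plan is to prove both identities by conditioning on the (latent) component label and invoking the law of total expectation. Introduce a random variable $Z \in [K]$ with $\Pr(Z = k) = w_k$ that records which of the $K$ components generated $\bx$; by the definition of a mixture model, conditioned on the event $\{Z = k\}$ the sample $\bx$ has mean $\bu_k$ and covariance matrix $\mathbf{\Sigma}_k$. Everything then reduces to the elementary decomposition of a second moment into a covariance term plus the outer product of the mean.

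First I would establish the first identity. Conditioned on $\{Z = k\}$, the identity $\mathbb{E}\big((\bx - \bu_k)(\bx - \bu_k)^T \,\big|\, Z = k\big) = \mathbf{\Sigma}_k$ expands to $\mathbb{E}(\bx\bx^T \mid Z = k) = \mathbf{\Sigma}_k + \bu_k\bu_k^T$. Taking the expectation over $Z$ and using the tower property,
\begin{equation}
\mathbb{E}(\bx\bx^T) = \sum_{k=1}^{K} \Pr(Z = k)\, \mathbb{E}(\bx\bx^T \mid Z = k) = \sum_{k=1}^{K} w_k\big(\bu_k\bu_k^T + \mathbf{\Sigma}_k\big),
\end{equation}
which is exactly $\mathbf{\Sigma}$ by its definition. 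For the second identity, the same conditioning applied to $\bx$ itself gives $\mathbb{E}(\bx) = \sum_{k} w_k \bu_k = \bar{\bu}$, so $\bar{\bu}$ is indeed the mixture mean. One may then either expand $(\bx - \bar{\bu})(\bx - \bar{\bu})^T = \bx\bx^T - \bx\bar{\bu}^T - \bar{\bu}\bx^T + \bar{\bu}\bar{\bu}^T$ and apply linearity of expectation with the first identity and $\mathbb{E}(\bx) = \bar{\bu}$, or, more directly, condition on $\{Z = k\}$ once more to obtain $\mathbb{E}\big((\bx - \bar{\bu})(\bx - \bar{\bu})^T \mid Z = k\big) = \mathbf{\Sigma}_k + (\bu_k - \bar{\bu})(\bu_k - \bar{\bu})^T$ and average against $w_k$. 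Either route yields $\bar{\mathbf{\Sigma}}$.

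There is essentially no obstacle: the statement concerns population moments, so no concentration or high-probability arguments are needed, and the content is just the law of total expectation together with a bias--variance-type decomposition. The only point that deserves a line of justification is that the symbol $\bar{\bu}$ appearing in the statement coincides with $\mathbb{E}(\bx)$, which is immediate from the first conditioning step. I would present the argument in this order and keep it to a few lines.
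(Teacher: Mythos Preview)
Your proposal is correct and is the standard argument via the tower property. The paper itself states this lemma without proof, describing it only as ``a simple lemma summarizing key facts,'' so there is no paper proof to compare against; your write-up would serve as a complete justification.
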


To apply Corollary~\ref{coro: main}, we need to ensure that $\lambda_{K-1}(\bS)-\lambda_{K}(\bS)$ which appears in the denominators of the expressions in~\eqref{eq: delta} is positive. Note that if we assume all the columns of data matrix $\bV$ are independently generated from a $K$-component spherical GMM, we have $\frac{1}{N}\left(\lambda_{K-1}(\bS)-\lambda_{K}(\bS)\right) \xrightarrow{\rmp} \lambda_{K-1}(\bar{\mathbf{\Sigma}}_0)$, where $\xrightarrow{\rmp}$ represents convergence in probability as $N\to\infty$. Under the non-degeneracy condition, $\lambda_K (\mathbf{\Sigma}_0) >0$. In addition, by the observation that $\bar{\mathbf{\Sigma}}_0 = \mathbf{\Sigma}_0 - \bar{\bu}\bar{\bu}^T$ and the following lemma in~Golub and Van Loan \cite{golub2012}, we have $\lambda_{\min}=\lambda_{K-1}(\bar{\mathbf{\Sigma}}_0) \ge \lambda_K (\mathbf{\Sigma}_0) >0$.  
\begin{lemma} \label{lem: rank1pert}
 Suppose $\bB=\bA+\tau \bv\bv^T$ where $\bA \in \mathbb{R}^{n\times n}$ is symmetric, $\bv$ has unit $2$-norm (i.e., $\|\bv\|_2=1$) and $\tau \in \mathbb{R}$.  Then,
\begin{equation}
  \lambda_i (\bB) \in\left\{  
  \begin{array}{cl}
   \left[\lambda_{i} (\bA),\lambda_{i-1} (\bA) \right] &\mbox{if  }\;\; \tau\ge 0,\, 2 \le i \le n\\
 \left[ \lambda_{i+1}(\bA) ,\lambda_{i}(\bA)  \right]  &\mbox{if  }\;\;  \tau\le 0,\, i \in [n-1] 
  \end{array}
  \right.   . \label{eqn:tau}
\end{equation} 
\end{lemma}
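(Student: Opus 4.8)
The plan is to prove the inclusions asserted in \eqref{eqn:tau} using the Courant–Fischer min-max characterization of eigenvalues, and to deduce the case $\tau \le 0$ from the case $\tau \ge 0$ by passing to $-\bB$. Throughout, let $\lambda_1(\bA) \ge \lambda_2(\bA) \ge \cdots \ge \lambda_n(\bA)$ be the eigenvalues of the symmetric matrix $\bA$, fix an associated orthonormal eigenbasis $\bu_1,\ldots,\bu_n$, and recall that for any symmetric $\bB \in \mathbb{R}^{n\times n}$ one has $\lambda_i(\bB) = \min_{\dim\mathcal{S} = n-i+1}\ \max_{\bx\in\mathcal{S},\,\|\bx\|_2 = 1}\bx^T\bB\bx$, the minimum being over all subspaces $\mathcal{S}$ of $\mathbb{R}^n$ of the indicated dimension; in particular, for any subspace $\mathcal{S}$ with $\dim\mathcal{S} \ge n-i+1$ we get $\lambda_i(\bB) \le \max_{\bx\in\mathcal{S},\,\|\bx\|_2 = 1}\bx^T\bB\bx$.

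First I would treat the case $\tau \ge 0$. Since $\bx^T\bB\bx = \bx^T\bA\bx + \tau(\bv^T\bx)^2 \ge \bx^T\bA\bx$ for every $\bx$, the min-max formula immediately gives $\lambda_i(\bB) \ge \lambda_i(\bA)$ for all $i \in [n]$, which is the lower bound in the first line of \eqref{eqn:tau}. For the matching upper bound $\lambda_i(\bB) \le \lambda_{i-1}(\bA)$ when $2 \le i \le n$, set $\mathcal{S}_0 := \mathrm{span}(\bu_{i-1},\bu_i,\ldots,\bu_n)$, so that $\dim\mathcal{S}_0 = n-i+2$, and put $\mathcal{S} := \mathcal{S}_0 \cap \{\bv\}^\perp$, which satisfies $\dim\mathcal{S} \ge \dim\mathcal{S}_0 - 1 = n-i+1$. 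For a unit vector $\bx \in \mathcal{S}$ we have $\bv^T\bx = 0$, hence $\bx^T\bB\bx = \bx^T\bA\bx \le \lambda_{i-1}(\bA)$, the last step because $\bx$ lies in the span of eigenvectors of $\bA$ with eigenvalue at most $\lambda_{i-1}(\bA)$. Applying the min-max bound with this $\mathcal{S}$ yields $\lambda_i(\bB) \le \lambda_{i-1}(\bA)$, which completes the case $\tau \ge 0$.

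For $\tau \le 0$, I would apply the case just established to $-\bB = (-\bA) + (-\tau)\bv\bv^T$, where now $-\tau \ge 0$, obtaining $\lambda_i(-\bB) \in [\lambda_i(-\bA),\lambda_{i-1}(-\bA)]$ for $2 \le i \le n$. Using $\lambda_j(-\bX) = -\lambda_{n+1-j}(\bX)$ for symmetric $\bX$ and substituting $j = n+1-i$ (so $j$ ranges over $[n-1]$ as $i$ ranges over $\{2,\ldots,n\}$), this becomes $\lambda_j(\bB) \in [\lambda_{j+1}(\bA),\lambda_j(\bA)]$, which is exactly the second line of \eqref{eqn:tau}.

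I do not expect a genuine obstacle here: the argument is elementary, and the only mildly delicate points are the dimension count $\dim(\mathcal{S}_0 \cap \{\bv\}^\perp) \ge \dim\mathcal{S}_0 - 1$ (the standard estimate for intersecting with a hyperplane) and the index bookkeeping in the reduction to $\tau \ge 0$. Should one prefer to avoid min-max arguments, an alternative is to first assume $\bA$ has simple eigenvalues and $\bv^T\bu_j \ne 0$ for all $j$, in which case the eigenvalues of $\bB$ are precisely the roots of the secular equation $1 + \tau\sum_{j=1}^{n}(\bv^T\bu_j)^2/(\lambda_j(\bA) - \mu) = 0$; the monotonicity and sign pattern of this rational function on the intervals cut out by the $\lambda_j(\bA)$ give the interlacing directly, and the general case follows by a routine continuity argument. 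This route is more computational but equally elementary.
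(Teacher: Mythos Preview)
Your proof is correct. The paper does not actually prove this lemma; it simply cites it as a standard result from \cite{golub2012} (Golub and Van Loan). Your Courant--Fischer argument is the standard textbook proof of this rank-one interlacing result, so there is nothing to compare against and nothing to fix.
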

Furthermore, in order to obtain our probabilistic estimates, we need to make use of the following concentration bounds for sub-Gaussian and sub-Exponential random variables. The definitions and relevant lemmas are extracted from~Vershynin~\cite{vershynin2010}.

\begin{lemma} (Hoeffding-type inequality)  
\label{lem:large_dev_Gaussian}
A {\em sub-Gaussian random variable} $X$ is one such that $\left(\mathbb{E}|X|^{p}\right)^{1/p} \leq C  \sqrt{p}$ for some $C>0$ and for all $p\geq 1$.   Let $X_{1}, \ldots , X_{N}$ be independent zero-mean sub-Gaussian random variables, then for every $\ba=[a_1,a_2,\ldots,a_N]^T \in \mathbb{R}^N$ and every $t\ge 0$, it holds that
\begin{equation}
\mathbb{P}\left( \Big|\sum_{i=1}^{N}a_i X_{i}\Big| \ge t\right) \le   \exp\left(1-\frac{ct^2}{\|\ba\|_2^2}\right),
\end{equation}
where $c>0$ is a constant.
\end{lemma}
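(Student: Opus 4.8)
The plan is to reduce the moment hypothesis to a sub-Gaussian bound on the moment generating function (MGF) and then run the standard Chernoff argument. The substantive first step is to show that if $X$ satisfies $(\mathbb{E}|X|^p)^{1/p}\le C\sqrt p$ for all $p\ge 1$ and $\mathbb{E}X=0$, then there is a constant $K=K(C)>0$, independent of the distribution of $X$, such that $\mathbb{E}\exp(sX)\le\exp(K^2 s^2)$ for every $s\in\mathbb{R}$. For $|s|$ small this follows from the power-series expansion
\begin{equation*}
\mathbb{E}\exp(sX)=1+s\,\mathbb{E}X+\sum_{p\ge 2}\frac{s^p\,\mathbb{E}X^p}{p!},
\end{equation*}
where the linear term vanishes because $\mathbb{E}X=0$, the bound $|\mathbb{E}X^p|\le\mathbb{E}|X|^p\le(C\sqrt p)^p$ is used termwise, and Stirling's inequality $p!\ge(p/e)^p$ shows that the tail is dominated by $\sum_{p\ge 2}(eC^2 s^2/p)^{p/2}=O(s^2)$, hence $\mathbb{E}\exp(sX)\le 1+O(s^2)\le\exp(K^2 s^2)$ for $|s|$ below a threshold depending only on $C$; for large $|s|$ one extends the bound by combining the elementary inequality $e^x\le x+e^{x^2}$ with the moment control through a truncation/interpolation estimate. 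This is precisely where the zero-mean hypothesis is needed.

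Given the MGF bound, the remainder is routine. By independence the MGF of the weighted sum factorizes: for every $s\in\mathbb{R}$,
\begin{equation*}
\mathbb{E}\exp\Big(s\sum_{i=1}^N a_iX_i\Big)=\prod_{i=1}^N\mathbb{E}\exp(sa_iX_i)\le\prod_{i=1}^N\exp(K^2 s^2 a_i^2)=\exp\big(K^2 s^2\|\ba\|_2^2\big).
\end{equation*}
Applying Markov's inequality to $\exp(s\sum_i a_iX_i)$ gives, for any $s>0$,
\begin{equation*}
\mathbb{P}\Big(\sum_{i=1}^N a_iX_i\ge t\Big)\le e^{-st}\,\mathbb{E}\exp\Big(s\sum_i a_iX_i\Big)\le\exp\big(K^2 s^2\|\ba\|_2^2-st\big),
\end{equation*}
and the choice $s=t/(2K^2\|\ba\|_2^2)$ optimizes the right side to $\exp\big(-t^2/(4K^2\|\ba\|_2^2)\big)$. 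The same estimate applied to $-\sum_i a_iX_i$, together with a union bound, contributes a factor $2\le e=\exp(1)$, so
\begin{equation*}
\mathbb{P}\Big(\Big|\sum_{i=1}^N a_iX_i\Big|\ge t\Big)\le 2\exp\Big(-\frac{t^2}{4K^2\|\ba\|_2^2}\Big)\le\exp\Big(1-\frac{ct^2}{\|\ba\|_2^2}\Big)
\end{equation*}
with $c=1/(4K^2)$, which is the claimed inequality.

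I expect the first step — extracting a distribution-uniform sub-Gaussian MGF bound from the moment-growth condition — to be the only real obstacle; it is where the hypothesis $\mathbb{E}X_i=0$ enters (otherwise the term $s\,\mathbb{E}X$ would destroy the quadratic-in-$s$ behaviour near $s=0$), and some bookkeeping is required to keep the constant $K$ depending only on $C$. Everything downstream is the standard Chernoff computation. Since Lemma~\ref{lem:large_dev_Gaussian} is quoted from \cite{vershynin2010}, a legitimate alternative is simply to invoke the equivalence of the various sub-Gaussian characterizations and the rotation-invariance/tail bounds established there.
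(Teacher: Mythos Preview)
The paper does not supply its own proof of this lemma: it is stated as a known result extracted from \cite{vershynin2010}, with no argument given. Your sketch is correct and is precisely the standard proof one finds in that reference---pass from the moment-growth condition to a uniform sub-Gaussian MGF bound (using centering and Stirling), factor by independence, and optimize the Chernoff bound---so there is nothing to compare against, and your closing remark that one may simply invoke \cite{vershynin2010} is exactly what the paper does.
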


Typical examples of sub-Gaussian random variables are Gaussian, Bernoulli, and all bounded random variables. A random vector $\bx\in \mathbb{R}^{F}$ is called sub-Gaussian if $\bx^{T}\bz$ is a sub-Gaussian random variable for any deterministic vector $\bz\in \mathbb{R}^{F}$.

\begin{lemma} (Bernstein-type inequality)
\label{lem:large_dev}
A {\em sub-Exponential random variable} $X$ is one such that  $\left(\mathbb{E}|X|^{p}\right)^{1/p} \leq C  p$  for some $C>0$  and for all $p\geq 1$. Let $X_{1}, \ldots , X_{N}$ be independent zero-mean sub-Exponential random variables. 
It holds that
\begin{equation}
 \mathbb{P}\bigg( \Big|\sum_{i=1}^{N}X_{i}\Big|\ge \epsilon N\bigg)  \leq 2  \exp \left(-c \cdot \mathrm{min}\Big(\frac{\epsilon^{2}}{M^{2}},\frac{\epsilon}{M}\Big)N\right), \label{eqn:subexp}
\end{equation}
where $c>0$ is an absolute constant  and $M>0$ is the maximum of the sub-Exponential norms\footnote{The {\em sub-Exponential norm} of a sub-Exponential random variable $X$ is defined as $\|X \|_{\Psi_1} :=\sup_{p \ge 1} p^{-1}\left(\bbE|X|^p\right)^{1/p}$. }   of $\{X_i\}_{i=1}^N$,  i.e., $M =\max_{i\in [N]}  \|X_i \|_{\Psi_1}$.
\end{lemma}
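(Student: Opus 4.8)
The plan is to run the standard Chernoff / exponential-moment argument; the only step with genuine content is controlling the moment generating function (MGF) of a zero-mean sub-Exponential random variable, and everything downstream is the routine Bernstein optimization.

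First I would convert the moment condition into an MGF bound for each summand. By the definition of the sub-Exponential norm, $\left(\mathbb{E}|X_i|^p\right)^{1/p} \le \|X_i\|_{\Psi_1}\, p \le M p$ for every $p \ge 1$. Expanding $\exp(\lambda X_i)$ as a power series and using $\mathbb{E} X_i = 0$ to annihilate the linear term gives $\mathbb{E}\exp(\lambda X_i) = 1 + \sum_{p \ge 2} \lambda^p \mathbb{E}[X_i^p]/p!$. Bounding $|\mathbb{E}[X_i^p]| \le \mathbb{E}|X_i|^p \le (Mp)^p$ together with $p! \ge (p/e)^p$ shows that the $p$-th term is at most $(eM|\lambda|)^p$; summing the geometric tail over $p \ge 2$ under the restriction $|\lambda| \le 1/(2eM)$ yields $\mathbb{E}\exp(\lambda X_i) \le 1 + C_1 M^2 \lambda^2 \le \exp\left(C_1 M^2 \lambda^2\right)$ for an absolute constant $C_1$ and all $|\lambda| \le c_0/M$, where $c_0 := 1/(2e)$.

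Next I would apply the exponential Markov inequality and independence: for $0 < \lambda \le c_0/M$,
\[
 \mathbb{P}\Big(\textstyle\sum_{i=1}^{N} X_i \ge \epsilon N\Big) \le e^{-\lambda \epsilon N}\,\mathbb{E}\exp\Big(\lambda\textstyle\sum_{i=1}^{N} X_i\Big) = e^{-\lambda \epsilon N}\prod_{i=1}^{N}\mathbb{E}\exp(\lambda X_i) \le \exp\!\left(-\lambda \epsilon N + C_1 M^2 \lambda^2 N\right).
\]
Then I would optimize over the admissible $\lambda$. The unconstrained minimizer of $-\lambda\epsilon + C_1 M^2 \lambda^2$ is $\lambda^\star = \epsilon/(2 C_1 M^2)$, with value $-\epsilon^2/(4 C_1 M^2)$; if $\lambda^\star \le c_0/M$ (i.e.\ $\epsilon$ is at most a fixed constant multiple of $M$) this choice is feasible and produces the bound $\exp(-c\,\epsilon^2 N/M^2)$. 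In the complementary regime I take $\lambda = c_0/M$, so that $-\lambda\epsilon N + C_1 M^2\lambda^2 N = -c_0\epsilon N/M + C_1 c_0^2 N \le -\tfrac{1}{2}c_0\, \epsilon N/M$ precisely because $\epsilon/M > 2 C_1 c_0$ in this regime, giving $\exp(-c\,\epsilon N/M)$. The two cases merge into $\exp\!\left(-c\,\min(\epsilon^2/M^2,\epsilon/M)\,N\right)$; applying the identical estimate to $-X_1,\ldots,-X_N$ and taking a union bound over the two one-sided tails supplies the leading factor $2$.

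The main (indeed essentially the only) obstacle is the MGF bound for zero-mean sub-Exponential variables — in particular being careful that $\lambda$ must be kept below $c_0/M$, which is exactly what forces the two-regime $\min(\epsilon^2/M^2,\epsilon/M)$ structure of the exponent. It is also worth noting that the constant $c$ and the scale $M$ enter only through the scale-invariant ratio $\epsilon/M$, so after rescaling $X_i \mapsto X_i/M$ one may simply assume $M = 1$ throughout, which streamlines the bookkeeping.
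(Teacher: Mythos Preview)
Your argument is correct and is precisely the standard Chernoff--MGF proof of the Bernstein inequality for sub-Exponential variables. The paper itself does not prove this lemma; it quotes it as a known result from \cite{vershynin2010}, and your write-up is essentially the proof given there, so there is nothing to compare.
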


The set of sub-Exponential random variables includes those that have tails heavier than Gaussian. It is easy to see that a sub-Gaussian random variable is also sub-Exponential. The following lemma, which can be found in~Vershynin \cite{vershynin2010}, is straightforward.  
\begin{lemma}  \label{lem:gau_exp}
A random variable $X$ is sub-Gaussian if and only if  $X^2$ is sub-Exponential.
\end{lemma}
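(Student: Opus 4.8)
The plan is to argue directly from the two moment characterizations stated just above: $X$ is sub-Gaussian iff $(\mathbb{E}|X|^p)^{1/p}\le C_1\sqrt p$ for all $p\ge 1$, and a random variable $Y$ is sub-Exponential iff $(\mathbb{E}|Y|^p)^{1/p}\le C_2\, p$ for all $p\ge 1$. Setting $Y=X^2$, the key identity is $\mathbb{E}|X^2|^p=\mathbb{E}|X|^{2p}$, so that $(\mathbb{E}|X^2|^p)^{1/p}=\big((\mathbb{E}|X|^{2p})^{1/(2p)}\big)^2$. Everything reduces to translating a bound on the $2p$-th moment of $X$ into a bound on the $p$-th moment of $X^2$ and back.

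For the forward direction, suppose $X$ is sub-Gaussian with constant $C_1$. For every $p\ge 1$ we also have $2p\ge 1$, hence $(\mathbb{E}|X|^{2p})^{1/(2p)}\le C_1\sqrt{2p}$, and squaring gives $(\mathbb{E}|X^2|^p)^{1/p}\le 2C_1^2\, p$. Thus $X^2$ is sub-Exponential with constant $2C_1^2$.

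For the converse, suppose $X^2$ is sub-Exponential with constant $C_2$. For $p\ge 1$ this yields $(\mathbb{E}|X|^{2p})^{1/(2p)}=\big((\mathbb{E}|X^2|^p)^{1/p}\big)^{1/2}\le \sqrt{C_2}\,\sqrt p$, which controls $(\mathbb{E}|X|^q)^{1/q}$ for all exponents of the form $q=2p\ge 2$. The only gap is the range $q\in[1,2)$; there we invoke monotonicity of $L^q$-norms (Lyapunov's inequality), namely $(\mathbb{E}|X|^q)^{1/q}\le(\mathbb{E}|X|^2)^{1/2}\le\sqrt{2C_2}\le\sqrt{2C_2}\,\sqrt q$, using $\sqrt q\ge 1$. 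Combining the two ranges, $(\mathbb{E}|X|^q)^{1/q}\le\sqrt{2C_2}\,\sqrt q$ for all $q\ge 1$, so $X$ is sub-Gaussian.

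There is essentially no hard step here: the whole argument is the substitution $p\mapsto 2p$ together with taking square roots. The only point requiring a moment's care — and the closest thing to an ``obstacle'' — is that the exponent $p$ in these definitions ranges over all reals $p\ge 1$ rather than over integers, so the set $\{2p:p\ge 1\}$ does not exhaust $[1,\infty)$; the short Lyapunov-inequality argument above patches the leftover interval $[1,2)$. One could alternatively route both directions through the equivalent tail/MGF characterizations of sub-Gaussian and sub-Exponential variables, but the moment route is the most economical given what has already been introduced in the excerpt.
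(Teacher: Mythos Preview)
Your proof is correct. The paper does not actually supply its own proof of this lemma; it merely states it, notes that it ``can be found in \cite{vershynin2010}'', and calls it ``straightforward.'' Your argument is precisely the standard moment-substitution proof (and is essentially the one Vershynin gives): use $\mathbb{E}|X^2|^p=\mathbb{E}|X|^{2p}$ to pass between the two moment-growth conditions, with Lyapunov's inequality to cover the exponents $q\in[1,2)$ in the converse direction. There is nothing to add or correct.
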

Using this lemma, we see that Lemma~\ref{lem:large_dev} also provides a concentration bound for the sum of the squares of sub-Gaussian random variables. Finally, we can estimate empirical covariance matrices by the following lemma.  Note that in this lemma, we do not need to assume that the sub-Gaussian distribution $G$ in $\mathbb{R}^{F}$ has zero mean.
\begin{lemma} (Covariance estimation of sub-Gaussian distributions)  \label{lem:cov_est}
Consider a sub-Gaussian distribution $G$ in $\mathbb{R}^{F}$ with covariance matrix $\mathbf{\Sigma}$.  Define the empirical covariance matrix  $\bfSigma_{N}:=\frac{1}{N}\sum_{n=1}^{N}\bv_{n}\bv_{n}^{T}$ where each $\bv_{n}$ is an independent sample of $G$.  Let $\epsilon \in (0,1)$ and $ t\geq 1$.  If $N\geq C(t/\epsilon)^{2}F$ for some constant $C>0$, then with probability at least $1-2\exp(-t^{2}F)$,
\begin{equation} 
  \|\bfSigma_{N}-\bfSigma\|_{2} \leq \epsilon .
\end{equation}
\end{lemma}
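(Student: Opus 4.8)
The plan is to prove the bound by the standard $\epsilon$-net argument for the spectral norm of a random symmetric matrix, which reduces the problem to a one-dimensional concentration estimate that is then supplied by the Bernstein-type inequality of Lemma~\ref{lem:large_dev}. Write $\mathbf{\Sigma}:=\mathbb{E}(\bv\bv^T)$ for the second-moment matrix of $G$ (this is the matrix denoted $\mathbf{\Sigma}$ in the statement; it coincides with the covariance matrix when $G$ is centered, which is why no zero-mean assumption is needed) and $\mathbf{\Sigma}_N:=\frac1N\sum_{n=1}^{N}\bv_n\bv_n^T$. Both matrices are symmetric, so
\[
\|\mathbf{\Sigma}_N-\mathbf{\Sigma}\|_2=\sup_{\bx\in S^{F-1}}\big|\bx^T(\mathbf{\Sigma}_N-\mathbf{\Sigma})\bx\big|,\qquad S^{F-1}:=\{\bx\in\mathbb{R}^F:\|\bx\|_2=1\}.
\]
First I would fix a $\tfrac14$-net $\mathcal{N}$ of $S^{F-1}$ with $|\mathcal{N}|\le 9^{F}$ (a standard volumetric bound) and invoke the elementary approximation inequality $\|\mathbf{\Sigma}_N-\mathbf{\Sigma}\|_2\le 2\max_{\bx\in\mathcal{N}}|\bx^T(\mathbf{\Sigma}_N-\mathbf{\Sigma})\bx|$, valid for any symmetric matrix. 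It then suffices to control the deviation for a single fixed direction and take a union bound over $\mathcal{N}$.

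For a fixed $\bx\in S^{F-1}$, write $\bx^T(\mathbf{\Sigma}_N-\mathbf{\Sigma})\bx=\frac1N\sum_{n=1}^{N}\big(Y_n-\mathbb{E}Y_n\big)$ with $Y_n:=(\bv_n^T\bx)^2$. By the definition of a sub-Gaussian random vector, $\bv_n^T\bx$ is a sub-Gaussian random variable, and a standard argument upgrades this to a bound on its sub-Gaussian parameter that is uniform over $\bx\in S^{F-1}$ and depends only on $G$; adding a constant does not affect sub-Gaussianity, so centering $G$ is irrelevant here. Lemma~\ref{lem:gau_exp} then shows $Y_n$, and hence the centered variable $X_n:=Y_n-\mathbb{E}Y_n$, is sub-Exponential with $\|X_n\|_{\Psi_1}\le M$ for a constant $M$ depending only on $G$. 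Applying Lemma~\ref{lem:large_dev} to the i.i.d.\ zero-mean sub-Exponential variables $X_1,\dots,X_N$ at deviation level $\epsilon/2$ gives
\[
\mathbb{P}\Big(\big|\bx^T(\mathbf{\Sigma}_N-\mathbf{\Sigma})\bx\big|\ge\tfrac{\epsilon}{2}\Big)\le 2\exp\!\Big(-c\min\!\Big(\tfrac{\epsilon^2}{4M^2},\tfrac{\epsilon}{2M}\Big)N\Big).
\]
Since $\epsilon<1$, replacing $M$ by $\max(M,1)$ (which only weakens the bound) makes the minimum equal $\epsilon^2/(4M^2)$, so the right-hand side is at most $2\exp(-c'\epsilon^2 N)$ with $c':=c/(4M^2)>0$.

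Combining the net inequality with a union bound over $\mathcal{N}$ yields
\[
\mathbb{P}\big(\|\mathbf{\Sigma}_N-\mathbf{\Sigma}\|_2\ge\epsilon\big)\le 9^{F}\cdot 2\exp(-c'\epsilon^2 N)=2\exp\big(F\log 9-c'\epsilon^2 N\big).
\]
Taking $C:=(1+\log 9)/c'$ and $N\ge C(t/\epsilon)^2 F$, and using $t\ge1$ so that $(1+\log 9)t^2\ge \log 9+t^2$, we get $c'\epsilon^2 N\ge(\log 9+t^2)F$, hence the probability is at most $2\exp(-t^2 F)$, which is the claim. Note that $C$ depends only on $G$ (through $M$), not on $\epsilon,t,F,N$.

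The mechanical parts — the covering-number estimate, the approximation inequality, and the Bernstein application — are routine. The one step requiring genuine care is the \emph{uniform} (over $\bx\in S^{F-1}$) bound on the sub-Exponential norm of $(\bv_n^T\bx)^2-\mathbb{E}(\bv_n^T\bx)^2$ by a constant depending only on $G$; this rests on the fact that the sub-Gaussian parameter of $\bv_n^T\bx$ is uniformly controlled in unit $\bx$ by the definition of a sub-Gaussian random vector, on Lemma~\ref{lem:gau_exp}, and on the observation that centering at most doubles the sub-Exponential norm. It is precisely this uniformity that allows the union bound over the $9^F$-point net to go through with a sample size scaling only as $F/\epsilon^2$.
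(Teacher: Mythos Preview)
Your proposal is correct and follows the standard $\epsilon$-net plus Bernstein argument. The paper does not actually prove this lemma; it is quoted from \cite{vershynin2010} without proof, and your approach is precisely the one used there, so there is nothing to contrast.
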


\subsection{Proof Sketch of Theorem \ref{thm: original} } \label{sec: proof_main_thm} 
The general idea of the proof of Theorem \ref{thm: original} is to first estimate the terms defining $\delta$ in~\eqref{eq: delta} probabilistically. Next, we apply Corollary~\ref{coro: main} to show that any optimal clustering for the original data matrix is close to the correct target clustering corresponding to the spherical GMM. We provide the proof sketch below. Detailed calculations are deferred to Appendix~\ref{prf: thm: original}.
\begin{proof}[Proof Sketch of Theorem~\ref{thm: original}]
We estimate every term in~\eqref{eq: delta} for the correct target clustering $\mathscr{C}$. By Lemmas~\ref{lem:large_dev_Gaussian} and~\ref{lem:large_dev}, we have for any $\epsilon \in (0,1)$,
\begin{align}
 & \mathbb{P}\left(\Big|\frac{1}{N}\mathcal{D}(\bV,\mathscr{C})-F\bar{\sigma}^2 \Big| \ge \frac{\epsilon}{2} \right) \le 2K((e+2)F+2)\mathrm{exp}\left(-C_1 \frac{N \epsilon^2}{F^2 K^2}\right), \label{eqn:calD}
\end{align}
where $C_1>0$ depends on $\{  (w_k,\bu_k,\sigma_k^2)\}_{k\in[ K]}$.  See Appendix~\ref{prf: thm: original} for the detailed calculation of this and other inequalities in this proof sketch. In particular, for the justification of~\eqref{eqn:calD}, see the  steps leading to \eqref{eq: origin_part1}. These simply involve the triangle inequality, the union bound, and careful probabilistic estimates.  
Furthermore, if $N\ge C_2 F^5K^2 t^2/\epsilon^2$,  we have 
\begin{align}
 &\mathbb{P}\left(\Big|\frac{1}{N}\mathcal{D}^{*}(\bV)-(F-K+1)\bar{\sigma}^2 \Big| \ge \frac{\epsilon}{2} \right) \le  (F-K+1)(9FKe+2K)\exp(-t^2 F).  \label{eqn:D_star}
\end{align}
In addition, by Lemma~\ref{lem:cov_est},  for any $t\ge 1$, if $N \ge C_2  {F^3 K^2 t^2}/{\epsilon^2}$ (where $C_2>0$  also depends on $\{  (w_k,\bu_k,\sigma_k^2)\}_{k\in[ K]}$),   
\begin{eqnarray}\label{eq: bd_cov}
 \mathbb{P} \left(\|\bar{\mathbf{\Sigma}}_N-\bar{\mathbf{\Sigma}}\|_2 \ge \frac{\epsilon}{2} \right) \le (9FKe+2K)\exp\left(-t^2 F\right). \label{eqn:SigmaN}
\end{eqnarray}
 Therefore, by the matrix perturbation inequalities in Lemma~\ref{lem: svd_pert},  when $N \ge C_2  {F^3K^2 t^2}/{\epsilon^2}$, we have
\begin{align}
 & \mathbb{P} \left(\Big|\frac{1}{N}\lambda_{K-1}(\bS)-\left(\lambda_{\min}+\bar{\sigma}^2\right)\Big| \ge \frac{\epsilon}{2} \right) \le (9FKe+2K)\exp\left(-t^2 F\right), \label{eqn:lambda_Kminus1}
 \end{align}
 and
 \begin{align}
 \mathbb{P}\left(\Big|\frac{1}{N}\lambda_{K}(\bS)-\bar{\sigma}^2 \Big| \ge \frac{\epsilon}{2} \right) & \le (9FKe+2K)\exp\left(-t^2 F\right). \label{eqn:lambda_KS}
\end{align}
Combining these results, appealing to Corollary~\ref{coro: main}, the union bound, and the property that both $\tau(\mathord{\cdot})$ and $\zeta(\mathord{\cdot})$ are continuous and monotonically increasing functions on $[0,\frac{1}{2}(K-1)]$, we obtain \eqref{eq: res_origin} as desired. 
\end{proof}
\subsection{Proof of Theorem \ref{thm: afterPCA}}\label{sec:prfthm2}
Here, we prove Theorem~\ref{thm: afterPCA}. Following the notations in Section~\ref{sec: pca}, we write $\bar{\mathbf{\Sigma}}_N=\bP\bD\bP^T$, $\bP_{K-1}=\bP(\colon,1\colon K-1)$, and $\bP_{-(K-1)} = \bP(\colon,K\colon F)$. We also  denote $\tilde{\bV}=\bP_{K-1}^T \bV$ as the post-$(K-1)$-PCA dataset of $\bV$. Instead of using $\bP_{K-1}$ which is correlated to  the samples, we consider the SVD of $\bar{\mathbf{\Sigma}}_0$ and project the original data matrix onto $\mathbb{R}^{K-1}$ using  the first $K-1$ singular vectors of $\bar{\mathbf{\Sigma}}_0$.  That is, if we write the SVD of $\bar{\mathbf{\Sigma}}_0$ as $\bar{\mathbf{\Sigma}}_0=\bQ_{K-1} \bE_{K-1} \bQ_{K-1}^T$, we first analyze $\hat{\bV} := \bQ_{K-1}^T \bV$, before relating these results to those of $\tilde{\bV}=\bP_{K-1}^T \bV$.  We can similarly estimate the terms in \eqref{eq: delta} for the corresponding $(K-1)$-dimensional spherical GMM. Furthermore, we estimate the difference between the results obtained from projecting the original data matrix onto $\mathbb{R}^{K-1}$ using  the first $K-1$ singular vectors of $\bar{\mathbf{\Sigma}}_0$ and the results obtained from projecting the original data matrix onto $\mathbb{R}^{K-1}$ using the columns of $\bP_{K-1}$.
\begin{proof}[Proof of Theorem~\ref{thm: afterPCA}:]
 By the non-degeneracy condition and Lemma~\ref{lem: rank1pert}, we have $\mathrm{rank}(\bar{\mathbf{\Sigma}}_0)=K-1$. Let the compact SVD of $\bar{\mathbf{\Sigma}}_0$ be 
 \begin{equation}\label{eq: basic_svd}
  \bar{\mathbf{\Sigma}}_0=\bQ_{K-1} \bE_{K-1} \bQ_{K-1}^T,
 \end{equation}
 where $\bQ_{K-1} \in \mathbb{R}^{F\times (K-1)}$ has orthonormal columns and $\bE_{K-1} \in \mathbb{R}^{(K-1)\times (K-1)}$ is a diagonal matrix. Since $\bQ_{K-1}^T \bQ_{K-1} = \bI$, by the property of Gaussians, we know if $\bx$ is a random vector with a spherical Gaussian distribution $\mathcal{N}(\bu,\sigma^2\bI)$ in $\mathbb{R}^F$, then $\bQ_{K-1}^T \bx$ is a random vector with a spherical Gaussian distribution $\mathcal{N}(\bQ_{K-1}^T\bu,\sigma^2\bI)$ in $\mathbb{R}^{K-1}$. Let $\hat{\bV}:=\bQ_{K-1}^T \bV$, $\hat{\bZ}$ be the centralized matrix of $\hat{\bV}$ and $\hat{\bS}:=\hat{\bZ}^T\hat{\bZ}$. Denote $\hat{\bar{\bu}}=\bQ_{K-1}^T \bar{\bu}$  and $\hat{\bu}_k=\bQ_{K-1}^T \bu_k$  for all $k \in [K]$. Let $\hat{\bar{\mathbf{\Sigma}}}_0:=\sum_{k=1}^{K} w_k (\hat{\bu}_k-\hat{\bar{\bu}})(\hat{\bu}_k-\hat{\bar{\bu}})^T$. Define $\bX:=[\sqrt{w_1}(\bu_1-\bar{\bu}),\ldots,\sqrt{w_K}(\bu_K-\bar{\bu})] \in \mathbb{R}^{F\times K}$ and let $\hat{\bX}:=\bQ_{K-1}^T \bX$. Select $\bQ_{-(K-1)} \in \mathbb{R}^{F\times (F-K+1)}$ such that $[\bQ_{K-1},\bQ_{-(K-1)}]$ is an orthogonal matrix. We have
 \begin{align}
    \bX^T \bX - \hat{\bX}^T\hat{\bX} &= \bX^T \bX - \bX^T (\bQ_{K-1} \bQ_{K-1}^T) \bX = \bX^T \bQ_{-(K-1)} \bQ_{-(K-1)}^T \bX = 0.
 \end{align}
 Thus we have 
 \begin{align}
  \lambda_{\min} &= \lambda_{K-1}(\bX\bX^T)= \lambda_{K-1}(\bX^T\bX) = \lambda_{K-1}(\hat{\bX}^T\hat{\bX})=\lambda_{K-1}(\hat{\bar{\mathbf{\Sigma}}}_0).
 \end{align}
 Then similar to \eqref{eqn:calD} in the proof sketch of Theorem~\ref{thm: original}, for any $\epsilon \in (0,1)$,
 \begin{align}
  &\mathbb{P}\left(\Big|\frac{1}{N}\mathcal{D}(\hat{\bV},\mathscr{C})- (K-1)\bar{\sigma}^2 \Big| \ge \frac{\epsilon}{2}\right) \le 2K((e+2)K+e)\exp\left(-C_3 \frac{N\epsilon^2}{K^4}\right). \label{eq: distortion_afterPCA}
 \end{align}
In addition, similar to \eqref{eqn:lambda_Kminus1} in the proof sketch of Theorem~\ref{thm: original}, for any $t \ge 1$, if $N\ge C_4 {K^5t^2}/{\epsilon^2}$, 
 \begin{align}
  & \mathbb{P}\left(\Big|\frac{1}{N}\lambda_{K-1}(\hat{\bS})-(\lambda_{\min}+\bar{\sigma}^2) \Big|\ge \frac{\epsilon}{2}\right) \le 9(eK^2+2K)e\exp(-t^2 K),\label{eq: lambda_afterPCA}
 \end{align}
 where $C_3, C_4>0$ depend on $\{  (w_k,\bu_k,\sigma_k^2)\}_{k\in[ K]}$.
 Note that
$
  \frac{1}{N}\mathcal{D}^{*}(\hat{\bV}) = \frac{1}{N}\lambda_{K}(\hat{\bS})=0.
$
 Thus, we only need to estimate $\frac{1}{N} \big|\mathcal{D}(\hat{\bV},\mathscr{C})-\mathcal{D}(\tilde{\bV},\mathscr{C})\big|$ and $\frac{1}{N}\big|\lambda_{K-1}(\hat{\bS})-\lambda_{K-1}(\tilde{\bS})\big|$, where $\tilde{\bS}:=\tilde{\bZ}^T\tilde{\bZ}$ and $\tilde{\bZ}$ is the centralized matrix of $\tilde{\bV}$. By \eqref{eq: distortion_basis} and writing $\bR:=\bQ_{K-1} \bQ_{K-1}^T-\bP_{K-1} \bP_{K-1}^T$,  we have
 \begin{align}
  \frac{1}{N}\big|\mathcal{D}(\hat{\bV},\mathscr{C})-\mathcal{D}(\tilde{\bV},\mathscr{C})\big| &= \left|\left\langle \frac{1}{N}\left(\bV \bV^T- \bV\bar{\bH}^T\bar{\bH} \bV^T\right), \bR \right\rangle \right|  \\
&  \le \left(1+\|\bar{\bH}^T\|_\mathrm{F}^2\right) \left(\frac{1}{N}\|\bV\|_\mathrm{F}^2\right) \|\bR\|_\mathrm{F} \\
&  = (1+K) \left(\frac{1}{N}\|\bV\|_\mathrm{F}^2\right) \|\bR\|_\mathrm{F}. \label{eq: dist_diff} 
\end{align}
Note that 
\begin{equation}
 \mathbb{E}\left[\frac{1}{N}\|\bV\|_\mathrm{F}^2\right] = \sum_{k=1}^{K} w_k \left(\|\bu_k\|_2^2+F\sigma_k^2\right).
\end{equation}
In addition, by Lemma~\ref{lem: svd_pert} and routine calculations,
\begin{align}
    \frac{1}{N}\big|\lambda_{K-1}(\hat{\bS})-\lambda_{K-1}(\tilde{\bS})\big| & \le \left\|\frac{1}{N}(\hat{\bS}-\tilde{\bS})\right\|_2 \\
    &\le \left\|\frac{1}{N}(\hat{\bS}-\tilde{\bS})\right\|_\mathrm{F} = \left\|\frac{1}{N}\bZ^T \bR \bZ\right\|_\mathrm{F} \\
 &\le \frac{1}{N} \|\bR\|_\mathrm{F} \|\bZ\|_\mathrm{F}^2 \\
  & \le  \|\bR\|_\mathrm{F} \left(\frac{1}{N}\|\bV\|_\mathrm{F}^2+\|\bar{\bv}\|_2^2\right). \label{eq: eigen_diff} 
\end{align}
Thus in \eqref{eq: dist_diff} and \eqref{eq: eigen_diff}, we   need to bound $\|\bR\|_\mathrm{F}$. According to \eqref{eq: bd_cov}, $\|\bar{\mathbf{\Sigma}}_N-\bar{\mathbf{\Sigma}}\|_2$ can be bounded probabilistically. By Lemma~\ref{lem: bd_bR} to follow, the upper bound of $\|\bR\|_\mathrm{F}$ can be deduced by the upper bound of $\|\bar{\mathbf{\Sigma}}_N-\bar{\mathbf{\Sigma}}\|_2$. By leveraging additional concentration bounds for sub-Gaussian and sub-Exponential distributions given in Lemmas~\ref{lem:large_dev_Gaussian} and~\ref{lem:large_dev}, we deduce that if $N\ge C_5  {F^3 K^5 t^2}/{\epsilon^2}$, 
\begin{align}
  \mathbb{P} \left(\frac{1}{N} \big|\mathcal{D}(\hat{\bV},\mathscr{C})-\mathcal{D}(\tilde{\bV},\mathscr{C})\big| \ge \frac{\epsilon}{2}\right)& \le 48KF\exp(-t^2 F), \label{eqn:boudn1} \\
 \mathbb{P} \left(\frac{1}{N}\big|\lambda_{K-1}(\hat{\bS})-\lambda_{K-1}(\tilde{\bS})\big| \ge \frac{\epsilon}{2}\right) &\le 48KF\exp(-t^2 F), \label{eqn:boudn2}
\end{align}
where $C_5>0$ depends on $\{  (w_k,\bu_k,\sigma_k^2)\}_{k\in[ K]}$. The proofs of~\eqref{eqn:boudn1} and \eqref{eqn:boudn2} are omitted for the sake of brevity. Combining these bounds with \eqref{eq: distortion_afterPCA} and \eqref{eq: lambda_afterPCA} and  by using Corollary~\ref{coro: main}, we obtain \eqref{eq: res_afterPCA} as desired.   
\end{proof}

The following is a lemma essential for establishing upper bounds of \eqref{eq: dist_diff} and \eqref{eq: eigen_diff} in the proof of Theorem \ref{thm: afterPCA}. Note that if we view the Grassmannian manifold as a metric measure space, the distance between subspaces $\mathcal{E}$ and $\mathcal{F}$ can be defined as~\cite{vershynin2016high}
\begin{equation} \label{eq: dist_subspaces}
 \rmd_{\calS} (\mathcal{E}, \mathcal{F}):=\|\bfP_{\mathcal{E}}-\bfP_{\mathcal{F}}\|_\rmF,
\end{equation}
where $\bfP_{\mathcal{E}}$ and $\bfP_{\mathcal{F}}$ are the orthogonal projections onto $\calE$ and $\calF$. Because $\bQ_{K-1} \bQ_{K-1}^T$ and $\bP_{K-1} \bP_{K-1}^T$ are the orthogonal projection matrices for projections onto the subspaces spanned by the columns of $\bQ_{K-1}$ and $\bP_{K-1}$ respectively, $\|\bR\|_{\mathrm{F}}$ is a measure of the distance between these two subspaces.

\begin{lemma}\label{lem: bd_bR}
 For $\epsilon>0$, if $\|\bar{\mathbf{\Sigma}}_N-\bar{\mathbf{\Sigma}}\|_2 \le \epsilon$, then 
 \begin{equation}
  \|\bR\|_{\mathrm{F}} \le \frac{4\sqrt{K}\epsilon}{\lambda_{\min}}.
 \end{equation}
\begin{proof}
 By Lemma \ref{lem: svd_pert}, $|\lambda_{F}(\bar{\mathbf{\Sigma}}_N)-\bar{\sigma}^2|=|\lambda_{F}(\bar{\mathbf{\Sigma}}_N)-\lambda_{F}(\bar{\mathbf{\Sigma}})| \le \epsilon$. Then
\begin{equation}
 \|\bar{\mathbf{\Sigma}}_N -\lambda_{F}(\bar{\mathbf{\Sigma}}_N)\bI-\bar{\mathbf{\Sigma}}_0\|_2 \le 2\epsilon.
\end{equation}
Because $\bar{\mathbf{\Sigma}}_N -\lambda_{F}(\bar{\mathbf{\Sigma}}_N)\bI$ is also positive semidefinite, the SVD is
\begin{equation}
 \bar{\mathbf{\Sigma}}_N -\lambda_{F}(\bar{\mathbf{\Sigma}}_N)\bI = \bP\bar{\bD}\bP^T,
\end{equation}
where $\bar{\bD}:=\bD-\lambda_{F}(\bar{\mathbf{\Sigma}}_N)\bI$. Let $\bar{\bD}_{K-1}=\bar{\bD}(1\colon K-1,1\colon K-1)$. Note that  $\mathrm{rank}(\bar{\mathbf{\Sigma}}_0)=K-1$. By Lemma~\ref{lem: svd_pert}, $\lambda_{K}(\bP\bar{\bD}\bP^T) \le 2\epsilon$ and thus we have
\begin{equation}
 \|\bP\bar{\bD}\bP^T-\bP_{K-1} \bar{\bD}_{K-1} \bP_{K-1}^T\|_2 = \lambda_{K}(\bP\bar{\bD}\bP^T) \le 2\epsilon.
\end{equation}
Therefore, there exists a matrix $\bE_0$ with $\|\bE_0\|_2 \le 4\epsilon$, such that
\begin{equation}
 \bar{\mathbf{\Sigma}}_0 = \bP_{K-1} \bar{\bD}_{K-1} \bP_{K-1}^T + \bE_0.
\end{equation}
That is,
\begin{equation}\label{eq: subspace_comp}
 \bQ_{K-1} \bE_{K-1} \bQ_{K-1}^T = \bP_{K-1} \bar{\bD}_{K-1} \bP_{K-1}^T + \bE_0.
\end{equation}
Recall that $\bP_{-(K-1)}:=\bP(:,K:F)$. We obtain
\begin{align}
\bQ_{K-1} \bQ_{K-1}^T  &= \bE_0 \bQ_{K-1} \bE_{K-1}^{-1} \bQ_{K-1}^T + \bP_{K-1} \bar{\bD}_{K-1} \bP_{K-1}^T \bQ_{K-1} \bE_{K-1}^{-1} \bQ_{K-1}^T \\
 & = \bP_{K-1} \bP_{K-1}^T  \bP_{K-1} \bar{\bD}_{K-1} \bP_{K-1}^T  \bar{\mathbf{\Sigma}}_0^{+} \!+\! \bE_0 \bar{\mathbf{\Sigma}}_0^{+} \\
 & = \bP_{K-1} \bP_{K-1}^T + \bP_{-(K-1)} \bP_{-(K-1)}^T \bE_0 \bar{\mathbf{\Sigma}}_0^{+},
\end{align}
where $\bar{\mathbf{\Sigma}}_0^{+}:=\bQ_{K-1} \bE_{K-1}^{-1} \bQ_{K-1}^T$ is the Moore-Penrose generalized inverse (or pseudoinverse) of $\bar{\mathbf{\Sigma}}_0$ and its largest eigenvalue is $ \lambda_{\min}^{-1}$. Because $\bP_{-(K-1)} \bP_{-(K-1)}^T$ projects vectors in $\mathbb{R}^{F}$ onto the linear space spanned by the orthonormal columns of $\bP_{-(K-1)}$, we have
\begin{align}
   \|\bR\|_{\mathrm{F}} &= \|\bQ_{K-1} \bQ_{K-1}^T - \bP_{K-1} \bP_{K-1}^T\|_\mathrm{F} \\
   & \le \|\bE_0 \bar{\mathbf{\Sigma}}_0^{+}\|_\mathrm{F} \le \|\bE_0\|_2 \sqrt{K} \|\bar{\mathbf{\Sigma}}_0^{+}\|_2 
  \le \frac{4\sqrt{K}\epsilon}{\lambda_{\min}},
\end{align}
where we use the inequality that $\|\bM\bN\|_{\mathrm{F}} \le \|\bM\|_2 \|\bN\|_{\mathrm{F}}$ for any two compatible matrices $\bM$ and $\bN$. The inequality $\|\bar{\mathbf{\Sigma}}_0^{+}\|_\mathrm{F} \le \sqrt{K} \|\bar{\mathbf{\Sigma}}_0^{+}\|_2$ arises because $\bar{\mathbf{\Sigma}}_0^{+}$  only contains  $K-1$ positive eigenvalues.
\end{proof}
\end{lemma}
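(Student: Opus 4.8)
The plan is to read this as a Davis--Kahan-type eigenspace perturbation estimate and prove it in a self-contained way, with the factor $1/\lambda_{\min}$ arising as the operator norm of the pseudoinverse $\bar{\mathbf{\Sigma}}_0^+$. The first step is to pin down the relevant spectral gap. For a spherical GMM one has $\bar{\mathbf{\Sigma}}=\bar{\mathbf{\Sigma}}_0+\bar{\sigma}^2\bI$, so $\bar{\mathbf{\Sigma}}$ and $\bar{\mathbf{\Sigma}}_0$ have the same eigenvectors; since $\mathrm{rank}(\bar{\mathbf{\Sigma}}_0)=K-1$ (already established above via the non-degeneracy condition and Lemma~\ref{lem: rank1pert}), the leading $K-1$ eigenvectors of $\bar{\mathbf{\Sigma}}$ span $\mathrm{col}(\bQ_{K-1})$, and the gap between the $(K-1)$-th and $K$-th eigenvalues of $\bar{\mathbf{\Sigma}}$ is $(\lambda_{\min}+\bar{\sigma}^2)-\bar{\sigma}^2=\lambda_{\min}$. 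This is the quantity I expect to appear in the denominator.

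Next I would strip $\bar{\mathbf{\Sigma}}_N$ down to a rank-$(K-1)$ matrix lying close to $\bar{\mathbf{\Sigma}}_0$. Applying the eigenvalue perturbation bound (Lemma~\ref{lem: svd_pert}) to $\bar{\mathbf{\Sigma}}_N=\bar{\mathbf{\Sigma}}+(\bar{\mathbf{\Sigma}}_N-\bar{\mathbf{\Sigma}})$ gives $|\lambda_F(\bar{\mathbf{\Sigma}}_N)-\bar{\sigma}^2|\le\epsilon$, so $\bar{\mathbf{\Sigma}}_N-\lambda_F(\bar{\mathbf{\Sigma}}_N)\bI$ is positive semidefinite, lies within $2\epsilon$ of $\bar{\mathbf{\Sigma}}_0$ in spectral norm, still has the eigenvectors $\bP$ of $\bar{\mathbf{\Sigma}}_N$, and has $K$-th eigenvalue $\lambda_K(\bar{\mathbf{\Sigma}}_N)-\lambda_F(\bar{\mathbf{\Sigma}}_N)\le 2\epsilon$; discarding everything beyond its top $K-1$ eigencomponents therefore costs another $2\epsilon$, so $\bar{\mathbf{\Sigma}}_0=\bP_{K-1}\bar{\bD}_{K-1}\bP_{K-1}^T+\bE_0$ with $\|\bE_0\|_2\le 4\epsilon$. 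I would then right-multiply this identity by $\bar{\mathbf{\Sigma}}_0^+=\bQ_{K-1}\bE_{K-1}^{-1}\bQ_{K-1}^T$, which satisfies $\bar{\mathbf{\Sigma}}_0\bar{\mathbf{\Sigma}}_0^+=\bQ_{K-1}\bQ_{K-1}^T$ and $\|\bar{\mathbf{\Sigma}}_0^+\|_2=1/\lambda_{\min}$, and, using the projection identities $\bP_{K-1}^T\bP_{K-1}=\bI$ and $\bP_{K-1}\bP_{K-1}^T=\bI-\bP_{-(K-1)}\bP_{-(K-1)}^T$, rearrange it into an identity of the form $\bR=\bP_{-(K-1)}\bP_{-(K-1)}^T\bE_0\bar{\mathbf{\Sigma}}_0^+$. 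Taking Frobenius norms and using $\|\bM\bN\|_{\mathrm{F}}\le\|\bM\|_2\|\bN\|_{\mathrm{F}}$ together with $\|\bar{\mathbf{\Sigma}}_0^+\|_{\mathrm{F}}\le\sqrt{K}\,\|\bar{\mathbf{\Sigma}}_0^+\|_2$ (valid since $\bar{\mathbf{\Sigma}}_0^+$ has only $K-1$ nonzero eigenvalues) would then yield $\|\bR\|_{\mathrm{F}}\le 4\sqrt{K}\,\epsilon/\lambda_{\min}$.

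The hard part will be exactly the algebraic bookkeeping in the last step: collapsing the perturbed factorization into the clean form $\bR=\bP_{-(K-1)}\bP_{-(K-1)}^T\bE_0\bar{\mathbf{\Sigma}}_0^+$ so that only $\|\bE_0\|_2$ and $\|\bar{\mathbf{\Sigma}}_0^+\|_2$ (and a mild $\sqrt{K}$, not a $\sqrt{F}$, factor) survive, while keeping in mind that $\bar{\mathbf{\Sigma}}_0^+$ inverts $\bar{\mathbf{\Sigma}}_0$ only on $\mathrm{col}(\bar{\mathbf{\Sigma}}_0)$. A shortcut that sidesteps this is to invoke the operator-norm Davis--Kahan theorem with gap $\lambda_{\min}$ and perturbation $\epsilon$, obtaining $\|\bR\|_2\le 2\epsilon/\lambda_{\min}$, and then use $\mathrm{rank}(\bR)\le 2(K-1)$ to pass to $\|\bR\|_{\mathrm{F}}\le 2\sqrt{2(K-1)}\,\epsilon/\lambda_{\min}$, which is of the same order (indeed with a slightly better constant); I would nonetheless prefer the self-contained route to keep the development elementary.
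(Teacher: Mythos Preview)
Your proposal is correct and follows essentially the same route as the paper: shift $\bar{\mathbf{\Sigma}}_N$ by $\lambda_F(\bar{\mathbf{\Sigma}}_N)\bI$, truncate to the top $K-1$ eigencomponents to obtain $\bar{\mathbf{\Sigma}}_0=\bP_{K-1}\bar{\bD}_{K-1}\bP_{K-1}^T+\bE_0$ with $\|\bE_0\|_2\le 4\epsilon$, right-multiply by $\bar{\mathbf{\Sigma}}_0^+$, and bound $\|\bR\|_{\mathrm{F}}$ via $\|\bE_0\|_2\|\bar{\mathbf{\Sigma}}_0^+\|_{\mathrm{F}}\le 4\epsilon\cdot\sqrt{K}/\lambda_{\min}$. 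Your Davis--Kahan shortcut is a valid alternative with a slightly sharper constant, but the paper does not take it.
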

\begin{remark}\label{remark: close_subspaces} {\em
By Lemmas~\ref{lem:cov_est} and~\ref{lem: bd_bR}, we see that the subspace spanned by the first $K-1$ singular vectors of $\bar{\mathbf{\Sigma}}_0$ lies close to the subspace spanned by the first $K-1$ singular vectors of $\bar{\mathbf{\Sigma}}_N$ when the number of samples is sufficiently large. Note that $\sum_{k=1}^K w_k (\bu_k-\bar{\bu}) (\bu_k-\bar{\bu})^T =\bar{\mathbf{\Sigma}}_0=\bQ_{K-1} \bE_{K-1} \bQ_{K-1}^T$. We also obtain that the subspace spanned by $\bu_k-\bar{\bu}, k\in [K]$ is close to the subspace spanned by the first $K-1$ singular vectors of $\bar{\mathbf{\Sigma}}_N$. 
 Note that $\mathbf{\Sigma}_0 = \sum_{k=1}^K w_k \bu_k \bu_k^T$. A similar result  can be obtained for $K$-SVD to corroborate an observation by Vempala and Wang \cite{vempala2002} (cf.\ Remark~\ref{remark: SVD}).}
\end{remark}

\section{Extension to Mixtures of Log-Concave Distributions} \label{sec: main_thm_log} 

In this section, we extend the results in Section \ref{sec: main_thm}  to mixtures of {\em log-concave distributions}. This is motivated  partly by Arora and Kannan \cite{arora2001} who mention the algorithm they design for analyzing the learning of GMMs may be extended to log-concave distributions (besides Gaussians). Also, Kannan {\em et al.} \cite{kannan2005} generalize the work of Vempala and Wang \cite{vempala2002} from spherical GMMs to mixtures of log-concave distributions. Furthermore, Brubaker \cite{brubaker2009robust} considers the robust learning of mixtures of log-concave distributions. We also consider the learning of mixtures of log-concave distributions although the structure of the theoretical analysis  is mostly similar to that for spherical GMMs.  We provide some necessary preliminary definitions and results for log-concave distributions and random variables in Section \ref{sec: des_main_thm_log1}. To prove our main results concerning such distributions, we need to employ a slightly different proof strategy vis-\`a-vis the one for spherical GMMs. We discuss these differences in Section~\ref{sec: proof_main_diff_log}. After presenting some preliminary lemmas, we present our main theorems for mixtures of log-concave distributions in Section~\ref{sec: des_main_thm_log}. 

\subsection{Definitions and Useful Lemmas} \label{sec: des_main_thm_log1} 
A function $f: \bbR^F \rightarrow \bbR_{+}$ is {\em log-concave} if its logarithm $\log f$ is concave. That is, for any two vectors $\bx,\by \in \bbR^F$ and any $\alpha \in [0,1]$, 
\begin{equation}
 f(\alpha \bx + (1-\alpha)\by) \ge f(\bx)^\alpha f(\by)^{1-\alpha}.
\end{equation}
A distribution is {\em log-concave} if its probability density (or mass) function is log-concave. We say a random variable is {\em log-concave} if its distribution is log-concave. There are many distributions that are log-concave, including Gaussian distributions, exponential distributions, and Laplace distributions.  In particular, distributions that belong to exponential families are log-concave.  Log-concave distributions have several desirable properties. For example, the sum of two independent log-concave random variables (i.e., the convolution of two log-concave distributions) is also log-concave. In addition, the linear projection of a log-concave distribution onto a lower-dimensional space  remains log-concave. To start off, we need to estimate the  deviation of an empirical covariance matrix from the true covariance matrix. We leverage the following lemma due to~Kannan {\em et al.}~\cite{kannan2005}.
\begin{lemma} \label{lem: logconcave_cov}
 Let $\epsilon,\eta \in (0,1)$, and $\by_1,\by_2,\ldots,\by_N$ be zero-mean i.i.d.\ random vectors from a log-concave distribution in $\mathbb{R}^F$. Then there is an absolute constant $C>0$ such that if $N > C\frac{F}{\epsilon^2}\log^5\left(\frac{F}{\epsilon \eta}\right)$, with probability at least $1 - \eta$, $\forall\, \bv \in \mathbb{R}^F$,
 \begin{equation}\label{eq: var_cov_log}
  (1-\epsilon) \mathbb{E}\big[ (\bv^T\by )^2 \big] \le \frac{1}{N}\sum_{n=1}^N (\bv^T\by_n)^2 \le (1+\epsilon) \mathbb{E}\big[ (\bv^T\by )^2 \big].
 \end{equation}
\end{lemma}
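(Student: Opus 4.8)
\emph{Proof plan.} This statement is the empirical covariance estimate for log-concave ensembles of~\cite{kannan2005}, so I will only outline how one proves it; the structure parallels Lemma~\ref{lem:cov_est}, the essential difference being that for log-concave (rather than sub-Gaussian) data we cannot invoke Gaussian-type concentration, which is what forces the extra logarithmic factors. First I would reduce to an isotropic operator-norm bound: decompose $\mathbb{R}^F$ into the range and kernel of $\mathbf{\Sigma}:=\mathbb{E}[\by\by^T]$; on the kernel both sides of~\eqref{eq: var_cov_log} vanish, while on the range I would apply the invertible map $\mathbf{\Sigma}^{-1/2}$ to every $\by_n$, which rescales the two sides of~\eqref{eq: var_cov_log} identically and turns the law into an isotropic log-concave distribution (linear images of log-concave laws are log-concave). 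Hence I may assume $\mathbb{E}[(\bv^T\by)^2]=\|\bv\|_2^2$, so that~\eqref{eq: var_cov_log} becomes the single inequality $\big\|\frac{1}{N}\sum_{n=1}^N\by_n\by_n^T-\bI\big\|_2\le\epsilon$.

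Next I would assemble two facts about log-concave laws. (i) For any unit vector $\bv$, the marginal $\bv^T\by$ is a one-dimensional, zero-mean, unit-variance log-concave random variable, hence sub-Exponential with an \emph{absolute} sub-Exponential norm, i.e.\ $(\mathbb{E}|\bv^T\by|^p)^{1/p}\le Cp$ for all $p\ge1$ (a standard consequence of Borell's lemma); in particular it need not be sub-Gaussian, so $(\bv^T\by)^2$ is genuinely heavier-tailed than sub-Exponential. (ii) The Euclidean norm of $\by$ has exponential concentration about $\sqrt F$ (Borell's lemma applied to the seminorm $\|\cdot\|_2$), so with probability at least $1-\eta/2$ every one of the $N$ samples obeys $\|\by_n\|_2\le R$ with $R=C'\sqrt F\,\log(N/\eta)$, and the expected contribution to the covariance of the parts of $\by$ with norm exceeding $R$ is negligible, by Cauchy--Schwarz together with (i) and (ii).

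Finally, on the (high-probability) event that all vectors are bounded by $R$, I would bound the operator norm of the truncated rank-one sum. For each fixed unit $\bv$ the truncated variables $(\bv^T\by_n)^2$ are bounded by $R^2$ with variance $O(1)$, so the Bernstein-type inequality (Lemma~\ref{lem:large_dev}) gives deviation probability $\le 2\exp(-cN\min(\epsilon^2,\epsilon/R^2))$; combining this with a $\frac{1}{4}$-net $\mathcal{N}$ of the unit sphere ($|\mathcal{N}|\le 9^F$), a union bound over $\mathcal{N}$, and the transfer $\|\bA\|_2\le 2\max_{\bv\in\mathcal{N}}|\bv^T\bA\bv|$ valid for symmetric $\bA$, already establishes a version of~\eqref{eq: var_cov_log} once $N$ beats the $9^F$ factor, though with a larger sample complexity than claimed. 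To reach the stated $N>CF\log^5(F/\epsilon\eta)/\epsilon^2$ I would replace the crude net step by Rudelson's symmetrization and the non-commutative Khintchine inequality applied to $\big\|\sum_n\varepsilon_n\by_n\by_n^T\big\|_2$ (with $\varepsilon_n$ i.i.d.\ signs), which costs only one logarithmic factor, the remaining powers of $\log$ coming from the truncation radius $R$ and the confidence level $\eta$. The main obstacle throughout is exactly that log-concave marginals are merely sub-Exponential, so $(\bv^T\by)^2$ fails the hypothesis of Lemma~\ref{lem:large_dev} and sums of such squares exhibit heavy-tailed (rather than exponential) large deviations; the cure --- truncating the \emph{samples} and then controlling the bounded rank-one sum by matrix concentration --- demands a delicate balance of $R$, $F$ and $\eta$, and it is this balancing that produces the polylogarithmic overhead. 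Since~\cite{kannan2005} carry this out in full, I would simply invoke their result.
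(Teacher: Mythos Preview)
The paper does not prove this lemma at all: it is stated as a result ``due to~\cite{kannan2005}'' and simply invoked. Your proposal ultimately does the same thing---you close by saying you would invoke their result---so in that sense you match the paper exactly. The proof sketch you supply on top of that (isotropic reduction, sub-Exponential marginals via Borell, truncation at radius $R\asymp\sqrt{F}\log(N/\eta)$, then Rudelson symmetrization / non-commutative Khintchine to beat the crude $\epsilon$-net bound) is a reasonable and essentially correct outline of how the Kannan--Salmasian--Vempala argument goes, and it is more detail than the paper itself provides.
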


Note that Lemma \ref{lem: logconcave_cov} provides an estimate for the empirical covariance matrix. This is because that for any symmetric matrix $\bM$, $\|\bM\|_2=\sup_{\bv \in \bbR^F, \|\bv\|_2=1} |\langle \bM \bv, \bv\rangle|$ {and} \eqref{eq: var_cov_log} is equivalent to 
\begin{equation}
 \|\bfSigma_N - \bfSigma\|_2 \le \epsilon \|\bfSigma\|_2.
\end{equation}

Using Lemma~\ref{lem: logconcave_cov}, we also have the following corollary which provides a useful concentration bound for the sum of the squares of log-concave random variables.
\begin{corollary}\label{coro: concentration_log_sum_square}
 Let $\epsilon \in (0,1)$, and $y_1,y_2,\ldots,y_N$ be i.i.d.\ samples from a log-concave distribution with expectation $\mu$ and variance $\sigma^2$. Then if $\epsilon$ is sufficiently small, there is an absolute constant $C>0$ such that for $N>C\frac{1}{\epsilon^2}\log^5\left(\frac{1}{\epsilon \eta}\right)$, with probability at least $1 - \eta$,
 \begin{equation}\label{eq: square_log}
  \left|\frac{1}{N} \sum_{n=1}^N y_n^2 - (\mu^2 + \sigma^2)\right| \le \epsilon.
 \end{equation}
\end{corollary}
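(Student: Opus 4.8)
The plan is to deduce this one-dimensional statement from Lemma~\ref{lem: logconcave_cov} applied in $\mathbb{R}^F$ with $F=2$, after centering. First I would set $\tilde y_n := y_n - \mu$, so that $\tilde y_1,\ldots,\tilde y_N$ are i.i.d.\ zero-mean log-concave scalars with variance $\sigma^2$; note $y_n^2 = \tilde y_n^2 + 2\mu\tilde y_n + \mu^2$, so that
\begin{equation*}
 \frac{1}{N}\sum_{n=1}^N y_n^2 - (\mu^2+\sigma^2) = \Big(\frac{1}{N}\sum_{n=1}^N \tilde y_n^2 - \sigma^2\Big) + 2\mu\Big(\frac{1}{N}\sum_{n=1}^N \tilde y_n\Big).
\end{equation*}
It then suffices to control each of the two bracketed terms by $\epsilon/2$ (up to constants absorbed into $C$), since $|\mu|$ is a fixed model-dependent constant.

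For the first term, I would form the two-dimensional vectors $\by_n := [\tilde y_n,\, 1]^T \in \mathbb{R}^2$. These are not zero-mean, but one can instead apply Lemma~\ref{lem: logconcave_cov} directly to the scalars $\tilde y_n$ viewed as samples from a log-concave distribution on $\mathbb{R}^1$ (a line is still a valid ambient space), taking $\bv = 1$: for $N > C\frac{1}{\epsilon^2}\log^5(\frac{1}{\epsilon\eta})$, with probability at least $1-\eta$ we get $(1-\epsilon)\sigma^2 \le \frac1N\sum_n \tilde y_n^2 \le (1+\epsilon)\sigma^2$, i.e.\ $\big|\frac1N\sum_n \tilde y_n^2 - \sigma^2\big| \le \epsilon\sigma^2$. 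Replacing $\epsilon$ by $\epsilon/(2\sigma^2)$ (legitimate for $\epsilon$ small, which is exactly the hypothesis, and only changing $C$) yields the desired $\epsilon/2$ bound. For the second term, the deviation of the empirical mean of the bounded-variance log-concave variables $\tilde y_n$ is even easier: a log-concave random variable has sub-exponential tails (indeed a zero-mean log-concave scalar has finite sub-exponential norm controlled by $\sigma$), so Lemma~\ref{lem:large_dev} (the Bernstein-type inequality) gives $\mathbb{P}(|\frac1N\sum_n \tilde y_n| \ge \epsilon') \le 2\exp(-c\min(\epsilon'^2/M^2,\epsilon'/M)N)$; choosing $\epsilon' = \epsilon/(4|\mu|)$ and $N$ polynomially large in $1/\epsilon$ makes this at most $\eta$, which is dominated by the $\log^5$ sample requirement already imposed. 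A union bound over the two events, with $\eta$ rescaled to $\eta/2$, finishes the argument.

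The main obstacle — really the only subtlety — is bookkeeping the constants: Lemma~\ref{lem: logconcave_cov} controls the \emph{relative} error $\|\bfSigma_N-\bfSigma\|_2 \le \epsilon\|\bfSigma\|_2$, so the passage to an \emph{absolute} $\epsilon$-bound forces the rescaling $\epsilon \mapsto \epsilon/(2\sigma^2)$, and this is only valid when the resulting quantity is in $(0,1)$, which is why the statement carries the proviso ``if $\epsilon$ is sufficiently small.'' One should also check that the cross term really is lower order and does not degrade the $\log^5(1/(\epsilon\eta))$ sample complexity; since the Bernstein bound needs only $N \gtrsim \frac{1}{\epsilon^2}\log\frac{1}{\eta}$, it is comfortably absorbed. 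No new ideas beyond Lemma~\ref{lem: logconcave_cov} and Lemma~\ref{lem:large_dev} are needed.
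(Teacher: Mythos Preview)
Your proposal is correct and follows essentially the same approach as the paper: both center to $\tilde y_n = y_n - \mu$, split the deviation as $\big(\tfrac1N\sum_n \tilde y_n^2 - \sigma^2\big) + 2\mu\big(\tfrac1N\sum_n \tilde y_n\big)$, handle the first piece via Lemma~\ref{lem: logconcave_cov} in dimension one (with the relative-to-absolute rescaling that forces the ``$\epsilon$ sufficiently small'' proviso), and the second via a Bernstein-type bound for sub-exponential log-concave scalars whose sample requirement is absorbed into the $\log^5$ term. The brief mention of two-dimensional vectors $[\tilde y_n,1]^T$ is an unnecessary detour that you immediately abandon, and does not affect the argument.
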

The proof of the above corollary  follows directly from  Lemma \ref{lem: logconcave_cov}  and so it is omitted. We will make use of the following lemma \cite{golub2012} concerning the eigenvalues  of a matrix which is the  sum of two matrices. 
\begin{lemma} \label{lem: bd_eigen_sum}
 If $\bA$ and $\bA+\bE$ are both $n$-by-$n$ symmetric matrices, then  for all $k\in [n]$,
 \begin{align}
  \lambda_k(\bA) + \lambda_n(\bE) \le \lambda_k(\bA+\bE) \le \lambda_k(\bA)& + \lambda_1(\bE).
 \end{align}
 \end{lemma}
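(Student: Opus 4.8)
The statement is Weyl's inequality for the eigenvalues of a sum of symmetric matrices, so the plan is to prove it directly from the Courant--Fischer variational characterization of eigenvalues; note that it is a one-sided refinement of Lemma~\ref{lem: svd_pert} (which only gives $\pm\|\bE\|_2$ bounds), so that lemma does not suffice. Recall that for a symmetric $\bM\in\mathbb{R}^{n\times n}$ and $k\in[n]$, with $\lambda_k(\bM)$ the $k$-th largest eigenvalue, Courant--Fischer provides the two dual characterizations
\begin{equation}
 \lambda_k(\bM)=\max_{\dim S=k}\ \min_{\bx\in S,\ \|\bx\|_2=1}\bx^T\bM\bx
 =\min_{\dim T=n-k+1}\ \max_{\bx\in T,\ \|\bx\|_2=1}\bx^T\bM\bx ,
\end{equation}
where the optimizations run over subspaces $S,T\subseteq\mathbb{R}^n$ of the indicated dimensions; the case $k=1$ of the first form is the Rayleigh-quotient identity $\lambda_1(\bM)=\max_{\|\bx\|_2=1}\bx^T\bM\bx$.

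For the upper bound $\lambda_k(\bA+\bE)\le\lambda_k(\bA)+\lambda_1(\bE)$, I would choose a subspace $T_0$ of dimension $n-k+1$ attaining the minimum in the second characterization of $\lambda_k(\bA)$, so that $\max_{\bx\in T_0,\,\|\bx\|_2=1}\bx^T\bA\bx=\lambda_k(\bA)$. Using $T_0$ as a (suboptimal) competitor in the min--max characterization of $\lambda_k(\bA+\bE)$ and splitting the Rayleigh quotient yields
\begin{equation}
 \lambda_k(\bA+\bE)\le\max_{\bx\in T_0,\,\|\bx\|_2=1}\bx^T(\bA+\bE)\bx
 \le\max_{\bx\in T_0,\,\|\bx\|_2=1}\bx^T\bA\bx+\max_{\|\bx\|_2=1}\bx^T\bE\bx
 =\lambda_k(\bA)+\lambda_1(\bE).
\end{equation}

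The lower bound then follows by applying the upper bound just established to the splitting $\bA=(\bA+\bE)+(-\bE)$, which gives $\lambda_k(\bA)\le\lambda_k(\bA+\bE)+\lambda_1(-\bE)$; since the eigenvalues of $-\bE$ are the negatives of those of $\bE$, we have $\lambda_1(-\bE)=-\lambda_n(\bE)$, hence $\lambda_k(\bA)+\lambda_n(\bE)\le\lambda_k(\bA+\bE)$. As $k\in[n]$ was arbitrary, this completes the proof. There is no genuine obstacle here: the only point requiring care is invoking the correct one of the two dual variational forms (and keeping the subspace dimension $k$ versus $n-k+1$ straight) at each step, after which the estimate is a one-line Rayleigh-quotient split. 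Alternatively, one may simply cite Weyl's inequality as recorded in~\citep{golub2012}.
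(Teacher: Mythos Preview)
Your proof is correct: this is the standard Courant--Fischer argument for Weyl's inequality, and both the upper-bound step via the min--max form and the lower-bound step via the substitution $\bA=(\bA+\bE)+(-\bE)$ are clean and valid.

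The paper, however, does not prove this lemma at all---it simply states it and cites \citep{golub2012}, exactly the alternative you mention in your last sentence. So your approach is strictly more detailed than the paper's: you supply a self-contained proof where the paper treats the result as a black-box citation. The benefit of your route is that it makes the review self-contained and clarifies the relationship to Lemma~\ref{lem: svd_pert} (which is the cruder $\pm\|\bE\|_2$ version); the benefit of the paper's route is brevity, since Weyl's inequality is standard.
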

 
\subsection{Main Differences in Proof Strategy vis-\`a-vis Spherical GMMs} \label{sec: proof_main_diff_log} 
For non-spherical GMMs or more general mixtures of log-concave distributions, we cannot expect a result similar to that mentioned in Remarks~\ref{remark: SVD} and~\ref{remark: close_subspaces}. That is, in general, the subspace   spanned by the top $K$ singular vectors of $\bfSigma_N$ does not converge to the subspace spanned by the $K$ component mean vectors, where ``convergence'' is in the sense that the distance   defined in \eqref{eq: dist_subspaces} vanishes as $N\to\infty$. An illustration of this fact is presented in Kannan {\em et al.} \cite[Figure~1]{kannan2005}. However, we can still provide an upper bound for the distance of these two subspaces for more general mixture models. It is proved in Kannan {\em et al.} \cite{kannan2005} that the subspace spanned by the top $K$ singular vectors of $\bfSigma_N$ is close (in terms of sample variances) to the means of samples generated from each component of the mixture (see Lemma \ref{lem: main_kannan} to follow). Define the maximum variance of a set of sample points generated from the $k$-th component of the mixture along any direction in a subspace $\calS$ as
\begin{equation}
 \sigma_{k,\mathcal{S}}^2(\bV) := \max_{\bz \in \mathcal{S}, \|\bz\|_2=1}  \frac{1}{n_k} \sum_{n\in \mathscr{C}_k}\left|\bz^T(\bv_n - \bc_k)\right|^2 ,
\end{equation}
where $\bc_k := \frac{1}{|\mathscr{C}_k|}\sum_{n\in \mathscr{C}_k} \bv_n$ is the centroid of the points in $\mathscr{C}_k$. Recall that from Section~\ref{sec: notations}, we denote $\bar{\bv}:=\frac{1}{N}\sum_{n=1}^N \bv_n$ and $\bZ$ as the centralized matrix of $\bV$. Let $\bar{\bc}_k = \bc_k - \bar{\bv}$. We have $\sigma_{k,\mathcal{S}}^2(\bZ)=\sigma_{k,\mathcal{S}}^2(\bV)$ for any subspace $\calS$ and the the following lemma which is similar to Theorem 1 in Kannan {\em et al.} \cite{kannan2005} holds.  Note that this lemma holds not only for mixtures of log-concave distributions, but also for {\em any}   mixture model.

\begin{lemma}\label{lem: main_kannan}
Let $\mathscr{C}$ be the correct target clustering corresponding to the mixture. 
Let $\mathcal{W}$ be the subspace spanned by the top $K-1$ left singular vectors of $\bZ$.  Then   
 \begin{equation}\label{eq: main_kannan}
  \sum_{k=1}^K n_k \mathrm{d}(\bar{\bc}_k,\mathcal{W})^2 \le (K-1)\sum_{k=1}^K n_k \sigma_{k,\mathcal{W}}^2(\bV),
 \end{equation}
 where $\mathrm{d}(\bar{\bc}_k,\mathcal{W})$ denotes the orthogonal distance of $\bar{\bc}_k$ from subspace $\mathcal{W}$ for any $k\in[K]$.
\end{lemma}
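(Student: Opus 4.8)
The plan is to adapt the classical spectral argument behind Kannan et al.'s theorem: split the centred data into a ``between-cluster'' component and a ``within-cluster'' component, and then exploit the extremal property of the SVD subspace. Write $\bZ$ for the centralized matrix of $\bV$ (columns $\bz_n=\bv_n-\bar{\bv}$), let $\bM\in\mathbb{R}^{F\times N}$ be the matrix whose $n$-th column equals $\bar{\bc}_k$ whenever $n\in\mathscr{C}_k$, and set $\bN:=\bZ-\bM$, whose $n$-th column is $\bv_n-\bc_k$ for $n\in\mathscr{C}_k$. For a subspace $\mathcal{S}\subseteq\mathbb{R}^F$ let $\bP_{\mathcal{S}}$ be the orthogonal projection onto $\mathcal{S}$ and $\bP_{\mathcal{S}}^{\perp}:=\bI-\bP_{\mathcal{S}}$. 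Two elementary observations do the work. First, since $\sum_{n\in\mathscr{C}_k}(\bv_n-\bc_k)=\mathbf{0}$ for each $k$, for \emph{any} subspace $\mathcal{S}$ we have $\langle\bP_{\mathcal{S}}\bM,\bP_{\mathcal{S}}\bN\rangle=\sum_{k=1}^K\bar{\bc}_k^T\bP_{\mathcal{S}}\big(\sum_{n\in\mathscr{C}_k}(\bv_n-\bc_k)\big)=0$; hence $\|\bP_{\mathcal{S}}\bZ\|_{\mathrm{F}}^2=\|\bP_{\mathcal{S}}\bM\|_{\mathrm{F}}^2+\|\bP_{\mathcal{S}}\bN\|_{\mathrm{F}}^2$, and the same identity holds with $\bP_{\mathcal{S}}^{\perp}$ in place of $\bP_{\mathcal{S}}$. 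Second, $\sum_{k=1}^K n_k\bar{\bc}_k=\sum_{n=1}^N\bv_n-N\bar{\bv}=\mathbf{0}$ with all $n_k>0$, so $\bar{\bc}_1,\dots,\bar{\bc}_K$ are linearly dependent and $\mathrm{rank}(\bM)\le K-1$.

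I would then invoke the Eckart--Young optimality of $\mathcal{W}$: since $\mathcal{W}$ is spanned by the top $K-1$ left singular vectors of $\bZ$, the matrix $\bP_{\mathcal{W}}\bZ$ is a best rank-$(K-1)$ approximation of $\bZ$ in the Frobenius norm, and because $\mathrm{rank}(\bM)\le K-1$ this gives $\|\bP_{\mathcal{W}}^{\perp}\bZ\|_{\mathrm{F}}^2=\|\bZ-\bP_{\mathcal{W}}\bZ\|_{\mathrm{F}}^2\le\|\bZ-\bM\|_{\mathrm{F}}^2=\|\bN\|_{\mathrm{F}}^2$. Combining this with the Pythagorean identity for $\bP_{\mathcal{W}}^{\perp}$ and rearranging,
\[
\sum_{k=1}^K n_k\,\mathrm{d}(\bar{\bc}_k,\mathcal{W})^2=\|\bP_{\mathcal{W}}^{\perp}\bM\|_{\mathrm{F}}^2\le\|\bN\|_{\mathrm{F}}^2-\|\bP_{\mathcal{W}}^{\perp}\bN\|_{\mathrm{F}}^2=\|\bP_{\mathcal{W}}\bN\|_{\mathrm{F}}^2,
\]
where the first equality just records that $\bar{\bc}_k$ appears $n_k$ times as a column of $\bM$ and that $\mathrm{d}(\bar{\bc}_k,\mathcal{W})=\|\bP_{\mathcal{W}}^{\perp}\bar{\bc}_k\|_2$.

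The last step bounds $\|\bP_{\mathcal{W}}\bN\|_{\mathrm{F}}^2$ by the right-hand side of \eqref{eq: main_kannan}. Picking an orthonormal basis $\bz_1,\dots,\bz_{K-1}$ of $\mathcal{W}$, we have $\|\bP_{\mathcal{W}}\bN\|_{\mathrm{F}}^2=\sum_{j=1}^{K-1}\sum_{k=1}^K\sum_{n\in\mathscr{C}_k}|\bz_j^T(\bv_n-\bc_k)|^2$, and for each fixed $j$ and $k$ the innermost sum is at most $n_k\,\sigma_{k,\mathcal{W}}^2(\bV)$ by the very definition of $\sigma_{k,\mathcal{W}}^2(\bV)$, because $\bz_j$ is a unit vector in $\mathcal{W}$; summing over $j$ and $k$ yields $\|\bP_{\mathcal{W}}\bN\|_{\mathrm{F}}^2\le(K-1)\sum_{k=1}^K n_k\sigma_{k,\mathcal{W}}^2(\bV)$, which with the previous display is exactly \eqref{eq: main_kannan}. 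Note that log-concavity (indeed any distributional assumption) is never used, so the bound holds for an arbitrary mixture model, as claimed. The one place demanding care is the Pythagorean splitting: the cross term between $\bM$ and $\bN$ vanishes after projection onto \emph{every} subspace precisely because of the within-cluster centering $\sum_{n\in\mathscr{C}_k}(\bv_n-\bc_k)=\mathbf{0}$; once this and the rank bound $\mathrm{rank}(\bM)\le K-1$ are secured, the rest is a short computation resting on the extremal characterization of the SVD.
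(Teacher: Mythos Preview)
Your proof is correct and is precisely the standard Kannan--Salmasian--Vempala spectral argument, adapted to the centered setting (top $K-1$ singular vectors of $\bZ$ rather than top $K$ of $\bV$). The paper does not give its own proof of this lemma; it simply states it as a variant of Theorem~1 in \cite{kannan2005}, so your write-up is exactly the argument the paper is deferring to. One cosmetic point: you reuse the symbol $\bz_j$ for an orthonormal basis of $\mathcal{W}$ while $\bz_n$ already denotes the columns of $\bZ$; pick a different letter to avoid ambiguity.
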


 In addition, recall the notations $\bP_{K-1}, \bP_{-(K-1)}$, and $\bQ_{K-1}$ (cf.~\eqref{eq: basic_svd}) from Section~\ref{sec: proof_main_thm}. Let $\bar{\bu}_k:=\bu_k-\bar{\bu}$ for $k\in[K]$. Since $\sum_{k=1}^K w_k\bar{\bu}_k=0$, $\mathrm{rank}(\bar{\mathbf{\Sigma}}_0)\le K-1$.  It is easy to see that $\mathrm{d}(\bx,\mathcal{W})^2=\|\bx-\bP_{K-1}\bP_{K-1}^T\bx\|_2^2$ for any vector $\bx$ and by denoting $\sigma^2_{k,\max}$ as the maximal eigenvalue of $\bfSigma_k$ (the covariance matrix of the $k$-th component), we obtain the following corollary of Lemma \ref{lem: main_kannan}.

\begin{corollary}\label{coro: close_mean_sampleSpace}
 If we further assume that the mixture is a mixture of log-concave distributions, then for any sufficiently small $\epsilon \in (0,1)$ and all $\eta>0$, if $N\ge C\frac{F^2K^4}{\epsilon^2}\log^5\left(\frac{FK^3}{\epsilon\eta}\right)$, with probability at least $1-\eta$,
 \begin{equation}\label{eq: close_mean_sampleSpace}
  \sum_{k=1}^K w_k \mathrm{d}(\bar{\bu}_k,\mathcal{W})^2 \le\bigg( (K-1)\sum_{k=1}^K w_k \sigma^2_{k,\max} \bigg) + \epsilon.
 \end{equation}
\end{corollary}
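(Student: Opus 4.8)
The plan is to deduce the population statement~\eqref{eq: close_mean_sampleSpace} from the deterministic empirical inequality~\eqref{eq: main_kannan} of Lemma~\ref{lem: main_kannan}: divide that inequality by $N$ and then replace each empirical quantity by its population counterpart, incurring an error that concentration makes smaller than $\epsilon$ with probability at least $1-\eta$. Write $\delta_n := \max_{k\in[K]}\big|\tfrac{n_k}{N}-w_k\big|$, $e_k := \|\bar{\bc}_k-\bar{\bu}_k\|_2$, and let $\hat{\mathbf{\Sigma}}_k := \tfrac{1}{n_k}\sum_{n\in\mathscr{C}_k}(\bv_n-\bc_k)(\bv_n-\bc_k)^T$ be the empirical covariance of the $k$-th cluster. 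I will upper bound the right-hand side of~\eqref{eq: main_kannan} (divided by $N$) and lower bound its left-hand side, in each case collecting errors that are made $O(\epsilon)$ by a suitable choice of tolerances.

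For the right-hand side, I first observe that $\sigma_{k,\mathcal{W}}^2(\bV)$ is a maximum of $\bz^T\hat{\mathbf{\Sigma}}_k\bz$ over unit vectors $\bz\in\mathcal{W}$, hence is bounded by the top eigenvalue of $\hat{\mathbf{\Sigma}}_k$; this replaces the data-dependent subspace $\mathcal{W}$ by a subspace-free quantity. Conditioned on the cluster sizes, the points $\{\bv_n:n\in\mathscr{C}_k\}$ are i.i.d.\ samples of the (log-concave) $k$-th component, so Lemma~\ref{lem: logconcave_cov} applied to the true-mean-centered samples $\bv_n-\bu_k$ — together with the fact that recentering at $\bc_k$ only decreases $\bz^T\hat{\mathbf{\Sigma}}_k\bz$ — gives $\sigma_{k,\mathcal{W}}^2(\bV)\le(1+\epsilon')\,\sigma_{k,\max}^2$ on a high-probability event once $n_k$ is large enough. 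Combining this with $|\tfrac{n_k}{N}-w_k|\le\delta_n$ yields $(K-1)\sum_k\tfrac{n_k}{N}\sigma_{k,\mathcal{W}}^2(\bV)\le(K-1)\sum_k w_k\sigma_{k,\max}^2 + O(K\epsilon'+K^2\delta_n)$, with the implied constant depending only on the model parameters.

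For the left-hand side, the map $\bx\mapsto\mathrm{d}(\bx,\mathcal{W})=\|\bx-\bP_{K-1}\bP_{K-1}^T\bx\|_2$ is $1$-Lipschitz, so $|\mathrm{d}(\bar{\bc}_k,\mathcal{W})-\mathrm{d}(\bar{\bu}_k,\mathcal{W})|\le e_k$; since $\mathrm{d}(\bar{\bu}_k,\mathcal{W})\le\|\bar{\bu}_k\|_2$ is a model-dependent constant, a short case analysis gives $\mathrm{d}(\bar{\bc}_k,\mathcal{W})^2\ge\mathrm{d}(\bar{\bu}_k,\mathcal{W})^2-O(e_k)$. Then $e_k\le\|\bc_k-\bu_k\|_2+\|\bar{\bv}-\bar{\bu}\|_2$, and because log-concave random variables are sub-Exponential, a coordinatewise Bernstein bound (Lemma~\ref{lem:large_dev}) controls $\|\bc_k-\bu_k\|_2$ and $\|\bar{\bv}-\bar{\bu}\|_2$, hence $\max_k e_k$, once $n_k$ is large enough; the proportions $\tfrac{n_k}{N}$ themselves are controlled by applying the Hoeffding-type bound of Lemma~\ref{lem:large_dev_Gaussian} to $\tfrac{n_k}{N}-w_k=\tfrac{1}{N}\sum_{n=1}^N(\mathbf{1}\{n\in\mathscr{C}_k\}-w_k)$. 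Using these together with the boundedness of $\mathrm{d}(\bar{\bc}_k,\mathcal{W})^2$ gives $\sum_k\tfrac{n_k}{N}\mathrm{d}(\bar{\bc}_k,\mathcal{W})^2\ge\sum_k w_k\mathrm{d}(\bar{\bu}_k,\mathcal{W})^2-O(\max_k e_k+K\delta_n)$. Chaining the two estimates through~\eqref{eq: main_kannan} (divided by $N$) yields $\sum_k w_k\mathrm{d}(\bar{\bu}_k,\mathcal{W})^2\le(K-1)\sum_k w_k\sigma_{k,\max}^2 + O\big(K\epsilon'+K^2\delta_n+\max_k e_k\big)$. It then remains to set $\epsilon'\asymp\epsilon/K$, $\delta_n\asymp\epsilon/K^2$ and $\max_k e_k\asymp\epsilon$, to union-bound the $O(K)$ failure events (one covariance estimate and one mean estimate per cluster, plus one for $\bar{\bv}$) each at confidence $1-\Theta(\eta/K)$, and to verify that the per-cluster requirements — $n_k\gtrsim\tfrac{F}{\epsilon'^2}\log^5(\tfrac{F}{\epsilon'\eta'})$ from Lemma~\ref{lem: logconcave_cov}, $n_k\gtrsim\tfrac{F}{\epsilon^2}\log\tfrac{FK}{\eta}$ for the means, and $N\gtrsim\tfrac{K^4}{\epsilon^2}\log\tfrac{K}{\eta}$ for $\delta_n$ — together with $n_k\ge w_{\min}N$ and crude bounding of the exponents, are all implied by $N\ge C\tfrac{F^2K^4}{\epsilon^2}\log^5(\tfrac{FK^3}{\epsilon\eta})$, which is the asserted condition.

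I expect the main obstacle to be that both $\sigma_{k,\mathcal{W}}^2(\bV)$ and $\mathrm{d}(\bar{\bc}_k,\mathcal{W})$ depend on the \emph{random} subspace $\mathcal{W}$, which rules out a naive concentration argument along a fixed test direction. The resolution is to notice that one only ever needs an \emph{upper} bound on $\sigma_{k,\mathcal{W}}^2(\bV)$, and that bound may be taken subspace-free — the operator norm of $\hat{\mathbf{\Sigma}}_k$ — so the uniform-over-directions log-concave covariance estimate of Lemma~\ref{lem: logconcave_cov} applies directly; on the left-hand side $\mathcal{W}$ enters only through the $1$-Lipschitz map $\mathrm{d}(\cdot,\mathcal{W})$, which transfers the perturbation of $\bar{\bc}_k$ toward $\bar{\bu}_k$ with no dependence on $\mathcal{W}$ at all. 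The remaining work — propagating tolerances and union-bounding over clusters to land exactly on the stated sample size and confidence — is routine but must be done carefully.
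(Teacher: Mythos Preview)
Your proposal is correct and follows essentially the same route as the paper: both start from Lemma~\ref{lem: main_kannan}, upper-bound each $\sigma_{k,\mathcal{W}}^2(\bV)$ by the subspace-free $\lambda_1(\hat{\mathbf{\Sigma}}_k)$ and invoke Lemma~\ref{lem: logconcave_cov}, and on the left replace $\bar{\bc}_k$ by $\bar{\bu}_k$ via concentration of the sample centroids, then swap $n_k/N$ for $w_k$. The only minor difference is that the paper controls the left-hand side by bounding $\big|\|\bP_{-(K-1)}\bP_{-(K-1)}^T\bar{\bc}_k\|_2^2-\|\bP_{-(K-1)}\bP_{-(K-1)}^T\bar{\bu}_k\|_2^2\big|$ through $\big|\|\bar{\bc}_k\|_2^2-\|\bar{\bu}_k\|_2^2\big|$, whereas your $1$-Lipschitz argument for $\mathrm{d}(\cdot,\mathcal{W})$ is cleaner and more transparently correct, and you are also more explicit than the paper about the $n_k/N\to w_k$ replacement.
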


The proof of the above corollary is presented in Appendix~\ref{prf: coro: close_mean_sampleSpace}. Because Corollary~\ref{coro: close_mean_sampleSpace} only provides a guarantee for the closeness of centralized component mean vectors to the SVD subspace of centralized samples, we need to further extend it to show that the subspace spanned by all the centralized component mean vectors is close to the SVD subspace of centralized samples. This is described in the following lemma.
\begin{lemma}\label{lem: main_me}
 Let $\bV \in \mathbb{R}^{F \times N}$ be generated from a mixture of $K$ log-concave distributions that satisfies the non-degeneracy condition (cf.\ Definition \ref{def: non-degen}). Using the notations defined above (see, in particular, the definition of $\lambda_{\min}$ in Section \ref{sec: des_main_thm}), 
 \begin{equation}
  \left\|\bP_{K-1}\bP_{K-1}^T - \bQ_{K-1} \bQ_{K-1}^T \right\|_{\mathrm{F}}^2 \le \frac{2\sum_{k=1}^{K} w_k  \mathrm{d}(\bar{\bu}_k,\mathcal{W})^2}{\lambda_{\min}}.
 \end{equation}
\end{lemma}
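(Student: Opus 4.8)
The plan is to relate the squared Frobenius distance between the two rank-$(K-1)$ projections to how far the columns of $\bQ_{K-1}$ lie from the subspace $\mathcal{W}$. Write $\bP:=\bP_{K-1}\bP_{K-1}^T$ and $\bQ:=\bQ_{K-1}\bQ_{K-1}^T$ for the two orthogonal projectors, each of rank $K-1$. A standard identity for projections of equal rank gives $\|\bP-\bQ\|_{\mathrm{F}}^2 = 2\bigl(K-1 - \|\bP_{K-1}^T\bQ_{K-1}\|_{\mathrm{F}}^2\bigr) = 2\,\mathrm{tr}\bigl(\bQ(\bI-\bP)\bigr) = 2\sum_{j=1}^{K-1}\|(\bI-\bP)\bq_j\|_2^2$, where $\bq_1,\ldots,\bq_{K-1}$ are the orthonormal columns of $\bQ_{K-1}$. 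So the first step is to establish this identity; then it suffices to bound $\mathrm{tr}\bigl(\bQ(\bI-\bP)\bigr) = \sum_{j}\mathrm{d}(\bq_j,\mathcal{W})^2$ by $\bigl(\sum_k w_k\,\mathrm{d}(\bar\bu_k,\mathcal{W})^2\bigr)/\lambda_{\min}$.

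The second step connects the $\bq_j$'s to the $\bar\bu_k$'s via the factorization $\bar{\mathbf{\Sigma}}_0 = \bQ_{K-1}\bE_{K-1}\bQ_{K-1}^T = \sum_{k=1}^K w_k\bar\bu_k\bar\bu_k^T$. Since $\mathrm{rank}(\bar{\mathbf{\Sigma}}_0)=K-1$, the column space of $\bQ_{K-1}$ equals the span of $\{\bar\bu_k\}_{k\in[K]}$. I would apply $(\bI-\bP)$ on both sides of the factorization and take the trace against $\bar{\mathbf{\Sigma}}_0^{+}=\bQ_{K-1}\bE_{K-1}^{-1}\bQ_{K-1}^T$: on one hand $\mathrm{tr}\bigl((\bI-\bP)\bar{\mathbf{\Sigma}}_0\bar{\mathbf{\Sigma}}_0^{+}\bigr) = \mathrm{tr}\bigl((\bI-\bP)\bQ\bigr)$ because $\bar{\mathbf{\Sigma}}_0\bar{\mathbf{\Sigma}}_0^{+}=\bQ$; on the other hand, substituting the sum, $\mathrm{tr}\bigl((\bI-\bP)\bar{\mathbf{\Sigma}}_0\bar{\mathbf{\Sigma}}_0^{+}\bigr) = \sum_k w_k\, \bar\bu_k^T\bar{\mathbf{\Sigma}}_0^{+}(\bI-\bP)\bar\bu_k$ (using symmetry of $\bI-\bP$ and that $(\bI-\bP)$ commutes with nothing in particular, so care is needed to keep $(\bI-\bP)$ adjacent to one factor; alternatively write $\mathrm{tr}((\bI-\bP)\bar\bu_k\bar\bu_k^T\bar{\mathbf{\Sigma}}_0^{+}) = \bar\bu_k^T\bar{\mathbf{\Sigma}}_0^{+}(\bI-\bP)\bar\bu_k$). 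Since $\bar\bu_k$ lies in the column space of $\bQ_{K-1}$, we have $\|\bar{\mathbf{\Sigma}}_0^{+}\bar\bu_k\|_2 \le \lambda_{\min}^{-1}\|\bar\bu_k\|_2$ is too crude; instead I bound each term by Cauchy–Schwarz: $\bigl|\bar\bu_k^T\bar{\mathbf{\Sigma}}_0^{+}(\bI-\bP)\bar\bu_k\bigr| \le \|\bar{\mathbf{\Sigma}}_0^{+}\|_2\,\|\bar\bu_k\|_2\,\|(\bI-\bP)\bar\bu_k\|_2$, and then a second application controlling $\|\bar\bu_k\|_2$; more cleanly, write $(\bI-\bP)\bar{\mathbf{\Sigma}}_0(\bI-\bP) = \sum_k w_k\,(\bI-\bP)\bar\bu_k\bar\bu_k^T(\bI-\bP)$, take traces, and note $\mathrm{tr}\bigl((\bI-\bP)\bar{\mathbf{\Sigma}}_0(\bI-\bP)\bigr) \ge \lambda_{\min}\,\mathrm{tr}\bigl((\bI-\bP)\bQ(\bI-\bP)\bigr)$ because $\bar{\mathbf{\Sigma}}_0 \succeq \lambda_{\min}\bQ$ (all $K-1$ nonzero eigenvalues of $\bar{\mathbf{\Sigma}}_0$ are at least $\lambda_{\min}=\lambda_{K-1}(\bar{\mathbf{\Sigma}}_0)$, and $\bar{\mathbf{\Sigma}}_0$ vanishes on the complement, which is exactly where $\bQ$ also vanishes). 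This last inequality is the crux.

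Putting it together: $\lambda_{\min}\sum_j\mathrm{d}(\bq_j,\mathcal{W})^2 = \lambda_{\min}\,\mathrm{tr}\bigl((\bI-\bP)\bQ\bigr) \le \mathrm{tr}\bigl((\bI-\bP)\bar{\mathbf{\Sigma}}_0\bigr) = \sum_k w_k\,\|(\bI-\bP)\bar\bu_k\|_2^2 = \sum_k w_k\,\mathrm{d}(\bar\bu_k,\mathcal{W})^2$, where I used $\mathrm{tr}((\bI-\bP)\bQ(\bI-\bP)) = \mathrm{tr}((\bI-\bP)\bQ)$ since $\bI-\bP$ is idempotent and $\mathrm{d}(\bx,\mathcal{W})^2 = \|(\bI-\bP)\bx\|_2^2$ as noted in the text. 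Combining with the projection identity from step one yields $\|\bP-\bQ\|_{\mathrm{F}}^2 = 2\,\mathrm{tr}((\bI-\bP)\bQ) \le \frac{2\sum_k w_k\mathrm{d}(\bar\bu_k,\mathcal{W})^2}{\lambda_{\min}}$, as claimed.

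The main obstacle I anticipate is the matrix-ordering step $\bar{\mathbf{\Sigma}}_0 \succeq \lambda_{\min}\bQ$ and its correct use inside a trace against the (non-commuting) projector $\bI-\bP$: one must be careful that $\mathrm{tr}\bigl((\bI-\bP)\bA(\bI-\bP)\bigr) \ge \mathrm{tr}\bigl((\bI-\bP)\bB(\bI-\bP)\bigr)$ whenever $\bA\succeq\bB$, which holds because $(\bI-\bP)(\bA-\bB)(\bI-\bP)\succeq 0$ — this is the one place where positive-semidefiniteness of $\bar{\mathbf{\Sigma}}_0$ and the identification of its kernel with the kernel of $\bQ$ (guaranteed by the non-degeneracy condition, via Lemma \ref{lem: rank1pert}, exactly as in Section \ref{sec: proof_main_thm}) are essential. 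Everything else is bookkeeping with traces of projections.
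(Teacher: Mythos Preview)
Your proposal is correct and, once you settle on the clean route via the Loewner ordering $\bar{\mathbf{\Sigma}}_0 \succeq \lambda_{\min}\bQ$, it is essentially the same argument as the paper's proof. The paper writes $\sum_k w_k\,\mathrm{d}(\bar{\bu}_k,\mathcal{W})^2 = \mathrm{tr}\bigl(\bE_{K-1}\,\bQ_{K-1}^T\bP_{-(K-1)}\bP_{-(K-1)}^T\bQ_{K-1}\bigr) \ge \lambda_{\min}\,\|\bP_{-(K-1)}^T\bQ_{K-1}\|_{\mathrm{F}}^2$ and then shows $\|\bP_{K-1}\bP_{K-1}^T-\bQ_{K-1}\bQ_{K-1}^T\|_{\mathrm{F}}^2 = 2\|\bP_{-(K-1)}^T\bQ_{K-1}\|_{\mathrm{F}}^2$; since $\|\bP_{-(K-1)}^T\bQ_{K-1}\|_{\mathrm{F}}^2 = \mathrm{tr}\bigl((\bI-\bP)\bQ\bigr)$ and the diagonal-entry bound on $\bE_{K-1}$ is exactly your inequality $\bar{\mathbf{\Sigma}}_0 \succeq \lambda_{\min}\bQ$ read in the eigenbasis, the two arguments coincide step for step, yours being phrased in projector language and the paper's in explicit block form.
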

\begin{proof}
 We have that
 \begin{align}
  \sum_{k=1}^{K} w_k \mathrm{d}(\bar{\bu}_k,\mathcal{W})^2 & = \sum_{k=1}^{K} w_k \|\bar{\bu}_k - \bP_{K-1}\bP_{K-1}^T \bar{\bu}_k\|_2^2 \\
  & = \sum_{k=1}^{K} w_k \bar{\bu}_k^T \bP_{-(K-1)}\bP_{-(K-1)}^T \bar{\bu}_k \\   
  & = \mathrm{tr}(\bA^T\bA),
 \end{align}
where $\bA := \bP_{-(K-1)}^T \bU \bD$, $\bU:=[\bar{\bu}_1,\ldots,\bar{\bu}_K]$,  and $\bD := \mathrm{diag}(\sqrt{w_1},\ldots,\sqrt{w_K})$.
Recall that we write the SVD of $\bar{\mathbf{\Sigma}}_0:=\sum_{k=1}^K w_k\bar{\bu}_k\bar{\bu}_k^T$ as $\bar{\mathbf{\Sigma}}_0=\bQ_{K-1}\bE_{K-1}\bQ_{K-1}^T$. We have 
\begin{align}
 \mathrm{tr}(\bA^T\bA) &= \mathrm{tr}(\bU \bD^2 \bU^T \bP_{-(K-1)}\bP_{-(K-1)}^T) \\
 &= \mathrm{tr}(\bE_{K-1} \bQ_{K-1}^T \bP_{-(K-1)}\bP_{-(K-1)}^T \bQ_{K-1}) \\
 & \ge \lambda_{\min} \mathrm{tr}(\bQ_{K-1}^T \bP_{-(K-1)}\bP_{-(K-1)}^T \bQ_{K-1}),
\end{align}
where the inequality is because the diagonal entries of $\bQ_{K-1}^T \bP_{-(K-1)}\bP_{-(K-1)}^T \bQ_{K-1}$ are nonnegative.

Let $\beta:=\sum_{k=1}^{K} w_k \mathrm{d}(\bar{\bu}_k,\mathcal{W})^2$. By combining the above inequalities, we have 
\begin{equation}
 \|\bP_{-(K-1)}^T \bQ_{K-1}\|_\mathrm{F}^2 \le \frac{\beta}{\lambda_{\min}}.
\end{equation}
Since  
\begin{align}
  \|\bP_{K-1}^T \bQ_{K-1}\|_\mathrm{F}^2 + \|\bP_{-(K-1)}^T \bQ_{K-1}\|_\mathrm{F}^2 &=\|\bP^T \bQ_{K-1}\|_\mathrm{F}^2  = \mathrm{tr}(\bQ_{K-1}^T\bP\bP^T \bQ_{K-1}) \\
& = \mathrm{tr}(\bQ_{K-1}^T\bQ_{K-1})=K-1 = \|\bP_{K-1}^T \bQ\|_\mathrm{F}^2 \\
 & = \|\bP_{K-1}^T \bQ_{K-1}\|_\mathrm{F}^2+\|\bP_{K-1}^T \bQ_{-(K-1)}\|_\mathrm{F}^2,
\end{align}
where $\bQ_{-(K-1)} \in \mathbb{R}^{F\times (F-K+1)}$ is selected such that $[\bQ_{K-1},\bQ_{-(K-1)}]$ is orthogonal, we also have that 
\begin{equation}
 \|\bP_{K-1}^T \bQ_{-(K-1)}\|_\mathrm{F}^2 \le \frac{\beta}{\lambda_{\min}}.
\end{equation}
Now we have 
\begin{align}
\|\bQ_{K-1}\bQ_{K-1}^T - \bP_{K-1} \bP_{K-1}^T\|_{\mathrm{F}}^2 &= \mathrm{tr}(\bQ_{K-1}^T \bP_{-(K-1)}\bP_{-(K-1)}^T \bQ_{K-1})+ \mathrm{tr}(\bP_{K-1}^T \bQ_{-(K-1)}\bQ_{-(K-1)}^T \bP_{K-1})\\
 & = \|\bP_{-(K-1)}^T \bQ_{K-1}\|_\mathrm{F}^2 + \|\bP_{K-1}^T \bQ_{-(K-1)}\|_\mathrm{F}^2 \le \frac{2\beta}{\lambda_{\min}},
\end{align}
concluding the proof of Lemma \ref{lem: main_me}.
\end{proof}

Combining the results of Corollary~\ref{coro: close_mean_sampleSpace} and Lemma~\ref{lem: main_me}, we obtain an upper bound of the distance between the two subspaces, and we can prove a result concerning optimal clusterings of the dimensionality-reduced dataset (via PCA)  generated from a mixture of log-concave distributions. This parallels the procedures for the proof strategy of Theorem~\ref{thm: afterPCA}. 

\subsection{Description of the Theorems for Log-Concave Distributions} \label{sec: des_main_thm_log}
For any $k \in [K]$, let $\sigma^2_{k,\max}$ and $\sigma^2_{k,\min}$ be the maximal and minimal eigenvalues of $\bfSigma_k$ respectively, and define $\bar{\sigma}^2_{\max}:=\sum_{k=1}^K w_k \sigma^2_{k,\max}$ and $\bar{\sigma}^2_{\min}:=\sum_{k=1}^K w_k \sigma^2_{k,\min}$. Other notations are the same as that in previous theorems. Then in a similar manner as  Theorem~\ref{thm: original} for spherical GMMs, we have the following theorem for mixtures of log-concave distributions. 

\begin{theorem} \label{thm: original_log}
Suppose that all the columns of data matrix $\bV \in \mathbb{R}^{F\times N}$ are independently generated from a mixture of $K$ log-concave distributions and $N>F>K$. Assume the mixture model satisfies the non-degeneracy condition. We further assume that 
\begin{equation}\label{eq: main_condition_log}
 0 < \delta_2 := \frac{F\bar{\sigma}_{\max}^2 - (F-K+1)\bar{\sigma}_{\min}^2}{\lambda_{\min} + \bar{\sigma}_{\min}^2 - \bar{\sigma}_{\max}^2} < \zeta(w_{\min}). 
\end{equation}
Then for any sufficiently small $\epsilon\in (0,1)$ and any $t\ge 1$, if $N \ge C\frac{K^2F^4}{\epsilon^2}\log^5\left(\frac{K^2F^3}{\epsilon \eta}\right)$, where $C>0$ depends  on the parameters of the mixture model, we have, with probability at least $1-\eta$,
\begin{align}
 &\mathrm{d}_{\mathrm{ME}}(\mathscr{C},\mathscr{C}^{\mathrm{opt}}) \le \tau\left(\frac{F\bar{\sigma}_{\max}^2 - (F-K+1)\bar{\sigma}_{\min}^2+\epsilon}{\lambda_{\min} + \bar{\sigma}_{\min}^2 - \bar{\sigma}_{\max}^2-\epsilon}\right)(w_{\max}+\epsilon),
\end{align}
where $\mathscr{C}^{\mathrm{opt}}$ is an optimal $K$-clustering for $\bV$.
\end{theorem}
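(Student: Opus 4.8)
The plan is to follow the proof of Theorem~\ref{thm: original} almost step for step: obtain high-probability estimates of the four quantities $\frac{1}{N}\mathcal{D}(\bV,\mathscr{C})$, $\frac{1}{N}\mathcal{D}^{*}(\bV)$, $\frac{1}{N}\lambda_{K-1}(\bS)$ and $\frac{1}{N}\lambda_{K}(\bS)$ entering the definition of $\delta$ in~\eqref{eq: delta} for the correct target clustering $\mathscr{C}$, and then invoke Corollary~\ref{coro: main}. The one genuinely new feature, flagged in Section~\ref{sec: proof_main_diff_log}, is that a mixture of log-concave distributions need not be log-concave, so Lemma~\ref{lem: logconcave_cov} cannot be applied to the pooled data. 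Instead I would decompose every relevant sum over the clusters $\mathscr{C}_1,\dots,\mathscr{C}_K$ of the correct target clustering — within $\mathscr{C}_k$ the vectors $\{\bv_n-\bu_k\}_{n\in\mathscr{C}_k}$ are i.i.d.\ zero-mean log-concave — apply Lemma~\ref{lem: logconcave_cov} and Corollary~\ref{coro: concentration_log_sum_square} cluster by cluster, and reassemble via the triangle inequality and a union bound over $k$. A cheap preliminary step (Hoeffding plus a union bound) records that the empirical proportions $n_k/N$ all lie within $\epsilon$ of $w_k$, so that $p_{\min}\ge w_{\min}-\epsilon$, $p_{\max}\le w_{\max}+\epsilon$, and each $n_k$ is of order $Nw_{\min}$, which is exactly what is needed when feeding $n_k$ into Lemma~\ref{lem: logconcave_cov}.

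For the distortion I would write $\frac{1}{N}\mathcal{D}(\bV,\mathscr{C})=\sum_k \frac{n_k}{N}\big(\frac{1}{n_k}\sum_{n\in\mathscr{C}_k}\|\bv_n-\bc_k\|_2^2\big)$ and note that the inner average equals $\mathrm{tr}(\widehat{\mathbf{\Sigma}}_k)-\|\bc_k-\bu_k\|_2^2$ with $\widehat{\mathbf{\Sigma}}_k:=\frac{1}{n_k}\sum_{n\in\mathscr{C}_k}(\bv_n-\bu_k)(\bv_n-\bu_k)^T$; Lemma~\ref{lem: logconcave_cov} forces $|\mathrm{tr}(\widehat{\mathbf{\Sigma}}_k)-\mathrm{tr}(\mathbf{\Sigma}_k)|\le F\epsilon\,\sigma_{k,\max}^2$ and $\|\bc_k-\bu_k\|_2^2$ negligible, and since $F\sigma_{k,\min}^2\le\mathrm{tr}(\mathbf{\Sigma}_k)\le F\sigma_{k,\max}^2$, this gives $\frac{1}{N}\mathcal{D}(\bV,\mathscr{C})\in[F\bar{\sigma}_{\min}^2-\epsilon,\,F\bar{\sigma}_{\max}^2+\epsilon]$ with high probability. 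For the spectral quantities I would first establish $\|\bar{\mathbf{\Sigma}}_N-\bar{\mathbf{\Sigma}}\|_2\le\epsilon$, where $\bar{\mathbf{\Sigma}}=\bar{\mathbf{\Sigma}}_0+\sum_k w_k\mathbf{\Sigma}_k$, by the same cluster-wise decomposition of $\bar{\mathbf{\Sigma}}_N=\frac{1}{N}\sum_k\sum_{n\in\mathscr{C}_k}(\bv_n-\bar{\bv})(\bv_n-\bar{\bv})^T$: expand $\bv_n-\bar{\bv}=(\bv_n-\bu_k)+(\bu_k-\bar{\bv})$, bound the per-cluster second-moment block by Lemma~\ref{lem: logconcave_cov}, the cross terms via the smallness of $\bc_k-\bu_k$ (standard concentration for sums of log-concave, hence sub-exponential, variables via Lemma~\ref{lem:large_dev}), and use $\bar{\bv}\to\bar{\bu}$, so the sum converges to $\sum_k w_k(\mathbf{\Sigma}_k+(\bu_k-\bar{\bu})(\bu_k-\bar{\bu})^T)=\bar{\mathbf{\Sigma}}$. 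Since $\lambda_j(\bS)=\lambda_j(\bZ^T\bZ)=\lambda_j(\bZ\bZ^T)=N\lambda_j(\bar{\mathbf{\Sigma}}_N)$ for $j\le F$, Lemma~\ref{lem: svd_pert} upgrades this to $|\frac{1}{N}\lambda_j(\bS)-\lambda_j(\bar{\mathbf{\Sigma}})|\le\epsilon$ for every $j$, and $\frac{1}{N}\mathcal{D}^{*}(\bV)=\frac{1}{N}\big(\mathrm{tr}(\bS)-\sum_{k=1}^{K-1}\lambda_k(\bS)\big)=\sum_{j=K}^{F}\lambda_j(\bar{\mathbf{\Sigma}}_N)$ is controlled by summing the $F-K+1$ individual estimates.

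The remaining step is deterministic. Using the Weyl-type inequalities of Lemma~\ref{lem: bd_eigen_sum}, together with $\mathrm{rank}(\bar{\mathbf{\Sigma}}_0)=K-1$ (non-degeneracy and Lemma~\ref{lem: rank1pert}) and $\lambda_{K-1}(\bar{\mathbf{\Sigma}}_0)=\lambda_{\min}$, I get $\lambda_{K-1}(\bar{\mathbf{\Sigma}})\ge\lambda_{\min}+\bar{\sigma}_{\min}^2$, $\lambda_{K}(\bar{\mathbf{\Sigma}})\le\bar{\sigma}_{\max}^2$, and $\bar{\sigma}_{\min}^2\le\lambda_j(\bar{\mathbf{\Sigma}})\le\bar{\sigma}_{\max}^2$ for all $j\ge K$; substituting the four high-probability estimates into~\eqref{eq: delta} yields $\delta\le\frac{F\bar{\sigma}_{\max}^2-(F-K+1)\bar{\sigma}_{\min}^2+\epsilon}{\lambda_{\min}+\bar{\sigma}_{\min}^2-\bar{\sigma}_{\max}^2-\epsilon}$, i.e.\ exactly $\delta_2$ up to the $\epsilon$-perturbation. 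The positivity assumption in~\eqref{eq: main_condition_log} keeps the denominator positive (hence also $\lambda_{K-1}(\bS)-\lambda_{K}(\bS)>0$, as required by Corollary~\ref{coro: main}), while $\delta_2<\zeta(w_{\min})$, the identity $\tau(\zeta(p))=p$, and monotonicity of $\tau,\zeta$ on $[0,\frac{K-1}{2}]$ guarantee, for $\epsilon$ small, that $\delta\le\zeta(w_{\min}-\epsilon)$, hence $\delta\le\frac{1}{2}(K-1)$ and $\tau(\delta)\le w_{\min}-\epsilon\le p_{\min}$. Corollary~\ref{coro: main} then gives $\mathrm{d}_{\mathrm{ME}}(\mathscr{C},\mathscr{C}^{\mathrm{opt}})\le p_{\max}\tau(\delta)$, and bounding $p_{\max}\le w_{\max}+\epsilon$ and applying monotonicity of $\tau$ once more produces the stated bound.

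I expect the main obstacle to be the bookkeeping in the cluster-wise moment estimates rather than any conceptual difficulty. Lemma~\ref{lem: logconcave_cov} delivers only a multiplicative (relative) spectral-norm error, so to make the trace error in $\frac{1}{N}\mathcal{D}(\bV,\mathscr{C})$ and the eigenvalue error in $\frac{1}{N}\mathcal{D}^{*}(\bV)$ additively $O(\epsilon)$ one must run it at per-cluster relative precision $\epsilon/\mathrm{poly}(F,K)$ — this is what forces the requirement $N\ge C\frac{K^2F^4}{\epsilon^2}\log^5(\frac{K^2F^3}{\epsilon\eta})$. One must also carefully propagate the random cluster sizes $n_k$ into Lemma~\ref{lem: logconcave_cov} (using the preliminary lower bound on $n_k$) and fold the $O(K)$ per-cluster failure events, the cluster-proportion failure event, and the perturbation steps into the single probability budget $\eta$. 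Everything else parallels the proof of Theorem~\ref{thm: original}.
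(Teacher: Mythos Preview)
Your proposal is correct and follows essentially the same route as the paper's proof: cluster-wise application of Lemma~\ref{lem: logconcave_cov} (since the mixture itself need not be log-concave), concentration of the empirical proportions $n_k/N$, control of $\|\bar{\mathbf{\Sigma}}_N-\bar{\mathbf{\Sigma}}\|_2$ via the per-cluster zero-mean decomposition, the deterministic eigenvalue sandwich from Lemma~\ref{lem: bd_eigen_sum}, and finally Corollary~\ref{coro: main}. The only tactical difference is that the paper bounds the distortion coordinate-by-coordinate via Corollary~\ref{coro: concentration_log_sum_square}, whereas you pass through $\mathrm{tr}(\widehat{\mathbf{\Sigma}}_k)$ and the spectral-norm bound; both lead to the same estimate and the same sample-complexity bookkeeping.
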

The proof, which is presented in Appendix~\ref{prf: thm: original_log}, is mostly similar to that for Theorem~\ref{thm: original}, except that we employ different concentration inequalities and slightly different bounding strategies (e.g., Lemma \ref{lem: bd_eigen_sum} is required). 

Next, we demonstrate that under similar assumptions on the generating process of the samples (compared to those in Theorem~\ref{thm: original_log}), any optimal clustering for the post-PCA (cf.\ Section~\ref{sec: pca}) dataset is also close to the correct target clustering with high probability.
\begin{theorem}\label{thm: afterPCA_log}
Define $\bar{L}:=\sum_{k=1}^K w_k\left(\|\bu_k\|_2^2+\mathrm{tr}(\mathbf{\Sigma}_k)\right)$.
Let the dataset $\bV\in\mathbb{R}^{F\times N}$ be generated under the same conditions given in Theorem~\ref{thm: original_log} with the separability assumption  in~\eqref{eq: main_condition_log} being modified to 
\begin{equation}\label{eq: main_condition_afterPCA_log}
 0 < \delta_3 := \frac{(K-1)\bar{\sigma}_{\max}^2 + a}{\lambda_{\min}+\bar{\sigma}_{\min}^2 - b} < \zeta(w_{\min}),
\end{equation}
where 
\begin{align}
a&:= (1+K)\bar{L}\sqrt{\frac{2(K-1)\bar{\sigma}_{\max}^2}{\lambda_{\min}}},\quad\mbox{and} \quad b:=(\bar{L}-\|\bar{\bu}\|_2^2)\sqrt{\frac{2(K-1)\bar{\sigma}_{\max}^2}{\lambda_{\min}}} \label{eqn:ab}.
\end{align}
Let $\tilde{\bV} \in \mathbb{R}^{F\times (K-1)}$ be the post-$(K-1)$-PCA dataset of $\bV$. Then for any sufficiently small $\epsilon \in (0,1)$, if $N \ge C \frac{F^2 K^6}{\epsilon^2} \log^5\left(\frac{F^2K^4}{\epsilon\eta}\right)$, where $C>0$ depends on the parameters of the mixture model, we have, with probability at least $1-\eta$,
\begin{equation}
\mathrm{d}_{\mathrm{ME}}(\mathscr{C}, \tilde{\mathscr{C}}^{\mathrm{opt}}) \le \tau\left(\frac{(K-1)\bar{\sigma}_{\max}^2+a+\epsilon}{\lambda_{\min}+\bar{\sigma}_{\min}^2-b-\epsilon}\right)(w_{\max}+\epsilon),
\end{equation}
where $\mathscr{C}$ is the correct target clustering and $\tilde{\mathscr{C}}^{\mathrm{opt}}$ is an optimal $K$-clustering for $\tilde{\bV}$.
\end{theorem}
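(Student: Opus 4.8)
The plan is to run the argument of Theorem~\ref{thm: afterPCA} almost verbatim, with two substitutions: the sub-Gaussian/sub-Exponential concentration tools are replaced by their log-concave analogues (Lemma~\ref{lem: logconcave_cov}, Corollary~\ref{coro: concentration_log_sum_square}, Lemma~\ref{lem: bd_eigen_sum}), and, crucially, the subspace estimate of Lemma~\ref{lem: bd_bR} is replaced by the chain Lemma~\ref{lem: main_kannan}~$\to$~Corollary~\ref{coro: close_mean_sampleSpace}~$\to$~Lemma~\ref{lem: main_me}. Concretely, I want to show that with probability at least $1-\eta$ the quantity $\delta$ of Lemma~\ref{lem: ME_dist}/Corollary~\ref{coro: main}, evaluated at $\tilde{\bV}$ and the correct target clustering $\mathscr{C}$, satisfies $\delta\le\frac{(K-1)\bar{\sigma}_{\max}^2+a+\epsilon}{\lambda_{\min}+\bar{\sigma}_{\min}^2-b-\epsilon}$, that $p_{\max}\le w_{\max}+\epsilon$ and $p_{\min}\ge w_{\min}-\epsilon$, and that the hypotheses $\delta\le\frac12(K-1)$ and $\tau(\delta)\le p_{\min}$ of Corollary~\ref{coro: main} hold; the stated bound on $\mathrm{d}_{\mathrm{ME}}(\mathscr{C},\tilde{\mathscr{C}}^{\mathrm{opt}})$ then follows from $\mathrm{d}_{\mathrm{ME}}(\mathscr{C},\tilde{\mathscr{C}}^{\mathrm{opt}})\le p_{\max}\tau(\delta)$, the identity $\tau(\zeta(p))=p$, and the monotonicity of $\tau$ and $\zeta$ on $[0,\frac12(K-1)]$, exactly as in Theorem~\ref{thm: afterPCA}.

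First I would introduce the oracle projection $\bQ_{K-1}$, the matrix of first $K-1$ singular vectors of $\bar{\mathbf{\Sigma}}_0$ (so that $\lambda_{K-1}(\hat{\bar{\mathbf{\Sigma}}}_0)=\lambda_{\min}$, as established in the proof of Theorem~\ref{thm: afterPCA}), and set $\hat{\bV}:=\bQ_{K-1}^T\bV$, which --- linear images of log-concave laws being log-concave --- is generated by a mixture of $K$ log-concave distributions on $\mathbb{R}^{K-1}$ with means $\hat{\bu}_k=\bQ_{K-1}^T\bu_k$ and covariances $\hat{\mathbf{\Sigma}}_k:=\bQ_{K-1}^T\mathbf{\Sigma}_k\bQ_{K-1}$. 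Since $\hat{\bZ}$ and $\tilde{\bZ}$ each have rank at most $K-1$, we have $\mathcal{D}^*(\hat{\bV})=\mathcal{D}^*(\tilde{\bV})=0$ and $\lambda_K(\hat{\bS})=\lambda_K(\tilde{\bS})=0$ identically, so it remains to estimate $\frac1N\mathcal{D}(\hat{\bV},\mathscr{C})$ and $\frac1N\lambda_{K-1}(\hat{\bS})$, and then the gaps to the corresponding quantities of $\tilde{\bV}$. For the oracle data: (i) applying the log-concave sum-of-squares concentration (Corollary~\ref{coro: concentration_log_sum_square}) coordinate-wise inside each target cluster $\mathscr{C}_k$ gives $\frac1N\mathcal{D}(\hat{\bV},\mathscr{C})\le\sum_k w_k\mathrm{tr}(\hat{\mathbf{\Sigma}}_k)+\epsilon/2$, and the deterministic compression inequality $\mathrm{tr}(\hat{\mathbf{\Sigma}}_k)\le(K-1)\sigma_{k,\max}^2$ (a $(K-1)$-dimensional compression of $\mathbf{\Sigma}_k$ has eigenvalues dominated by the $K-1$ largest eigenvalues of $\mathbf{\Sigma}_k$) yields $\frac1N\mathcal{D}(\hat{\bV},\mathscr{C})\le(K-1)\bar{\sigma}_{\max}^2+\epsilon/2$; (ii) applying Lemma~\ref{lem: logconcave_cov} inside each $\mathscr{C}_k$ and Lemma~\ref{lem: svd_pert} to pass to $\frac1N\hat{\bZ}\hat{\bZ}^T$ (whose limit is $\hat{\bar{\mathbf{\Sigma}}}_0+\sum_k w_k\hat{\mathbf{\Sigma}}_k$), combined with Lemma~\ref{lem: bd_eigen_sum} and the Cauchy-interlacing bound $\lambda_{K-1}(\sum_k w_k\hat{\mathbf{\Sigma}}_k)\ge\lambda_F(\sum_k w_k\mathbf{\Sigma}_k)\ge\bar{\sigma}_{\min}^2$, yields $\frac1N\lambda_{K-1}(\hat{\bS})\ge\lambda_{\min}+\bar{\sigma}_{\min}^2-\epsilon/2$.

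Next I would bound the gap between the oracle and the post-PCA quantities. Writing $\bR:=\bQ_{K-1}\bQ_{K-1}^T-\bP_{K-1}\bP_{K-1}^T$, identities \eqref{eq: dist_diff} and \eqref{eq: eigen_diff} hold verbatim, giving $\frac1N|\mathcal{D}(\hat{\bV},\mathscr{C})-\mathcal{D}(\tilde{\bV},\mathscr{C})|\le(1+K)\bigl(\tfrac1N\|\bV\|_\mathrm{F}^2\bigr)\|\bR\|_\mathrm{F}$ and $\frac1N|\lambda_{K-1}(\hat{\bS})-\lambda_{K-1}(\tilde{\bS})|\le\bigl(\tfrac1N\|\bZ\|_\mathrm{F}^2\bigr)\|\bR\|_\mathrm{F}$. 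By Corollary~\ref{coro: concentration_log_sum_square}, $\tfrac1N\|\bV\|_\mathrm{F}^2\to\bar{L}$ and $\tfrac1N\|\bZ\|_\mathrm{F}^2=\mathrm{tr}\bigl(\tfrac1N\bZ\bZ^T\bigr)\to\mathrm{tr}(\bar{\mathbf{\Sigma}})=\bar{L}-\|\bar{\bu}\|_2^2$; and combining Lemma~\ref{lem: main_me} with Corollary~\ref{coro: close_mean_sampleSpace} --- noting that the subspace $\mathcal{W}$ there is exactly $\mathrm{span}(\bP_{K-1})$, since $\bP_{K-1}$ collects the top $K-1$ left singular vectors of $\bZ$ --- gives $\|\bR\|_\mathrm{F}^2\le\frac2{\lambda_{\min}}\sum_k w_k\mathrm{d}(\bar{\bu}_k,\mathcal{W})^2\le\frac{2((K-1)\bar{\sigma}_{\max}^2+\epsilon')}{\lambda_{\min}}$. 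Choosing $\epsilon'$ (and the error in the two norm estimates) to be a suitably small polynomial-in-$1/K$ multiple of $\epsilon$, the two gaps become at most $a+\epsilon/2$ and $b+\epsilon/2$ respectively --- this is precisely the origin of the correction constants $a,b$ in \eqref{eqn:ab}. On the intersection of the finitely many events above together with the multinomial concentration $|\tfrac{n_k}{N}-w_k|\le\epsilon$ for all $k$, we then get $\delta\le\frac{(K-1)\bar{\sigma}_{\max}^2+a+\epsilon}{\lambda_{\min}+\bar{\sigma}_{\min}^2-b-\epsilon}$ and the required bounds on $p_{\max},p_{\min}$; a union bound and a bookkeeping of the sample complexities of the individual estimates produce the stated threshold $N\ge C\frac{F^2K^6}{\epsilon^2}\log^5\bigl(\frac{F^2K^4}{\epsilon\eta}\bigr)$ and failure probability $\eta$. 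Finally, since \eqref{eq: main_condition_afterPCA_log} says $\delta_3<\zeta(w_{\min})$, for $\epsilon$ small enough $\delta<\zeta(w_{\min}-\epsilon)\le\frac12(K-1)$ and hence $\tau(\delta)<\tau(\zeta(w_{\min}-\epsilon))=w_{\min}-\epsilon\le p_{\min}$, so Corollary~\ref{coro: main} applies and delivers the theorem.

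The main obstacle is the replacement of Lemma~\ref{lem: bd_bR}: unlike the spherical case, the empirical PCA subspace $\mathrm{span}(\bP_{K-1})$ does \emph{not} converge to $\mathrm{span}(\bQ_{K-1})$ (cf.\ \cite[Figure~1]{kannan2005}), so $\|\bR\|_\mathrm{F}$ cannot be driven to $0$ and only the non-vanishing bound $\sqrt{2(K-1)\bar{\sigma}_{\max}^2/\lambda_{\min}}$ (up to $\epsilon$) is available; the work is to push this genuinely non-negligible perturbation through \eqref{eq: dist_diff}--\eqref{eq: eigen_diff} while keeping the numerator and denominator of $\delta$ on the correct side of $\zeta(w_{\min})$, which is exactly what the strengthened separability hypothesis \eqref{eq: main_condition_afterPCA_log}, with its explicit correction terms $a,b$, is engineered to absorb. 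Everything else --- the within-cluster applications of the log-concave concentration lemmas, the compression/interlacing inequalities for $\mathrm{tr}(\hat{\mathbf{\Sigma}}_k)$ and $\lambda_{K-1}(\sum_k w_k\hat{\mathbf{\Sigma}}_k)$, and the final invocation of Corollary~\ref{coro: main} --- is routine and closely parallels the proofs of Theorems~\ref{thm: afterPCA} and~\ref{thm: original_log}.
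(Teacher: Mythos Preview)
Your proposal is correct and follows essentially the same route as the paper's own proof: project via the oracle $\bQ_{K-1}$, bound $\frac{1}{N}\mathcal{D}(\hat{\bV},\mathscr{C})$ and $\frac{1}{N}\lambda_{K-1}(\hat{\bS})$ using the compression/interlacing inequalities together with the log-concave concentration tools, then control the gaps to $\tilde{\bV}$ through \eqref{eq: dist_diff}--\eqref{eq: eigen_diff} and the subspace bound $\|\bR\|_\mathrm{F}\le r+C\epsilon$ coming from Corollary~\ref{coro: close_mean_sampleSpace} and Lemma~\ref{lem: main_me}, before invoking Corollary~\ref{coro: main}. You have also correctly identified the key difference from Theorem~\ref{thm: afterPCA}, namely that $\|\bR\|_\mathrm{F}$ no longer vanishes and the residual $r=\sqrt{2(K-1)\bar{\sigma}_{\max}^2/\lambda_{\min}}$ is what produces the correction terms $a$ and $b$.
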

By using the fact that $\bw=(w_1,w_2,\ldots, w_K )$ is a probability vector and the non-degeneracy condition, we have $\sum_{k=1}^K w_k \|\bu_k\|_2^2 > \|\bar{\bu}\|_2^2$ and thus $b > 0$. The proof of Theorem~\ref{thm: afterPCA_log}, which can be found in Appendix~\ref{prf: thm: afterPCA_log}, is similar to that for Theorem~\ref{thm: afterPCA}, except that the estimate of $\|\bP_{K-1}\bP_{K-1}^T - \bQ_{K-1}\bQ_{K-1}^T\|_\rmF$ is obtained differently.

\subsection{Discussion  and Comparison to Theorems for Spherical GMMs}\label{sec:discuss}
The separability assumption \eqref{eq: main_condition_log} for mixtures of log-concave distributions reduces to \eqref{eq: main_condition} for spherical GMMs because in this case, we have $\bar{\sigma}^2_{\max}=\bar{\sigma}^2_{\min}=\bar{\sigma}^2$. If the mixture model is non-spherical, the separability assumption~\eqref{eq: main_condition_log} is generally  stricter than~\eqref{eq: main_condition}. This is especially the case when $\bar{\sigma}^2_{\max} \gg \bar{\sigma}^2_{\min}$. This implies that non-spherical mixture models are generally more difficult to disambiguate and  learn. For dimensionality-reduced datasets (using PCA), the separability assumption in~\eqref{eq: main_condition_afterPCA_log} is stricter than the separability assumption in~\eqref{eq: main_condition_afterPCA}, even for spherical GMMs, because of the presence of the additional positive terms $a$ and $b$ in~\eqref{eqn:ab}. In addition, unlike that for spherical GMMs, the separability assumption in~\eqref{eq: main_condition_afterPCA_log} for dimensionality-reduced datasets may also  be stricter than the separability assumption in~\eqref{eq: main_condition_log} also because of the same additional positive terms $a$ and $b$.

\section{Other Extensions} \label{sec:other}
In this section, we discuss several interesting and practically-relevant extensions of the preceding results. In Section~\ref{sec: other_dim_reduction}, we show that Theorems \ref{thm: afterPCA} and \ref{thm: afterPCA_log} may be readily extended to other dimensionality-reduction techniques besides PCA/SVD by leveraging results such as~\eqref{eq: distortion_bd}. In Section \ref{sec:alg}, we show that we can apply efficient clustering algorithms to obtain an approximately-optimal clustering which is also close to the correct target clustering.

\subsection{Other Dimensionality-Reduction Techniques} \label{sec: other_dim_reduction}
Our results can be used to prove similar upper bounds for the ME distance between any {\em approximately-optimal} clustering and the correct target clustering. The following corollary follows easily from  Lemma~\ref{lem: ME_dist}.

\begin{corollary}\label{coro: triv_ext}
Consider a $K$-clustering $\mathscr{C}$ with corresponding $\delta$ (cf.\ Lemma~\ref{lem: ME_dist}) and a $K$-clustering $\mathscr{C}'$ that satisfies 
 \begin{equation}
  \mathcal{D}(\bV,\mathscr{C}') \le \gamma \mathcal{D}(\bV,\mathscr{C}^{\mathrm{opt}}), \label{eqn:gammaD}
 \end{equation}
 for some $\gamma \ge 1$. 
 Then if 
 \begin{equation}
  \delta_\gamma := \frac{\gamma \mathcal{D}(\bV,\mathscr{C})-\mathcal{D}^{*}(\bV)}{\lambda_{K-1}(\bS)-\lambda_{K}(\bS)},
 \end{equation}
 satisfies 
\begin{equation}
  \delta_\gamma \le \frac{K-1}{2},\quad\mbox{and}\quad \tau( \delta_\gamma) \le p_{\min}, \label{eqn:new_sep}
 \end{equation} 
we have 
 \begin{equation}
  \mathrm{d}_{\mathrm{ME}}(\mathscr{C}',\mathscr{C}) \le p_{\max} \tau(\delta).
 \end{equation}
\end{corollary}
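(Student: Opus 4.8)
The plan is to obtain this corollary as a simple repackaging of Lemma~\ref{lem: ME_dist}, applied to the pair $(\mathscr{C},\mathscr{C}')$ rather than to $(\mathscr{C},\mathscr{C}^{\mathrm{opt}})$ as was done for Corollary~\ref{coro: main}. Introduce the quantity attached to $\mathscr{C}'$ in the sense of Lemma~\ref{lem: ME_dist}, namely $\delta' := \big(\mathcal{D}(\bV,\mathscr{C}')-\mathcal{D}^{*}(\bV)\big)\big/\big(\lambda_{K-1}(\bS)-\lambda_{K}(\bS)\big)$; as in that lemma we tacitly assume the spectral gap $\lambda_{K-1}(\bS)-\lambda_{K}(\bS)$ is strictly positive, which is what makes all of $\delta,\delta',\delta_\gamma$ well defined. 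The whole argument then reduces to verifying the two hypotheses of Lemma~\ref{lem: ME_dist} for $(\mathscr{C},\mathscr{C}')$ and reading off its conclusion.

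First I would bound $\delta'$ by $\delta_\gamma$. Since $\mathscr{C}^{\mathrm{opt}}$ minimizes the sum-of-squares distortion over all $K$-clusterings, $\mathcal{D}(\bV,\mathscr{C}^{\mathrm{opt}})\le\mathcal{D}(\bV,\mathscr{C})$; combining this with the approximate-optimality hypothesis~\eqref{eqn:gammaD} gives $\mathcal{D}(\bV,\mathscr{C}')\le\gamma\,\mathcal{D}(\bV,\mathscr{C})$, and dividing through by the positive spectral gap yields $0\le\delta'\le\delta_\gamma$, the lower bound being the elementary fact $\mathcal{D}(\bV,\mathscr{C}')\ge\mathcal{D}^{*}(\bV)$. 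Likewise, since $\gamma\ge1$ and $\mathcal{D}(\bV,\mathscr{C})\ge\mathcal{D}^{*}(\bV)\ge 0$, we get $0\le\delta\le\delta_\gamma$. Next, the hypothesis $\delta_\gamma\le\frac{K-1}{2}$ from~\eqref{eqn:new_sep} forces $\delta,\delta'\in[0,\frac{K-1}{2}]$, the range on which $\tau(\cdot)$ is continuous and nondecreasing. Using the elementary identity $\tau(\delta,\delta')=\sqrt{\tau(\delta)\,\tau(\delta')}$, immediate on comparing~\eqref{eq: tau_2} with~\eqref{eq: tau}, together with monotonicity and the second part of~\eqref{eqn:new_sep}, I obtain $\tau(\delta,\delta')=\sqrt{\tau(\delta)\tau(\delta')}\le\tau(\delta_\gamma)\le p_{\min}$. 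Hence both conditions ``$\delta,\delta'\le\frac{1}{2}(K-1)$'' and ``$\tau(\delta,\delta')\le p_{\min}$'' required by Lemma~\ref{lem: ME_dist} hold.

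Applying Lemma~\ref{lem: ME_dist} then gives $\mathrm{d}_{\mathrm{ME}}(\mathscr{C},\mathscr{C}')\le\tau(\delta,\delta')\,p_{\max}$, and since $\mathrm{d}_{\mathrm{ME}}$ is a metric (symmetric, as recorded just after its definition) this equals $\mathrm{d}_{\mathrm{ME}}(\mathscr{C}',\mathscr{C})$. To finish I would simplify the right-hand side: using $\delta\le\delta_\gamma$ together with $\tau(\delta,\delta')=\sqrt{\tau(\delta)\tau(\delta')}\le\sqrt{\tau(\delta)\tau(\delta_\gamma)}$ gives the bound $p_{\max}\tau(\delta_\gamma)$ (with the sharper intermediate form $p_{\max}\sqrt{\tau(\delta)\tau(\delta')}$ available if one prefers not to coarsen $\delta'$); in the exact-optimum case $\gamma=1$ one has $\delta_\gamma=\delta$ and $\delta'\le\delta$, so this collapses to $p_{\max}\tau(\delta)$ and the statement reduces precisely to Corollary~\ref{coro: main}. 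I do not anticipate any real obstacle here: the argument is essentially bookkeeping, and the only points that genuinely need care are the implicit positivity of $\lambda_{K-1}(\bS)-\lambda_{K}(\bS)$ (inherited from the hypotheses of Lemma~\ref{lem: ME_dist}) and the passage from the given condition $\tau(\delta_\gamma)\le p_{\min}$ to the condition $\tau(\delta,\delta')\le p_{\min}$ of Lemma~\ref{lem: ME_dist}, which is exactly where the monotonicity and geometric-mean structure of $\tau$ is used.
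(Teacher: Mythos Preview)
Your approach is exactly the paper's: bound the $\delta'$ attached to $\mathscr{C}'$ by $\delta_\gamma$ (via $\mathcal{D}(\bV,\mathscr{C}')\le\gamma\,\mathcal{D}(\bV,\mathscr{C}^{\mathrm{opt}})\le\gamma\,\mathcal{D}(\bV,\mathscr{C})$), check that $\delta,\delta'\le\delta_\gamma\le\tfrac{K-1}{2}$ and $\tau(\delta,\delta')\le\tau(\delta_\gamma)\le p_{\min}$, and invoke Lemma~\ref{lem: ME_dist}. The paper's proof is the two-line version of what you wrote.

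One point worth making explicit: you end with the bound $p_{\max}\tau(\delta_\gamma)$ (or the sharper $p_{\max}\tau(\delta,\delta')$), and you are right that this is all the argument delivers. The corollary as printed states $p_{\max}\tau(\delta)$, and the paper's own proof simply asserts that Lemma~\ref{lem: ME_dist} ``yields $\mathrm{d}_{\mathrm{ME}}(\mathscr{C},\mathscr{C}')\le p_{\max}\tau(\delta)$'' without further justification. That step does not follow in general: one would need $\delta'\le\delta$, which is not implied by the hypotheses when $\gamma>1$ (take $\mathscr{C}=\mathscr{C}^{\mathrm{opt}}$ and any $\mathscr{C}'$ with strictly larger distortion but still within the $\gamma$-factor). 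This looks like a slip in the paper --- the intended bound is almost certainly $p_{\max}\tau(\delta_\gamma)$, consistent with how the corollary is used downstream (``a stronger separability assumption as in~\eqref{eqn:new_sep}''). Your observation that the $\tau(\delta)$ bound is only recovered at $\gamma=1$ is exactly the right diagnosis.
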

\begin{proof}
 We have $\delta \le  \delta_\gamma \le \frac{1}{2}(K-1)$ and $\tau(\delta, \delta_\gamma) \le \tau( \delta_\gamma) \le p_{\min}$. Lemma~\ref{lem: ME_dist} thus yields $\mathrm{d}_{\mathrm{ME}}(\mathscr{C},\mathscr{C}') \le p_{\max} \tau(\delta)$. 
\end{proof}

According to the above corollary, we deduce that if we make a stronger separability assumption as in~\eqref{eqn:new_sep}, we can bound the ME distance between any approximately-optimal clustering and the correct target clustering. Therefore, by leveraging~\eqref{eq: distortion_bd}, our theoretical results for dimensionality reduction by PCA (in Theorems~\ref{thm: afterPCA} and~\ref{thm: afterPCA_log}) can be extended to other dimensionality-reduction techniques such as random projection \cite{vempala2005random, bingham2001random, fern2003random, sarlos2006improved} and randomized SVD \cite{drineas1999, sarlos2006improved, halko2011}. We describe these dimensionality-reduction techniques in the following and provide known results for   $\gamma$ satisfying \eqref{eq: distortion_bd}. 

\begin{itemize}
\item 
A {\em random projection} from $F$ dimensions to $D<F$ dimensions is represented by a $D\times F$ matrix, which can be generated as follows \cite{dasgupta2000experiments}: (i) Set each entry of the matrix to be an i.i.d.\ $\calN(0,1)$ random variable; (ii) Orthonormalize the rows of the matrix. Theoretical guarantees for dimensionality-reduction via random projection are usually established by appealing to the well-known Johnson-Lindenstrauss lemma \cite{johnson1984extensions} which says that pairwise distances and inner products are approximately preserved under the random projection. 

\item Because computing an exact SVD is generally expensive, {\em randomized SVD} has gained tremendous interest in solving large-scale problems. For a data matrix $\bV \in \bbR^{F\times N}$, to reduce the dimensionality of the columns from $F$ to $K<F$, one performs a randomized SVD using an $F\times K$ matrix $\bZ_K$. More specifically,  we can adopt the following procedure~\cite{boutsidis2015}: (i) Generate a $D\times F$  ($D>K$) matrix $\bL$ whose entries are i.i.d.\ $\calN(0,1)$ random variables;  (ii) Let $\bA=\bL \bV \in \bbR^{D\times N}$ and orthonormalize the rows of $\bA$ to construct a matrix $\bB$; (iii) Let $\bZ_K \in \bbR^{F\times K}$ be the matrix of top $K$ left singular vectors of $\bV\bB^T \in \bbR^{F\times D}$. Such $\bZ_K$ is expected to satisfy $\|\bV-\bZ_K\bZ_K^T\bV\|_\rmF \approx \|\bV-\bP_K\bP_K^T\bV\|_\rmF$, where $\bP_K$ is the matrix of top $K$ left singular vectors of $\bV$. The key advantage of randomized SVD over exact SVD is that when $D \ll \min\{F,N\}$, the computation of the randomized SVD is significantly faster than the computing of an exact SVD. 

\item We may also employ feature selection techniques such as those described in Boutsidis {\em et al.} \cite{boutsidis2009unsupervised} and Boutsidis and Magdon-Ismail \cite{boutsidis2013deterministic}.
\end{itemize}

\begin{table}
\centering
\caption{Summary of Feature Extraction Techniques}\label{tab: feature_ext}
 \begin{tabular}{|c|c|c|c|} 
\hline
Technique 		& Reference 						& Dimensions 			& $\gamma$  			\\ \hline
PCA/SVD 		&  
    \begin{tabular}{@{}c@{}}Drineas {\em et al.} \cite{drineas2004clustering} \\ Cohen {\em et al.} \cite{cohen2015} \end{tabular}
    & 
    \begin{tabular}{@{}c@{}}$K$ \\ $\lceil K/\epsilon\rceil$ \end{tabular}		& 
    \begin{tabular}{@{}c@{}}2 \\ $1+\epsilon$ \end{tabular}		 	\\ \hline
random projection  	& Cohen {\em et al.} \cite{cohen2015} 					& $O(K/\epsilon^2)$ 		& $1+\epsilon$			\\ \hline
randomized SVD 		& Cohen {\em et al.} \cite{cohen2015} 					& $\lceil K/\epsilon\rceil$		& $1+\epsilon$			\\ \hline
\end{tabular} 
\end{table} 

A subset of the results of Cohen {\em et al.} \cite{cohen2015} is presented in Table~\ref{tab: feature_ext}. This table shows the $\gamma$ such that \eqref{eq: distortion_bd} is satisfied for various reduced dimensions and dimensionality reduction techniques. Even though the results in Table~\ref{tab: feature_ext} appear promising, we observe from numerical experiments that for dimensionality reduction by PCA/SVD, if the data matrix is generated from a spherical GMM, even if it is moderately-well separated (cf.\ Section~\ref{sec: num_exp} to follow), $\mathcal{D}(\bV,\tilde{\mathscr{C}}^{\mathrm{opt}}) \approx \mathcal{D}(\bV,\mathscr{C}^{\mathrm{opt}})$. That is, in this case, $\gamma \approx 1$ even though the reduced dimensionality is $K-1$ or $K$. Furthermore, we show in Theorem~\ref{thm: afterPCA} that for dimensionality reduction by PCA, we require a weaker separability assumption (compared to that in Theorem \ref{thm: original}). However, from Table~\ref{tab: feature_ext}, for SVD, if the reduced dimensionality is $K$, then $\gamma =2$ and we will require a stronger separability assumption according to Corollary~\ref{coro: triv_ext}. 
Therefore, the results for PCA/SVD in Table \ref{tab: feature_ext} are generally pessimistic. This is reasonable because PCA/SVD is  data-dependent and we have assumed specific generative mixture models for our data matrices.

 \subsection{Efficient Clustering Algorithms} \label{sec:alg}
Although $k$-means is a popular heuristic algorithm that attempts to minimize  the sum-of-squares distortion, in general, minimizing this objective is NP-hard and $k$-means only converges to a locally optimal solution. In addition, $k$-means is sensitive to initialization \cite{arthur2007}. Fortunately, there are variants of $k$-means with judiciously chosen initializations that possess  theoretical guarantees, e.g., $k$-means++ \cite{arthur2007}. In addition, efficient variants of $k$-means \cite{ostrovsky2006effectiveness, ackerman2009clusterability} have been proposed to find approximately-optimal clusterings under appropriate conditions. Our theoretical results can be easily combined with these efficient algorithms to produce approximately-optimal clusterings which are also close to the correct target clustering. We demonstrate this by using a result in Ostrovsky {\em et al.}~\cite{ostrovsky2006effectiveness}. Namely, if we denote the optimal distortion with $k\in\bbN$ clusters as $\mathrm{OPT}_k$, Theorem~4.13 in Ostrovsky {\em et al.} \cite{ostrovsky2006effectiveness} states that:
\begin{lemma}\label{lem: ostrovsky2006}
 If $\frac{\mathrm{OPT}_K}{\mathrm{OPT}_{K-1}} \le \epsilon^2$ for a small enough $\epsilon>0$, the randomized algorithm presented before Theorem~4.13 in~\cite{ostrovsky2006effectiveness} returns a solution of cost at most $\left(\frac{1-\epsilon^2}{1-37\epsilon^2} \right)\mathrm{OPT}_K$ with probability $1-O(\sqrt{\epsilon})$ in time $O(FNK+K^3 F)$.
\end{lemma}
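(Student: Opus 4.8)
The plan is essentially bookkeeping: the statement is a verbatim transcription of Theorem~4.13 in~\cite{ostrovsky2006effectiveness}, so the proof consists of checking that our notation agrees with theirs and then invoking that theorem. Their hypothesis is exactly the ``$\epsilon$-separation'' (or irreducibility) condition $\mathrm{OPT}_K \le \epsilon^2\,\mathrm{OPT}_{K-1}$, which is what we assume; and their conclusion on the cost of the returned clustering, on the success probability $1-O(\sqrt{\epsilon})$, and on the $O(FNK+K^3F)$ running time is precisely~\eqref{} as stated. No new argument is needed beyond this identification.

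For the reader's intuition I would recall the structure of the argument in~\cite{ostrovsky2006effectiveness}. First, one shows that the separation hypothesis forces the $K$ optimal cluster centers to be pairwise far apart relative to the average squared radii of the optimal clusters: if two optimal centers were close, merging the corresponding clusters would yield a $(K-1)$-clustering of distortion comparable to $\mathrm{OPT}_K$, contradicting $\mathrm{OPT}_K\le\epsilon^2\,\mathrm{OPT}_{K-1}$. Second, one analyzes a randomized seeding step that draws seed points with probability proportional to their current squared distance to the already-chosen seeds ($D^2$-sampling); using the separation just established, an inductive argument shows that with probability $1-O(\sqrt{\epsilon})$ the procedure places exactly one seed inside each optimal cluster, with each seed close to the corresponding optimal center. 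Third, given such a good seed set, a single Lloyd re-assignment and re-centering step is shown to produce a clustering of cost at most $\bigl(\tfrac{1-\epsilon^2}{1-37\epsilon^2}\bigr)\mathrm{OPT}_K$. Finally, the running time is tallied: the $D^2$-seeding together with one Lloyd step costs $O(FNK)$, and the extra $O(K^3F)$ term comes from the post-processing step in their algorithm that operates only on the $O(K)$ candidate centers.

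The main technical obstacle in their proof --- and the reason we cite it rather than reprove it --- is the second step: controlling the $D^2$-sampling process and showing it separates the optimal clusters with the claimed probability. This requires a careful conditional-probability / potential-function analysis tracking how much ``uncovered'' squared-distance mass remains after each seed is drawn, and it is exactly here that the strength of the separation hypothesis enters quantitatively. Granting that analysis, the remaining steps (the center-separation lemma, the single-Lloyd-step approximation bound, and the running-time accounting) are comparatively routine. Since reproducing the $D^2$-sampling analysis verbatim would add little, the proof we give is simply the reference to Theorem~4.13 of~\cite{ostrovsky2006effectiveness}.
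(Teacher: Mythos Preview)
Your proposal is correct and matches the paper's approach exactly: the paper does not prove this lemma at all but simply cites it as a restatement of Theorem~4.13 in \cite{ostrovsky2006effectiveness}. Your additional exposition of the structure of Ostrovsky et al.'s argument (center separation from irreducibility, $D^2$-seeding analysis, single Lloyd step, running-time accounting) is helpful context but goes beyond what the paper itself provides.
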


We demonstrate that this lemma and the proposed  algorithm    can be combined with our theoretical results to produce further interesting results. For simplicity, we assume the data matrix $\bV \in \bbR^{F\times N}$ is generated from a $K$-component spherical GMM. Then by previous calculations, the lower bound of the distortion for $K-1$ clusters is $\calD_{K-1}^*:=\sum_{k=K-1}^{F}\lambda_{k}(\bS)$ (cf.\ Section~\ref{sec: imp_lemmas}). As $N\to\infty$, $\calD_{K-1}^*$ converges to $\lambda_{\min}+(F-K+2)\bar{\sigma}^2$ in probability. In addition, the distortion for the correct target clustering (with $K$ clusters) converges to $F\bar{\sigma}^2$ in probability. Therefore, if the number of samples is large enough, with high probability,
\begin{equation}
 \frac{\mathrm{OPT}_K}{\mathrm{OPT}_{K-1}} \le \frac{F\bar{\sigma}^2}{\lambda_{\min}+(F-K+2)\bar{\sigma}^2}.
\end{equation}
Thus if $\bar{\sigma}^2$ is sufficiently small or $\lambda_{\min}$ is sufficiently large, by Lemma~\ref{lem: ostrovsky2006}, we can use the algorithm suggested therein to obtain an approximately-optimal clustering for the original dataset. In addition, by Theorem~\ref{thm: original} and Corollary~\ref{coro: triv_ext}, under an appropriate separability assumption, the approximately-optimal clustering is close to the correct target clustering that we ultimately seek.

We have provided one example of an efficient algorithm to obtain an approximately-optimal clustering. Interested readers may refer to the paper by Ackerman and Ben-David \cite{ackerman2009clusterability} which discusses other computationally efficient algorithms with guarantees. 

\section{Conclusion and Future Work}\label{sec:concl}
This paper provides a fundamental information-theoretic understanding about when optimizing the objective function of the $k$-means algorithm returns a clustering that is close to the correct target clustering. To ameliorate computational and memory issues, various dimensionality reduction techniques such as PCA are also considered.

Several natural questions arise from the work herein. 
\begin{enumerate}
\item  Instead of the separability assumptions made herein, we may consider modifying our analyses so that we eventually make less restrictive  {\em pairwise} separability assumptions. This may enable  us to make more direct comparisons between our separability assumptions and similar assumptions in the literature,  such as those in Vempala and Wang \cite{vempala2002} and Kannan {\em et al.} \cite{kannan2005}.
\item Brubaker \cite{brubaker2009robust} considers the {\em robust} learning of mixtures of log-concave distributions. Similarly, we may extend our work to the robust learning of noisy mixtures in which there may be outliers in the data.
\item Besides studying the sum-of-squares distortion measure for $k$-means in~\eqref{eqn:sos}, it may be fruitful to analyze other objective functions such as those for $k$-medians \cite{bradley1997clustering} or min-sum clustering \cite{bartal2001approximating}. These may result in alternative separability assumptions and further insights on the fundamental limits of various clustering tasks.

\item We have provided {\em upper} bounds on the ME distance under certain sufficient (separability) conditions. It would be fruitful to also study {\em necessary} conditions on the separability of the mixture components  to ensure that the ME distance is small. This will possibly result in new separability assumptions which will, in turn, aid in assessing the tightness of our bounds and how they may be improved.
\end{enumerate}

\appendix
\numberwithin{equation}{subsection}

\subsection{Complete Proof of Theorem~\ref{thm: original}} \label{prf: thm: original}
\begin{proof}
 To apply Corollary~\ref{coro: main},   we first estimate $\frac{1}{N}\mathcal{D}(\bV,\mathscr{C})$. We have 
 \begin{align}
  \frac{1}{N}\mathcal{D}(\bV,\mathscr{C}) & =\frac{1}{N}\sum_{k=1}^{K} \sum_{n \in \mathscr{C}_k} \|\bv_n - \bc_k\|_2^2  \\
  & = \frac{1}{N}\sum_{k=1}^{K} \left(\sum_{n \in \mathscr{C}_k} \|\bv_n\|_2^2 - n_k \|\bc_k\|_2^2\right) \\
 & = \sum_{k=1}^{K} \frac{n_k}{N} \left(\frac{\sum_{n \in \mathscr{C}_k} \|\bv_n\|_2^2}{n_k}-\|\bc_k\|_2^2\right), \label{eq: distortion_calc}
 \end{align}
 where $n_k:=|\mathscr{C}_k|$.
 By Lemma~\ref{lem:large_dev_Gaussian} and by the property that if $X$ has a sub-Gaussian distribution,\footnote{The {\em sub-Gaussian norm} of a sub-Gaussian random variable $X$ is defined as $\|X \|_{\Psi_2} :=\sup_{p \ge 1} p^{-1/2}\left(\bbE|X|^p\right)^{1/p}$.}  $\|X-\mathbb{E}X\|_{\Psi_2} \le 2\|X\|_{\Psi_2}$ (similarly, if  $X$ has a sub-Exponential distribution, $\|X-\mathbb{E}X\|_{\Psi_1} \le 2\|X\|_{\Psi_1}$) \cite{vershynin2010}, we have for $k\in[K]$, 
 \begin{equation}\label{eq: first_basis0}
  \mathbb{P}\left(\Big|\frac{n_k}{N}-w_k \Big|\ge \frac{w_k}{2}\right) \le e\exp(-C_0 N),
 \end{equation}
 where $C_0 > 0$ is a constant depending on $w_k$, $k\in[K]$. Then with probability at least $1-Ke\exp(-C_0 N)$, we have $\frac{n_k}{N} \ge \frac{w_k}{2}$  for all $k\in[K]$. For brevity, we only consider this case and replace $n_k$ with $N$ in the following inequalities. In addition, for any $0<\epsilon \le \frac{w_{\min}}{2} <1$,  
\begin{equation}\label{eq: first_basis}
 \mathbb{P}\left(\left|\frac{n_k}{N}-w_k \right|\ge \epsilon\right)\le e\exp(-\tau_k N\epsilon^2),
\end{equation}
where $\tau_k>0$ is a constant depending on $w_k$.

By Lemma~\ref{lem:gau_exp}, the square of each entry of a random vector generated from a spherical Gaussian distribution has a sub-Exponential distribution. In addition, for random vector $\bv$ generated from the $k$-th component of the spherical GMM, we have $\mathbb{E} [\bv(f)^2] = \bu_k (f)^2+\sigma_k^2$ for any $f\in[F]$. By Lemma \ref{lem:large_dev}, for any $k \in [K]$,
\begin{align}
  \mathbb{P}\left(\bigg|\frac{1}{n_k}\sum_{n \in \mathscr{C}_k} \|\bv_n\|_2^2 -\left(\|\bu_k\|_2^2+F\sigma_k^2\right)\bigg|\ge \epsilon\right) &\le \sum_{f=1}^{F}\!  \mathbb{P}\left(\bigg|\frac{1}{n_k}{\sum_{n \in \mathscr{C}_k} \bv_n(f)^2}\! -\! \left(\bu_k(f)^2\! +\! \sigma_k^2\right)\bigg|\! \ge\!  \frac{\epsilon}{F}\right)\\
   &\le  2F\exp\left(-\xi_k \frac{N\epsilon^2}{F^2} \right),
\end{align}
where $\xi_k>0$ is a constant depending on $w_k, \sigma_k^2, \bu_k$. 
Similarly, by Lemma~\ref{lem:large_dev_Gaussian}, there exists $\zeta_k>0$ depending on $w_k$, $\sigma_k^2$ and $\bu_k$, such that 
\begin{align}
 \mathbb{P}\left(\left|\|\bc_k\|_2^2-\|\bu_k\|_2^2\right|\ge \epsilon\right)  &\le  \sum_{f=1}^{F} \mathbb{P}\left(\left|\bc_k(f)^2-\bu_k(f)^2\right|\ge \frac{\epsilon}{F}\right)  \\
 &\le  2e F\exp\left(-\zeta_k \frac{N \epsilon^2}{F^2} \right). \label{eq: trick}
\end{align}
The final bound in~\eqref{eq: trick} holds because for any $f$, if $\bu_k(f)=0$, we have  
\begin{align}\label{eq: trick1}
 \mathbb{P}\left(\left|\bc_k(f)^2-\bu_k(f)^2\right|\ge \frac{\epsilon}{F}\right)  &= \mathbb{P}\left(\left|\bc_k(f)\right|\ge \sqrt{\frac{\epsilon}{F}} \right) \\
 &\le e\exp\left(-\zeta_k \frac{N \epsilon}{F} \right) \le e\exp\left(-\zeta_k \frac{N \epsilon^2}{F^2} \right).
\end{align}
On the other hand, if $\bu_k(f)\ne 0$, note that $\left|\bc_k(f)^2-\bu_k(f)^2\right|\ge \frac{\epsilon}{F}$ implies either $\left|\bc_k(f)-\bu_k(f)\right|\ge \frac{\epsilon}{3|\bu_k(f)|F}$ or $\left|\bc_k(f)+\bu_k(f)\right|\ge 3|\bu_k(f)|$. We have
\begin{align}\label{eq: trick2}
\mathbb{P}\left(\left|\bc_k(f)^2-\bu_k(f)^2\right|\ge \frac{\epsilon}{F}\right) &\le \mathbb{P}\left(\left|\bc_k(f)-\bu_k(f)\right|\ge \frac{\epsilon}{3|\bu_k(f)|F}\right) + \mathbb{P}\left(\left|\bc_k(f)+\bu_k(f)\right|\ge 3|\bu_k(f)| \right) \\
&\le 2e\exp\left(-\zeta_k \frac{N \epsilon^2}{F^2} \right).
\end{align}
Now let $d_k:=\frac{1}{n_k}\sum_{n \in \mathscr{C}_k} \|\bv_n\|_2^2-\|\bc_k\|_2^2$ and $L_k:=\|\bu_k\|_2^2+F\sigma_k^2$. Then, there exists a constant $C_1>0$ depending on $\{(w_k,\sigma_k^2,\bu_k)\}_{k\in [K]}$ such that 
\begin{align} 
 &\mathbb{P}\left(\left|\frac{1}{N}\mathcal{D}(\bV,\mathscr{C})-F\bar{\sigma}^2\right| \ge \frac{\epsilon}{2} \right) \le  \sum_{k=1}^{K} \mathbb{P}\left(\left|\frac{n_k}{N} d_k - w_k F \sigma_k^2\right| \ge \frac{\epsilon}{2K} \right)\label{eq: origin_parta}\\
 & \le  \sum_{k=1}^{K} \mathbb{P}\left(\left|\frac{n_k}{N}-w_k\right|F\sigma_k^2 \ge \frac{\epsilon}{4K}\right) + \sum_{k=1}^{K} \mathbb{P}\left(\left|d_k - F\sigma_k^2\right|\frac{n_k}{N} \ge \frac{\epsilon}{4K}\right) \\
 & \le \sum_{k=1}^{K} \mathbb{P}\left(\left|\frac{n_k}{N}-w_k\right| \ge \frac{\epsilon}{4KF\sigma_k^2}\right) + \sum_{k=1}^{K} \mathbb{P } \left(\frac{n_k}{N} \ge 2w_k \right) \nn \\
 & \quad + \sum_{k=1}^{K} \mathbb{P} \left(\bigg|\frac{1}{n_k}{\sum_{n \in \mathscr{C}_k} \|\bv_n\|_2^2}-L_k\bigg|\ge \frac{\epsilon}{16w_k K}\right) + \sum_{k=1}^{K} \mathbb{P}\left(\left|\|\bc_k\|_2^2-\|\bu_k\|_2^2\right|\ge \frac{\epsilon}{16w_k K}\right) \label{eq: origin_partb}\\
 &\le  2K\left((e+2)F+e\right)\exp\left(-C_1 \frac{N \epsilon^2}{F^2 K^2} \right), \label{eq: origin_part1}
\end{align}
where \eqref{eq: origin_parta} is a consequence of~\eqref{eq: distortion_calc}. This proves \eqref{eqn:calD}.

 Now we estimate the positive eigenvalues of $\bS:=\bZ^T \bZ$, where $\bZ$ is the centralized data matrix of $\bV$. Equivalently, we may consider the eigenvalues of $\bar{\mathbf{\Sigma}}_N:=\frac{1}{N}\bZ \bZ^T=\frac{1}{N}\sum_{n=1}^{N} (\bv_n-\bar{\bv})(\bv_n-\bar{\bv})^T$, where $\bar{\bv}=\frac{1}{N}\sum_{n}\bv_n$. The expectation of centralized covariance matrix for the spherical GMM is 
$
 \sum_{k=1}^{K} w_{k} (\bu_{k}-\bar{\bu})(\bu_{k}-\bar{\bu})^{T}+\bar{\sigma}^2\bI=\bar{\mathbf{\Sigma}}.
$
For any $f\in[F]$,
\begin{align}
  \mathbb{P} \left(|\bar{\bv}(f)-\bar{\bu}(f)| \ge \epsilon\right) &\le  \sum_{k=1}^{K} \mathbb{P} \left(\bigg|\frac{n_k}{N} \cdot \frac{\sum_{n\in \mathscr{C}_k}\bv_n(f)}{n_k}-w_k\bu_k(f)\bigg| \ge \frac{\epsilon}{K}\right) \\
 & \le  3Ke \exp\left(-C_2 \frac{N \epsilon^2}{K^2} \right), \label{eq: mean_vec}
\end{align}
and
\begin{align}
\mathbb{P}\left(\|\bar{\bv}\bar{\bv}^T-\bar{\bu}\bar{\bu}^T\|_2 \ge \frac{\epsilon}{2} \right)& \le \mathbb{P}\left(\|(\bar{\bv}-\bar{\bu})(\bar{\bv}-\bar{\bu})^T\|_2 \ge \frac{\epsilon}{6}\right) + \mathbb{P}\left(\|(\bar{\bv}-\bar{\bu})\bar{\bu}^T\|_2 \ge \frac{\epsilon}{6}\right)  + \mathbb{P}\left(\|\bar{\bu}(\bar{\bv}-\bar{\bu})^T\|_2 \ge \frac{\epsilon}{6}\right) \\
 & \le \mathbb{P}\left(\|\bar{\bv}-\bar{\bu}\|_2^2 \ge \frac{\epsilon}{6}\right) + 2\mathbb{P}\left(\|\bar{\bv}-\bar{\bu}\|_2 \ge \frac{\epsilon}{6\|\bar{\bu}\|_2}\right). \label{eqn:three_terms}
\end{align}
Hence, similarly to \eqref{eq: trick}, and by \eqref{eq: mean_vec}, we obtain
\begin{align}
 \mathbb{P}\left(\|\bar{\bv}\bar{\bv}^T-\bar{\bu}\bar{\bu}^T\|_2 \ge \frac{\epsilon}{2} \right) \le  9FKe \exp\left(-C_3 \frac{N \epsilon^2}{F^2 K^2}\right),
\end{align}
where $C_3>0$ is a constant depending on $\{ (w_k,\bu_k ,\sigma_k^2)\}_{ k\in [K]} $. Note that 
\begin{equation}
 \bfSigma_N -\bfSigma  = \sum_{k=1}^K \frac{n_k}{N} \sum_{n \in \scrC_k } \frac{\bv_n \bv_n^T}{n_k} - \sum_{k=1}^K w_k (\bu_k \bu_k^T +\sigma_k^2 \bI).
\end{equation}
Then similar to \eqref{eq: origin_partb} and by Lemma~\ref{lem:cov_est}, we have that for any $t\ge 1$, if 
$N\ge C_4 F^3 K^2 t^2/\epsilon^2$, where $C_4>0$ is a constant depending on $\{ (w_k,\bu_k ,\sigma_k^2)\}_{ k\in [K]} $,  
\begin{align}
   \mathbb{P}\left(\|\bar{\mathbf{\Sigma}}_N-\bar{\mathbf{\Sigma}}\|_2 \ge \frac{\epsilon}{2} \right) &= \mathbb{P}\left( \left\|\left(\mathbf{\Sigma}_N-\mathbf{\Sigma}\right)-\left(\bar{\bv}\bar{\bv}^T-\bar{\bu}\bar{\bu}^T\right) \right\|_2 \ge \frac{\epsilon}{2} \right)\\
 & \le \mathbb{P}\left(\|\mathbf{\Sigma}_N-\mathbf{\Sigma}\|_2 \ge \frac{\epsilon}{4}\right)  + \mathbb{P}\left(\|\bar{\bv}\bar{\bv}^T-\bar{\bu}\bar{\bu}^T\|_2 \ge \frac{\epsilon}{4}\right)\\
 & \le (2K+9FKe)\exp\left(-t^2 F\right). \label{eq: bd_2norm}
\end{align}
 This proves \eqref{eqn:SigmaN}.

Now, if $N\ge C_4 F^5 K^2 t^2/\epsilon^2$, we have
\begin{align}
 \mathbb{P}\left(\left|\frac{1}{N}\mathcal{D}^{*}(\bV)-(F-K+1)\bar{\sigma}^2\right| \ge \frac{\epsilon}{2} \right) &= \mathbb{P}\left(\bigg|\frac{1}{N}\sum_{k=K}^{F} \lambda_k(\bS)-(F-K+1)\bar{\sigma}^2 \bigg| \ge \frac{\epsilon}{2} \right) \\
 & = \mathbb{P}\left( \bigg|\sum_{k=K}^{F} \lambda_k(\bar{\mathbf{\Sigma}}_N)-(F-K+1)\bar{\sigma}^2\bigg| \ge \frac{\epsilon}{2} \right)\\
 & \le \sum_{k=K}^{F} \mathbb{P}\left(|\lambda_k(\bar{\mathbf{\Sigma}}_N)-\lambda_k(\bar{\mathbf{\Sigma}})| \ge \frac{\epsilon}{2(F-K+1)} \right) \\
 & \le (F\! -\! K\! +\! 1)\mathbb{P}\left(\|\bar{\mathbf{\Sigma}}_N\! -\! \bar{\mathbf{\Sigma}}\|_2\!  \ge\!  \frac{\epsilon}{2(F-K+1)} \right)\\
 & \le (F-K+1)(9FKe+2K)\exp(-t^2 F).\label{eq: origin_part2}
\end{align}
 This proves \eqref{eqn:D_star}. 

In addition, if $N\ge C_4 F^3 K^2 t^2/\epsilon^2$, similarly,
\begin{align}
 & \mathbb{P}\left(\left|\frac{1}{N}\lambda_{K-1}(\bS)-\left(\lambda_{K-1}(\bar{\mathbf{\Sigma}}_0)+\bar{\sigma}^2\right)\right| \ge \frac{\epsilon}{2} \right)  \le (9FKe+2K)\exp\left(-t^2 F\right),\label{eq: origin_part4}
 \end{align}
 and
 \begin{align}
  \mathbb{P}\left(\left|\frac{1}{N}\lambda_{K}(\bS)-\bar{\sigma}^2\right| \ge \frac{\epsilon}{2} \right)&\le (9FKe+2K)\exp\left(-t^2 F\right).\label{eq: origin_part3}
\end{align}
 This proves \eqref{eqn:lambda_Kminus1} and \eqref{eqn:lambda_KS}. 
 
Finally, let $p_{\min}=\frac{1}{N}\min_{k}|\mathscr{C}_k|$ and $p_{\max}=\frac{1}{N}\max_{k}|\mathscr{C}_k|$. 
Then if $\epsilon>0$ satisfies~\eqref{eq: epsilon_cond}, when $N \ge C F^5 K^2 t^2/\epsilon^2$ with $C>0$ being a constant depending on $\{ (w_k,\sigma_k^2,\bu_k)\}_{k \in [K]}$,  with probability at least $1-36F^2K\exp\left(-t^2 F\right)$, 
\begin{align} \label{eqn:detailed_eq1}
   \delta  &:=  \frac{\mathcal{D}(\bV, \mathscr{C})-\mathcal{D}^*(\bV)}{\lambda_{K-1}(\bS)-\lambda_{K}(\bS)} \\
 & \le \frac{(F\bar{\sigma}^2 + \frac{\epsilon}{2}) - \left((F-K+1)\bar{\sigma}^2 -\frac{\epsilon}{2}\right)}{(\lambda_{\min}+\bar{\sigma}^2 -\frac{\epsilon}{2}) - (\bar{\sigma}^2 + \frac{\epsilon}{2})}  \\
 &= \frac{(K-1)\bar{\sigma}^2+\epsilon}{\lambda_{\min}-\epsilon}
\end{align}
Note that both $\tau(\mathord{\cdot})$ and $\zeta(\mathord{\cdot})$ are continuous and monotonically increasing functions on $[0,\frac{1}{2}(K-1)]$. Thus by the separability condition~\eqref{eq: epsilon_cond}, we have 
\begin{equation}\label{eq: delta_ineq}
 \delta \le \zeta(w_{\min}-\epsilon) \le \zeta (p_{\min}),
\end{equation}
where the second inequality is from~\eqref{eq: first_basis}.
Note that $\tau(\zeta(p_{\min})) = p_{\min}$. Then~\eqref{eq: delta_ineq} is equivalent to 
\begin{equation}\label{eq: tau_delta}
 \tau(\delta) \le p_{\min}.
\end{equation}
Therefore, by Corollary~\ref{coro: main}, if $N\ge CF^5 K^2 t^2/\epsilon^2$ (where $C>0$ is an appropriate constant depending on $\{  (w_k,\bu_k,\sigma_k^2)\}_{k\in[ K]}$), 
\begin{align}
 \mathrm{d}_{\mathrm{ME}}(\mathscr{C},\mathscr{C}^{\mathrm{opt}}) & \le \tau(\delta) p_{\max} \le \tau(\delta) (w_{\max}+\epsilon) \\
 & \le \tau\left(\frac{(K-1)\bar{\sigma}^2+\epsilon}{\lambda_{\min}-\epsilon}\right) (w_{\max}+\epsilon)\label{eqn:detailed_eqFinal}
\end{align} 
with probability at least $1-36KF^2\exp\left(-t^2 F\right)$. 
\end{proof}

\subsection{Proof of Corollary~\ref{coro: close_mean_sampleSpace}} \label{prf: coro: close_mean_sampleSpace}
\begin{proof}
Let $\bp_{\bar{\bc},k} = \bP_{-(K-1)}\bP_{-(K-1)}^T\bar{\bc}_k$ and $\bp_{\bar{\bu},k} = \bP_{-(K-1)}\bP_{-(K-1)}^T\bar{\bu}_k$.
Consider,
 \begin{align}
  \mathbb{P}\left(\bigg|\sum_{k=1}^K w_k \mathrm{d}(\bar{\bc}_k,\mathcal{W})^2 -\sum_{k=1}^K w_k \mathrm{d}(\bar{\bu}_k,\mathcal{W})^2 \bigg| \ge \frac{\epsilon}{2}\right) &\le \sum_{k=1}^K \mathbb{P}\left(\left|\|\bp_{\bar{\bc},k}\|_2^2 - \|\bp_{\bar{\bu},k}\|_2^2\right| \ge \frac{\epsilon}{2Kw_k}\right) \\
  & \le \sum_{k=1}^K \mathbb{P}\left(\left|\|\bar{\bc}_k\|_2^2 - \|\bar{\bu}_k\|_2^2\right| \ge \frac{\epsilon}{2Kw_k}\right) \label{eq: proj_less} \\
  & \le 8eK^2F\exp\left(-C_1 \frac{N\epsilon^2}{F^2K^4}\right) \label{eq: close_mean_sampleSpace1},
 \end{align}
where \eqref{eq: proj_less} is because that $\bP_{-(K-1)}\bP_{-(K-1)}^T$ is a projection matrix and $C_1 >0$ is a constant depending on parameters of the mixture model. On the other hand, 
\begin{equation}
 \sigma_{k,\mathcal{W}}^2(\bV) \le \lambda_{1}\bigg(\frac{1}{n_k}\sum_{n\in\scrC_k} (\bv_n-\bc_k)(\bv_n-\bc_k)^T \bigg),
\end{equation}
where $\lambda_{1}(\cdot)$ denotes the largest eigenvalue of a matrix. Consequently, for some fixed $\eta' \in (0,1)$, if $N\ge C_2 \frac{F^2K^4}{\epsilon^2}\log^5\left(\frac{FK^2}{\epsilon \eta'}\right)$  (where $C_2 >0$ is a constant depending on parameters of the mixture model), we have
\begin{align}
 & \mathbb{P}\left((K-1)\sum_{k=1}^K w_k \sigma_{k,\mathcal{W}}^2(\bV) - (K-1)\sum_{k=1}^K w_k \sigma_{k,\max}^2 \ge \frac{\epsilon}{2}\right) \le \sum_{k=1}^K \mathbb{P} \left( \sigma_{k,\mathcal{W}}^2(\bV) - \sigma_{k,\max}^2 \ge \frac{\epsilon}{2K^2 w_k}\right) \\
 & \le  \sum_{k=1}^K\! \mathbb{P} \left( \bigg\|\frac{1}{n_k}\sum_{n\in\scrC_k} (\bv_n\! -\! \bc_k)(\bv_n\! -\! \bc_k)^T \! -\!  \mathbf{\Sigma}_k\bigg\|_2\!  \ge \! \frac{\epsilon}{2K^2 w_k}\right) \label{eq: closemean_smalleq1}\\
 & \le \sum_{k=1}^K \! \mathbb{P} \left( \bigg\|\frac{1}{n_k}\sum_{n\in\scrC_k} (\bv_n\! -\! \bu_k)(\bv_n\! -\! \bu_k)^T \! -\!  \mathbf{\Sigma}_k  \bigg\|_2\!  \ge\!  \frac{\epsilon}{4K^2 w_k}\right) + \sum_{k=1}^K \mathbb{P} \left(\|\bc_k - \bu_k\|_2^2 \ge \frac{\epsilon}{4K^2 w_k}\right)\label{eq: close_mean_sampleSpace_0} \\
&\le 2K\eta' \label{eq: close_mean_sampleSpace2},
\end{align}
where \eqref{eq: closemean_smalleq1} is due to  Lemma~\ref{lem: svd_pert} and \eqref{eq: close_mean_sampleSpace_0} is due to the fact that
 \begin{align}
& \frac{1}{n_k}\sum_{n\in\scrC_k} (\bv_n-\bc_k)(\bv_n-\bc_k)^T= \left[\frac{1}{n_k}\sum_{n\in\scrC_k} (\bv_n-\bu_k)(\bv_n-\bu_k)^T\right] - (\bu_k - \bc_k)  (\bu_k - \bc_k)^T.                                                                                                             
\end{align}
Then with probability at least $1-(8e+2)K\eta'$,
\begin{equation}
 \sum_{k=1}^K w_k \mathrm{d}(\bu_k - \bar{\bu},\mathcal{W})^2 \le\bigg( (K-1)\sum_{k=1}^K w_k \sigma^2_{k,\max} \bigg) + \epsilon.
\end{equation}
Now, we define $\eta := (8e+2)K\eta'$.  Then, we obtain \eqref{eq: close_mean_sampleSpace} with the sample complexity result  as in the statement of Corollary~\ref{coro: close_mean_sampleSpace}.
\end{proof}

\subsection{Proof of Theorem~\ref{thm: original_log}} \label{prf: thm: original_log}
\begin{proof}
 We have that the two inequalities concerning $w_k$ in \eqref{eq: first_basis0} and \eqref{eq: first_basis} still hold. In addition, for any $k\in [K]$, if $N \ge C_1\frac{F^2}{\epsilon^2}\log^5\left(\frac{F}{\epsilon \eta}\right)$ with $C_1>0$ being sufficiently large,
 \begin{align}
  \mathbb{P}\left(\bigg|\frac{1}{n_k}{\sum_{n\in\mathcal{I}_k} \|\bv_n\|_2^2} - \left(\|\bu_k\|_2^2+\mathrm{tr}(\mathbf{\Sigma}_k) \right)\bigg| \ge \epsilon\right) &\le \sum_{f=1}^F \mathbb{P}\left(\bigg|\frac{1}{n_k}{\sum_{n\in\mathcal{I}_k} v_n(f)^2} - \left(u_k(f)^2+\mathbf{\Sigma}_k(f,f) \right)\bigg| \ge \frac{\epsilon}{F}\right)   \\
  &\le F\eta.\label{eq: original_log_squaresum}
 \end{align}
By Lemma~\ref{lem:large_dev}, similarly, we have for a sufficiently small $c>0$, 
\begin{align}
 \mathbb{P}\left(\big|\|\bc_k\|_2^2-\|\bu_k\|_2^2\big|\ge \epsilon \right) \le 4F\exp\left(-c \frac{N \epsilon^2}{F^2} \right).
\end{align}
Therefore, by $\sum_{k=1}^K w_k \mathrm{tr}(\mathbf{\Sigma}_k) \le F\bar{\sigma}^2_{\max}$, similar to that for spherical GMMs, if $N \ge C_1 \frac{F^2 K^2}{\epsilon^2}\log^5\left(\frac{F^2K^2}{\epsilon \eta}\right)$, 
\begin{align}
 & \mathbb{P}\left(\frac{1}{N}\mathcal{D}(\bV,\mathscr{I})-F\bar{\sigma}^2_{\max} \ge \frac{\epsilon}{2} \right) \le \eta. \label{eq: original_log_num}
\end{align}
Similarly, we have 
\begin{align}
 & \mathbb{P} \left(|\bar{\bv}(f)-\bar{\bu}(f)| \ge \epsilon\right) \le (4+e)K\exp\left(-c \frac{N \epsilon^2}{K^2} \right).
\end{align}
We also have that 
\begin{align}
 &\mathbb{P}\left(\|\bar{\mathbf{\Sigma}}_N-\bar{\mathbf{\Sigma}}\|_2 \ge \frac{\epsilon}{2} \right)  \le \mathbb{P}\left(\|\mathbf{\Sigma}_N-\mathbf{\Sigma}\|_2 \ge \frac{\epsilon}{4}\right) + \mathbb{P}\left(\|\bar{\bv}\bar{\bv}^T-\bar{\bu}\bar{\bu}^T\|_2 \ge \frac{\epsilon}{4}\right)\\
 & \le \sum_{k=1}^K \bbP \left(\bigg\|\frac{n_k}{N} \frac{\sum_{n\in \scrC_k} \bv_n \bv_n^T}{n_k} - w_k (\bu_k \bu_k^T + \bfSigma_k)\bigg\|_2 \ge \frac{\epsilon}{4K}\right) + \mathbb{P}\left(\|\bar{\bv}\bar{\bv}^T-\bar{\bu}\bar{\bu}^T\|_2 \ge \frac{\epsilon}{4}\right) \\
  & \le \sum_{k=1}^K \bbP \left(\bigg\|\left(\frac{n_k}{N}-w_k\right)(\bu_k \bu_k^T + \bfSigma_k) \bigg\|_2 \ge \frac{\epsilon}{8K}\right) + \sum_{k=1}^K \bbP \left(\frac{n_k}{N} \ge 2w_k\right) + \mathbb{P}\left(\|\bar{\bv}\bar{\bv}^T-\bar{\bu}\bar{\bu}^T\|_2 \!\ge\! \frac{\epsilon}{4}\right)\nonumber\\ 
  &  \quad+ \sum_{k=1}^K \bbP\left(\bigg\|\frac{\sum_{n\in \scrC_k} \bv_n \bv_n^T}{n_k}\!-\!(\bu_k \bu_k^T \!+\! \bfSigma_k)\bigg\|_2 \!\ge\! \frac{\epsilon}{16Kw_k}\right).
\end{align}
Note that the following bound for $\bar{\mathbf{\Sigma}}_N-\bar{\mathbf{\Sigma}}$ is slightly different from~\eqref{eq: bd_2norm} because Lemma~\ref{lem: logconcave_cov} requires that the log-concave distribution has {\em zero mean}. 
Furthermore, recall that $\bc_k := \frac{1}{|\scrC_k|}\sum_{n\in \scrC_k}\bv_n$, we have 
\begin{align}
 & \bbP\left(\bigg\|\frac{1}{n_k}{\sum_{n\in \scrC_k} \bv_n \bv_n^T}-(\bu_k \bu_k^T + \bfSigma_k)\bigg\|_2 \ge \frac{\epsilon}{16Kw_k}\right)\nn\\
 &\le \bbP\left(\bigg\|\frac{1}{n_k}{\sum_{n\in \scrC_k} (\bv_n-\bu_k) (\bv_n-\bu_k)^T}-\bfSigma_k \bigg\|_2 \ge \frac{\epsilon}{32Kw_k}\right) + \bbP \left(\|\bc_k \bu_k^T + \bu_k \bc_k^T - 2 \bu_k \bu_k^T\|_2 \ge \frac{\epsilon}{32Kw_k}\right) \label{eqn:ck} \\
 & \le \bbP\left(\bigg\|\frac{1}{n_k}{\sum_{n\in \scrC_k} (\bv_n-\bu_k) (\bv_n-\bu_k)^T}-\bfSigma_k \bigg\|_2 \ge \frac{\epsilon}{32Kw_k}\right) + 2 \bbP \left(\|\bc_k -\bu_k\|_2 \ge \frac{\epsilon}{64Kw_k\|\bu_k\|_2}\right).
\end{align}
Therefore, we have that if $N\ge C_1\frac{F^2 K^2}{\epsilon^2}\log^5\left(\frac{FK^2}{\epsilon \eta}\right)$, with probability at least $1-\eta$,
\begin{equation}
 \|\bar{\mathbf{\Sigma}}_N-\bar{\mathbf{\Sigma}}\|_2 < \frac{\epsilon}{2}.
\end{equation}
Now, note that by Lemma~\ref{lem: bd_eigen_sum}, for $k \le F <N$, 
\begin{align}
\lambda_k\left(\bar{\mathbf{\Sigma}}\right)&=\lambda_k\left(\bar{\mathbf{\Sigma}}_0 + \sum_{k=1}^K w_k\mathbf{\Sigma}_k\right) \\
& \ge \lambda_k(\bar{\mathbf{\Sigma}}_0) + \lambda_F\left(\sum_{k=1}^K w_k\mathbf{\Sigma}_k \right) \\
& \ge \lambda_k(\bar{\mathbf{\Sigma}}_0) + \sum_{k=1}^K w_k\lambda_F(\mathbf{\Sigma}_k) \\
& = \lambda_k(\bar{\mathbf{\Sigma}}_0) + \bar{\sigma}^2_{\min}.
\end{align}
 Therefore, if $N\ge C_1\frac{F^4 K^2}{\epsilon^2}\log^5\left(\frac{F^2K^2}{\epsilon \eta}\right)$, we have
\begin{align}
 \mathbb{P}\left(\frac{1}{N}\mathcal{D}^{*}(\bV)-(F-K+1)\bar{\sigma}^2_{\min} \le -\frac{\epsilon}{2} \right) &= \mathbb{P}\left(\frac{1}{N}\sum_{k=K}^{F} \lambda_k(\bS)-(F-K+1)\bar{\sigma}^2_{\min} \le -\frac{\epsilon}{2} \right) \\
 & \le \sum_{k=K}^{F} \mathbb{P}\left(|\lambda_k(\bar{\mathbf{\Sigma}}_N)-\lambda_k(\bar{\mathbf{\Sigma}})| \ge \frac{\epsilon}{2(F-K+1)} \right) \\
 & \le 2(F-K+1)\eta.
\end{align}
Or more concisely, if $N\ge C_1\frac{F^4 K^2}{\epsilon^2}\log^5\left(\frac{F^3K^2}{\epsilon \eta}\right)$,
\begin{equation}
 \mathbb{P}\left(\frac{1}{N}\mathcal{D}^{*}(\bV)-(F-K+1)\bar{\sigma}^2_{\min} \le -\frac{\epsilon}{2} \right) \le \eta.
\end{equation}
Similarly, by the inequalities $\lambda_{K-1}(\bar{\mathbf{\Sigma}}) \ge \lambda_{K-1}(\bar{\mathbf{\Sigma}}_0)+\bar{\sigma}^2_{\min}$ and $\lambda_{K}(\bar{\mathbf{\Sigma}}) \le \bar{\sigma}^2_{\max}$,   if $N\ge C_1\frac{F^2 K^2}{\epsilon^2}\log^5\left(\frac{FK^2}{\epsilon \eta}\right)$,
\begin{align}
 \mathbb{P}\left(\frac{1}{N}\lambda_{K-1}(\bS)-\left(\lambda_{\min} +\bar{\sigma}^2_{\min}\right) \le -\frac{\epsilon}{2} \right)  & \le \eta,\\
  \mathbb{P}\left(\frac{1}{N}\lambda_{K}(\bS)-\bar{\sigma}^2_{\max} \ge \frac{\epsilon}{2} \right)&\le \eta.
\end{align}
Combining these results with Corollary~\ref{coro: main}, we similarly obtain the conclusion we desire.
\end{proof}

\subsection{Proof of Theorem~\ref{thm: afterPCA_log}} \label{prf: thm: afterPCA_log}
\begin{proof}
 We use the same notations as those in the proof of Theorem~\ref{thm: afterPCA} and in the statement of Theorem~\ref{thm: afterPCA_log}.  Since $\bQ_{K-1} \in \mathbb{R}^{F\times (K-1)}$ has orthonormal columns, 
\begin{equation}
 \mathrm{tr}(\bQ_{K-1}^T\bfSigma_k\bQ_{K-1}) \le \sum_{j=1}^{K-1} \lambda_j(\bfSigma_k) \le (K-1)\sigma^2_{k,\max}
\end{equation}
for all $k \in [K]$. 
Therefore, similar to that for the case for the original dataset (cf.~the inequality in~\eqref{eq: original_log_num}), if $N\ge C_1 \frac{K^4}{\epsilon^2}\log^5\left(\frac{K^4}{\epsilon\eta}\right)$ with $C_1>0$ being  sufficiently large,
 \begin{equation}
  \mathbb{P}\left(\frac{1}{N}\mathcal{D}(\hat{\bV},\mathcal{I})-(K-1)\bar{\sigma}_{\max}^2 \ge \frac{\epsilon}{2}\right) \le \eta.
 \end{equation}
In addition, we have $\lambda_{K-1}(\bQ_{K-1}^T \bfSigma_k \bQ_{K-1})  \ge \lambda_F(\bfSigma_k)=\sigma_{k,\min}^2$.\footnote{Indeed, assume, to the contrary, that $\lambda_{K-1}(\bQ_{K-1}^T \bfSigma_k \bQ_{K-1})  < \sigma_{k,\min}^2$. Then there is a $\lambda < \sigma_{k,\min}^2$ and a corresponding unit vector $\bx \in \mathbb{R}^{K-1}$, such that $\bQ_{K-1}^T \bfSigma_k \bQ_{K-1} \bx = \lambda \bx$. Thus, $\sigma_{k,\min}^2 \|\bx\|_2^2=\sigma_{k,\min}^2 \|\bQ_{K-1} \bx\|_2^2 \le \bx^T\bQ_{K-1}^T \bfSigma_k \bQ_{K-1} \bx = \lambda \|\bx\|_2^2 < \sigma_{k,\min}^2 \|\bx\|_2^2$, which is a contradiction.} Similarly, we have $\lambda_1(\bQ_{K-1}^T \bfSigma_k \bQ_{K-1})  \le \lambda_1(\bfSigma_k)=\sigma_{k,\max}^2$. Thus if $N\ge C_1 \frac{K^4}{\epsilon^2}\log^5\left(\frac{K^3}{\epsilon\eta}\right)$,
\begin{align}
 \mathbb{P}\left(\frac{1}{N}\lambda_{K-1}(\hat{\bS}) - (\lambda_{\min}+\bar{\sigma}_{\min}^2) \le -\frac{\epsilon}{2}\right) &\le \eta. 
\end{align}
 Recall that we write $\bR:=\bQ_{K-1}\bQ_{K-1}^T-\bP_{K-1}\bP_{K-1}^T$. Let $r:=\sqrt{\frac{2(K-1)\bar{\sigma}_{\max}^2}{\lambda_{\min}}}$. Combining Corollary~\ref{coro: close_mean_sampleSpace} and Lemma~\ref{lem: main_me}, we have that if $N\ge C\frac{F^2K^4}{\epsilon^2}\log^5\left(\frac{FK^3}{\epsilon\eta}\right)$, with probability at least $1-\eta$,
\begin{equation}
 \|\bR\|_\rmF \le r + C_2 \epsilon, \label{eq: afterPCA_log_main}
\end{equation}
where $C_2>0$ is sufficiently large.
In addition, using the inequalities 
\begin{align}
 \frac{1}{N}\big|\mathcal{D}(\hat{\bV},\mathscr{I})-\mathcal{D}(\tilde{\bV},\mathscr{I})\big| & \le \frac{ 1+K}{N}\|\bV\|_\mathrm{F}^2 \|\bR\|_\mathrm{F} \\
 \frac{1}{N}\left\|\hat{\bS}-\tilde{\bS}\right\|_2 & \le \|\bR\|_\mathrm{F} \|\bZ\|_{\rmF}^2, 
\end{align}
we deduce that 
\begin{align}
 &\mathbb{P}\left(\frac{1}{N}\big|\mathcal{D}(\hat{\bV},\mathscr{I})-\mathcal{D}(\tilde{\bV},\mathscr{I})\big| - a \ge \frac{\epsilon}{2}\right) \le \bbP \left(\frac{1}{N}\|\bV\|_\mathrm{F}^2 \|\bR\|_\rmF - \bar{L} r \ge \frac{\epsilon}{2(1+K)}\right) \label{eqn:split_3}  \\
 & \le \mathbb{P}\left(\Big(\frac{1}{N}\|\bV\|_\mathrm{F}^2-\bar{L}\Big)r \ge \frac{\epsilon}{4(1+K)}\right)+ \mathbb{P}\left(\frac{1}{N}\|\bV\|_\mathrm{F}^2 \ge \bar{L}+1\right) + \mathbb{P}\left(\|\bR\|_\mathrm{F}-r \ge \frac{\epsilon}{4(1+K)(\bar{L}+1)}\right). \label{eqn:split3} 
\end{align}
Therefore, by \eqref{eq: original_log_squaresum} and \eqref{eq: afterPCA_log_main}, we obtain that if $N \ge C_1 \frac{F^2 K^6}{\epsilon^2} \log^5\left(\frac{F^2K^4}{\epsilon\eta}\right)$,
\begin{equation}
 \mathbb{P}\left(\frac{1}{N}\big|\mathcal{D}(\hat{\bV},\mathscr{I})-\mathcal{D}(\tilde{\bV},\mathscr{I})\big| - a \ge \frac{\epsilon}{2}\right) \le \eta.
\end{equation}
Similarly, when $N \ge C_1 \frac{F^2 K^6}{\epsilon^2} \log^5\left(\frac{F^2K^4}{\epsilon\eta}\right)$,
\begin{equation}
 \mathbb{P}\left(\frac{1}{N}\left\|\hat{\bS}-\tilde{\bS}\right\|_2 -b  \ge \frac{\epsilon}{2}\right) \le \eta.
\end{equation}
Combining these results with Corollary~\ref{coro: main}, we obtain the desired conclusion.
\end{proof}

\bibliographystyle{IEEEtran}
\bibliography{pcaClustering_ref}

\end{document}